\title{
Outcome Indistinguishability
}
\author{Cynthia Dwork\thanks{This work supported in part by the Radcliffe Institute for Advanced Study at Harvard, Microsoft Research, the Simons Foundation Collaboration on the Theory of Algorithmic Fairness, the Sloan Foundation Grant 2020-13941, and NSF CCF-1763665.}
\\Harvard University\\\texttt{dwork@seas.harvard.edu}
\and
Michael P.\ Kim\thanks{Supported by the Miller Institute for Basic Research in Science. Part of this work completed at Stanford University, supported by NSF Award IIS-1908774.}
\\UC Berkeley\\\texttt{mpkim@berkeley.edu}
\and
Omer Reingold\thanks{Research supported by the Simons Foundation Collaboration on the Theory of Algorithmic Fairness, the Sloan Foundation Grant 2020-13941, Microsoft Research, and by NSF Award CCF-1763311.}
\\Stanford Unviersity\\\texttt{reingold@stanford.edu}
\and
Guy N.\ Rothblum\thanks{This project has received funding from the European Research Council (ERC) under the European Union’s Horizon 2020 research and innovation programme (grant agreement No. 819702), from the Israel Science Foundation (grant number 5219/17), from  the U.S.-Israel Binational Science Foundation  (grant 2018102), and from the Simons Foundation Collaboration on the Theory of Algorithmic Fairness. Part of this work was done while the author was visiting Microsoft Research.}

\\Weizmann Institute of Science\\\texttt{rothblum@alum.mit.edu}
\and
Gal Yona\thanks{This project has received funding from the European Research Council (ERC) under the European Union’s Horizon 2020 research and innovation programme (grant agreement No. 819702), from the Israel Science Foundation (grant number 5219/17) and from the Simons Foundation Collaboration on the Theory of Algorithmic Fairness. This research was also partially supported by the Israeli Council for Higher Education (CHE) via the Weizmann Data Science Research Center and by a research grant from Madame Olga Klein–Astrachan.}
\\Weizmann Institute of Science\\\texttt{gal.yona@weizmann.ac.il}}
\date{
\vspace{-30pt}
}
\begin{document}
\maketitle

\begin{abstract}
    
\newcommand{\comment}[1]{{\color{blue} #1}}
\newcommand{\rcomment}[2]{#1}
\newcommand{\acomment}[2]{#2}

Prediction algorithms assign numbers to individuals that are popularly understood as individual ``probabilities''---what is the probability of 5-year survival after cancer diagnosis?---and which increasingly form the basis for life-altering decisions.
Drawing on an understanding of computational indistinguishability developed in complexity theory and cryptography, we introduce {\it \OI}.
Predictors that are Outcome Indistinguishable 
yield a generative model for outcomes that cannot be efficiently refuted on the basis of the real-life observations produced by \nature.

We investigate a hierarchy of \OI definitions, whose stringency increases with the degree to which distinguishers may access the predictor in question.
Our findings reveal that \OI behaves qualitatively differently than previously studied notions of indistinguishability.
First, we provide constructions at all levels of the hierarchy. 
Then, leveraging recently-developed machinery for proving average-case fine-grained hardness,
we obtain lower bounds on the complexity of the more stringent forms of \OI.
This hardness result provides the first scientific grounds for
the political argument that, when inspecting algorithmic risk prediction instruments, auditors should be granted oracle access to the algorithm, not simply historical predictions.

 \end{abstract}

\pagenumbering{gobble}

\clearpage

\tableofcontents

\clearpage

\pagenumbering{arabic}

\section{Introduction}
\label{sec:intro}
Prediction algorithms ``score'' individuals, mapping them to numbers in $[0,1]$ that are popularly understood as ``probabilities'' or ``likelihoods:'' the probability of 5-year survival, the chance that the loan will be repaid on schedule, the likelihood, that the student will graduate within for years, or that it will rain tomorrow.  
Algorithmic risk predictions increasingly inform consequential decisions, but 
what can these numbers really mean? Five-year survival, four-year graduation, rain tomorrow, are not repeatable events.  
The question of ``individual probabilities'' has been studied for decades across many disciplines without clear resolution.\footnote{See the inspiring paper, and references therein, of Philip Dawid~\cite{Dawid}, discussing several notions of {\em individual risk} based on different philosophical understandings of probability ``including Classical,
Enumerative, Frequency, Formal, Metaphysical, Personal, Propensity, Chance and Logical conceptions of Probability'' and proposing a new approach to characterizing individual risk which, the author concludes, remains elusive.}

One interpretation relies on the coarseness of the representation of individuals to the prediction algorithm---the shape of a tumor's boundaries and the age of the patient; the student's grades, test scores, and a few bits about the family situation; the atmospheric pressure, humidity level, and winds---to partition individuals into a small number of ``types.''  
This leads to a natural interpretation of the predictions: \emph{amongst the individuals of this type, what fraction exhibit a positive outcome?}
In the context of modern data science, however, it is typical to make predictions based on a large number of expressive measurements for each individual---the patient's genome-wide risk factors; the borrower's online transactions and browsing data; the student's social media connections.
In this case, when each individual resolves to a unique set of covariates, the frequency-based interpretation fails.

Another view imagines the existence of a party---\nature---who selects, for each individual, a probability distribution over outcomes; then, the realized outcome is determined by a draw from this distribution.
Note that \nature may select the outcomes using complete determinism (i.e.,\ probabilities in $\set{0,1}$).
This view of the world gives rise to a statistical model with well-defined individual probabilities, but reasoning about these probabilities from observational data presents challenges.
Given only observations of outcomes, we cannot even determine whether \nature{} assigns integer or non-integer probabilities.  Perhaps \nature{} is deterministic, but we do not have enough information or computing resources to carry out the predictions ourselves.
Thus, even if we posit that outcomes are determined by individual probabilities, we cannot hope to recover the exact probabilities governing \nature , so this abstraction does not appear to provide an effective avenue for understanding the meanings of algorithmic risk scores.

\subsection{Predictions that Withstand Observational Falsifiability}

Given the philosophical uncertainty regarding the very existence of randomness, we explore the criteria for an ideal predictor.  We can view the outputs of
a prediction algorithm as defining a generative model for observational outcomes.
Ideally, the the outcomes from this generative model should ``look like'' the outcomes produced by \Nature .
To this end, we introduce and study a strong notion of faithfulness---\emph{\OI (OI)}.
A predictor satisfying \oi provides a generative model that cannot be efficiently refuted on the basis of the real-life observations produced by \nature.
In this sense, the probabilities defined by any OI \predictor provide a meaningful model of the ``probabilities'' assigned by \nature: even granted full access to the predictive model and historical outcomes from \nature, no analyst can invalidate the model's predictions.
Our study contributes a computational-theoretic perspective on the deeper discussion of what we should demand of prediction algorithms--a subject of intense study in the statistics community for over 30 years (see, {\it e.g.}, the forecasting work in~\cite{Dawid82,FosterVohra,Fudenberg,G3,sandroni2003reproducible})---and how they should be used.  For example, one of our results provides scientific teeth to the political argument that, if risk prediction instruments are to be used by the courts (as they often are in the United States), then at the very least oracle access to the algorithms should be granted for auditing purposes.

\OI presents a broad framework evaluating algorithmic risk predictions. 
This paper focuses on the fundamental setting of predicting a binary outcome, given an individual's covariates, a simple prediction setup that already highlights many of the challenges and subtleties that arise while defining and reasoning about OI.
Nothing precludes extending OI to reason about algorithms that make predictions about more general (\emph{e.g.},\ continuous) outcomes.

\paragraph{Basic notation.}
We assume that individuals are selected from some discrete domain $\X$, for example, the set of $d$-bit strings\footnote{Individuals can be arbitrarily complex; they are represented to the algorithm as elements of $\X$. Strictly speaking, distributions over $\X$ are induced distributions over the representations, and our results apply whether or not there are collisions.    
We do not assume that \nature 's view is restricted to the representation.}.  
We model \nature as a joint distribution, denoted $\Dnat$, over individuals and outcomes, where $\os_i \in \set{0,1}$ represents \nature's choice of outcome for individual $i \in X$.
We use $i \sim \DX$ to denote a sample from \nature's marginal distribution over individuals and denote by $\ps_i \in [0,1]$ the conditional probability that \nature assigns to the outcome $\os_i$, conditioned on $i$.
We emphasize, however, that \nature may choose $\ps_i \in \set{0,1}$ to be deterministic; our definitions and constructions are agnostic as to this point.

A {\em predictor} is a function $p:\X \to [0,1]$ that maps an individual $i \in \X$ to an estimate $\pt_i$ of the conditional probability of $\os_i = 1$.
For a predictor $\pt:\X \to [0,1]$, we denote by $(i,\ot_i) \sim \D(\pt)$ the  individual-outcome pair, where $i \sim \DX$ is sampled from \nature's distribution over individuals, and
then
the outcome $\ot_i \sim \Ber(\pt_i)$ is sampled
from the Bernoulli distribution with parameter $\pt_i$.

\paragraph{\OI.}
Imagine that \nature selects $\ps_i = 1$ for  half of the mass of $i\sim \DX$ and $\ps_i = 0$ for the remainder.
If the two sets of individuals are easy to identify then we can potentially recover a close approximation to $\ps$.
Suppose, however, that the sets are computationally indistinguishable, in the sense that given $i \sim \DX$, no efficient observer can guess if $\ps_i=1$ or $\ps_i=0$ with probability significantly better than $1/2$.
In this case, producing the estimates $\pt_i = 1/2$ \emph{for every individual} $i \in \X$ captures the best computationally feasible understanding of \nature:  given limited computational power, the outcomes produced by \nature may faithfully be modeled as a random.
In particular, if \nature were to change the outcome generation probabilities from $\ps$ to $\pt$ we, as computationally bounded observers, will not notice.
In other words, predictors satisfying OI give rise to models of \nature that cannot be falsified based only on observational data.

In the most basic form of the definition, a \predictor $\pt:\X \to [0,1]$ is Outcome Indistinguishable with respect to a family of distinguishers $\A$  if samples from \nature's distribution $\Dnatsample$ cannot be distinguished by $\A$ from samples from the \predictor's distribution $\Dptsample$, meaning that for each algorithm $A \in \A$, the probability that $A$ outputs $1$ is (nearly) the same on samples from each of the two distributions, $\Dnat$ and $\Dpt$.

\begin{definition*}[\OI]
Fix \nature's distribution $\Dnat$.
For a class of distinguishers\footnote{Like the predictors, distinguishers have access only to the finite {\em representations} of individuals as elements of $\X$.} $\A$ and $\eps > 0$,
a \predictor $\pt:\X \to [0,1]$ satisfies $(\A,\eps)$-\oi (OI) if for every $A \in \A$,
\begin{gather*}
\card{\Pr_{(i,\os_i) \sim \Dnat}\lr{A(i,\os_i; \pt) = 1}
- \Pr_{(i,\ot_i)\sim \Dpt}\lr{A(i,\ot_i; \pt) = 1}} \le \eps.
\end{gather*}
\end{definition*}
The definition of \OI can be extended in many ways, for example to settings where distinguishers receive multiple samples from each distribution, or when they have access to the program implementing $\pt$, and to the case of non-Boolean outcomes.

In the extreme, when we think of $\A$ as the set of all efficient computations, \oi sets a demanding standard for \predictors that model \nature.
Given an OI predictor $\pt$, even the most skeptical scientist---who, for example, does not believe that \nature can be captured by a simple computational model---cannot refute the model's predictions through observation alone.
This framing seems to give an elegant computational perspective on the scientific method, when consider $\pt$ as expressing a hypothesis that cannot be falsified through observational investigation.

\subsection{Our Contributions}
The most significant contributions of this work can be summarized as follows:
\begin{enumerate}[(1)]
    \item We define a practically-motivated four-level hierarchy of increasingly demanding notions of \emph{\OI}.
    The levels of the hierarchy arise by varying the degree to which the distinguishers may access the predictive model in question.  
\item We provide tight connections between the two lower levels of the hierarchy to {\em multi-accuracy} and {\em multi-calibration}, two notions defined and studied in~\cite{hkrr}.
    Establishing this connection immediately gives algorithmic constructions for these two levels.
    \item We describe a novel algorithm that
    constructs OI predictors directly.
This construction establishes an upper bound on the complexity of OI for the upper levels of the hierarchy (and, consequently, also allows us to recover the results of \cite{hkrr} through the OI framework).
    \item We show a {\em lower bound} for the upper levels of the hierarchy, demonstrating the tightness of our constructions. We prove that, under plausible complexity-theoretic assumptions, at the top two levels of the hierarchy the complexity of the predictors cannot be polynomial in the complexity of the distinguishers in $\A$ and in the distinguishing advantage $1/\eps$.
\end{enumerate}

Additionally, we revisit the
apparent
interchangeability of the terms ``test'' and ``distinguisher'' in the literature on pseudorandomness,
drawing a distinction
that is relevant to the  forecasting problem.
Our reults clarify the mathematical relationship between notions in the two literatures.

Next, we present a colloquial illustration of the different notions of the hierarchy.
While very natural, the notions within the hierarchy have not been fully considered in the literature on either forecasting or pseudorandomness.

\paragraph{The \OI Hierarchy.}
Imagine a medical board that wishes to audit the output of a program $\pt$ used to estimate the chances of five-year survival of patients under a given course of treatment.
We can view the medical board as a distinguisher $A \in \A$.
To perform the audit, the board receives historical files of patients and their five-year predicted ({\it i.e.}, drawn from $\Dpt$) or actual (drawn from $\Dnat$) outcomes.  The requirement is that these two cases be indistinguishable to the board.
\begin{enumerate}[(1)]
\item To start, the board is only given samples, and must distinguish \nature's samples $\Dnatsample$ from those sampled according to the predicted distribution $\Dptsample$.
The board gets no direct access to predictions $\pt_i$ of the program; we call this variant \emph{\one}.
\item Naturally, the board may ask to see the predictions $\pt_i$ for each sampled individual.
In this extension---\emph{\two}--- the board must distinguish samples of the form $(i,\os_i,\pt_i)$ and $(i,\ot_i,\pt_i)$, again for $\Dnatsample$ and $\Dptsample$.
\item \emph{\Three} allows the board to make queries to the program $\pt$ on arbitrary individuals, perhaps to examine how the algorithm behaves on related (but unsampled) patients.
\item Finally, in \emph{\four}, the board is allowed to examine not only the predictions from $\pt$ but also the actual code, {\it i.e.}, the full implementation details of the program computing $\pt$.
\end{enumerate}

On a different axis, we also consider multiple-sample variants of OI and show how these relate to the single-sample variants described above.  Multiple-sample OI is closer to the problem of online {\em forecasting} ({\it e.g.}, daily weather forecasting);  we explore connections between this variant of OI and the forecasting literature in Section~\ref{sec:related}.

\paragraph{The Lower Levels of the OI Hierarchy.}
We begin by examining the relationship between the different levels of the hierarchy.
We show that \one and \two are closely related to the notions of multi-accuracy and multi-calibration \cite{hkrr}, respectively, studied in the algorithmic fairness literature.
Very loosely, for a collection $\C$ of subpopulations of individuals, $(\C,\alpha)$-multi-calibration asks that a predictor $\pt$ be calibrated (up to $\alpha$ error\footnote{Defining approximate calibration is subtle; see a discussion in Section~\ref{sec:connections} with the formal definition of approximate multi-calibration.}) not just overall, but also when we restrict our attention to subpopulations $S \subseteq \X$ for every set $S\in\C$.
Here, calibration over $S$ means that if we restrict our attention to individuals $i\in S$ for which $\pt_i=v$, then the fraction individuals with positive outcomes (i.e., $i \in S$ such that $\os_i=1$) is roughly $v$.
We prove that \two with respect to a set of distinguishers $\A$ is ``equivalent" to $\C$-multi-calibration in the sense that each notion can enforce the other, for closely related classes $\C$ and~$\A$.
\begin{result}[Informal]
\label{result:mc-equiv}
For any class of distinguishers $\A$ and $\eps > 0$, there exists a (closely related) collection of subpopulations $\C_\A$ and $\alpha_\eps > 0$, such that $(\C_\A,\alpha_\eps)$-multi-calibration implies $(\A,\eps)$-\two.
Similarly, for any collection of subpopulations $\C$ and $\alpha > 0$, there exists a (closely related) class of distinguishers $\A_\C$ and $\eps_\alpha > 0$, such that $(\A_\C,\eps_\alpha)$-\two implies $\C$-multi-calibration.
\end{result}
Importantly, the relation between the class of distinguishers and collection of subpopulations preserves most natural measures of complexity; in other words, if we take $\A$ to be a class of efficient distinguishers, then evaluating set membership for the populations in $\C$ will be efficient (and vice versa).
\One is similarly equivalent to the weaker notion of multi-accuracy, which requires accurate expectations for each $S \in \C$, rather than calibration.

\paragraph{Feasibility and Constructions.}
We consider the question of whether efficient OI predictors always exist.  In particular, we ask,
\emph{Can we bound the complexity of OI predictors, independently of the complexity of \nature's distribution?}
The picture we uncover is subtle; we will see that \OI differs qualitatively from prior notions of indistinguishability.

First off, leveraging feasibility results for the fairness notions from \cite{hkrr}, we can obtain efficient predictors satisfying \one or \two, by reduction to \ma and \mc.
Informally, for each of these levels, we can obtain OI predictors whose complexity scales linearly in the complexity of $\A$ and inverse polynomially in the desired distinguishing advantage $\eps$.
The result is quite generic; for concreteness, we state the theorem using circuit size as the complexity measure.
\begin{result}
\label{result:one-two}
Let $\A$ be a class of distinguishers implemented by size-$s$ circuits.
For any $\Dnat$ and $\eps > 0$, there exists a predictor $\pt:\X \to [0,1]$ satisfying $(\A,\eps)$-\two (similarly, \one) implemented by a circuit of size $O(s/\eps^2)$.
\end{result}

Turning now to \three and \four predictors,
we obtain a general-purpose algorithm  for constructing OI predictors,
even when the distinguishers are allowed arbitrary access to the predictor in question.
This construction extends the learning algorithm for multi-calibration of \cite{hkrr} to the more general setting of OI.
When we allow such powerful distinguishers, the learned predictor $\pt$ is quantitatively less efficient than in the weaker notions of OI.
In this introduction, we state the bound informally, assuming the distinguishers are implemented by circuits with oracle gates (a formal treatment appears in Section~\ref{sec:beyond}).
As an example, if we let $\A$ be the set of oracle-circuits of some fixed polynomial size (in the dimension $d$ of individual's representations), and allow arbitrary oracle queries, then $\pt$ will be of size $d^{O(1/\eps^2)}$.
\begin{result}[Informal]
\label{result:three}
Let $\A$ be a class of oracle-circuit distinguishers implemented by size-$s$ circuits that make at most $q$ oracle calls to the predictor in question.
For any $\Dnat$ and $\eps > 0$, there exists a predictor $\pt:\X \to [0,1]$ satisfying $(\A,\eps)$-\three implemented by a (non-oracle) circuit of size $s\cdot q^{O(1/\eps^2)}$.
\end{result}

Intuitively, \four can implement any of the prior levels through simulation: given the code for $\pt$, the distinguishers can execute oracle calls (or calls to $\pt_i$) whenever needed.\footnote{There is some subtlety in making this intuition formal, related to encoding and decoding the description of the predictor, but the overhead is mild; see Section~\ref{sec:beyond}.}
At the extreme of efficient OI, we consider \four with respect to the class of polynomial-sized distinguishers.
Importantly, we allow the complexity of these distinguishers to grow as a (fixed) polynomial in both the dimension of individuals $d$ and the length of the description of the predictor $\pt$, which we denote by $n$.
For this most general version of OI, the complexity may scale doubly exponentially in $\poly(1/\eps)$; nevertheless, the bound is independent of the complexity of $\ps$. 
\begin{result}[Informal]
\label{result:four}
For some $d \in \N$, let $\X \subseteq \set{0,1}^d$ be represented by $d$-bit strings.
Suppose for some $k \in \N$, $\A$ is a class of distinguishers implemented by circuits of size $(d+n)^k$, on inputs $i \in \X$ and descriptions of predictors in $\set{0,1}^n$.
For any $\Dnat$ and $\eps > 0$, there exists a predictor $\pt:\X \to [0,1]$ satisfying $(\A,\eps)$-\four implemented by a circuit of size $d^{2^{O(1/\eps^2)}}$.
\end{result}

\paragraph{Hardness via Fine-Grained Complexity.}
We establish a connection between the fine-grained complexity of well-studied problems and the complexity of achieving \three. Under the assumption that the (randomized) complexity of counting $k$-cliques in $n$-vertex graphs is $n^{\Omega(k)}$, we demonstrate that the construction 
of Theorem \ref{result:three} is optimal up to polynomial factors. Specifically, we rule out (under this assumption) the possibility that the complexity of a \three predictor can be a fixed polynomial in the complexity of the distinguishers in $\A$ and in the distinguishing advantage $\eps$. This hardness result holds for constant distinguishing advantage $\eps$ and for an efficiently-sampleable distribution $\Dnat$. This hardness results are in stark contrast to the state of affairs for \two (see Theorem \ref{result:one-two}). Concretely, in the parameters of the upper bound, the result based on the hardness of clique-counting rules out any predictor $\pt$ satisfying \three of (uniform) size significantly smaller than $d^{\Omega(1/\epsilon)}$.  

\begin{result}[Informal]
\label{result:lb}

For $k \in \N$, assume there exist $\alpha > 0$ s.t. there is no $o(n^{\alpha \cdot k})$-time randomized algorithm for counting $k$-cliques. Then, there exist: $\X \subseteq \set{0,1}^{d^2}$, an efficiently-sampleable distribution $\Dnat$, and a class $\A$ of distinguishers that run in time $\tilde{O}(d^3)$ and make $\tilde{O}(d)$ queries, s.t. for $\eps = \frac{1}{100k}$, no predictor $\pt$ that runs in time $( d^{\alpha \cdot k} \cdot \log^{-\omega(1)}(d))$ can satisfy $(\A,\eps)$-\three.
\end{result}

See Corollary \ref{cor:clique-to-OI-hardness} for a formal statement. This lower bound is robust to the computational model: assuming that clique-counting requires $n^{\Omega(k)}$-sized circuits implies a similar lower bound on the circuit size of \three predictors. 
The complexity of clique counting has been widely studied and related to other problems in the fine-grained and parameterized complexity literatures, see the discussion in Section \ref{subsec:cliques-OI}. We note that, under the plausible assumption that the fine-grained complexity of known clique counting algorithms is tight, our construction shows that obtaining \three is as hard, up to sub-polynomial factors, as computing $\ps$. We emphasize that this is the case even though the running time of the distinguishers can be arbitrarily small compared to the running time of $\ps$.

\paragraph{Hardness via $\BPP \neq \PSPACE$.}
We also show that, under the (milder) assumption that $\BPP \neq \PSPACE$, there exists a polynomial collection of distinguishers and a distribution $\Dnat$, for which no polynomial-time predictor $\pt$ can be OI. The distinction from the fine-grained
result (beyond the difference in the assumptions) is that here $\Dnat$ is not efficiently sampleable, and the distinguishing advantage for which OI is hard is much smaller.

\begin{result}[Informal]
\label{result:lb-PSPACE}
Assume that $\BPP \neq \PSPACE$. Then, there exist: $\X \subseteq \set{0,1}^{d}$, a distribution $\Dnat$ (which can be sampled in $\exp(\poly(d))$ time), and a class $\A$ of $\poly(d)$ distinguishers that run in time $\poly(d)$, s.t. for $\eps = \frac{1}{\poly(d)}$, no predictor $\pt$ that runs in time $\poly(d)$ can satisfy $(\A,\eps)$-\three.
\end{result}

\paragraph{Discussion.}
We highlight a few possible interpretations and insights that stem from our technical results.
The ability to construct predictors that satisfy \oi can be viewed both positively and negatively.
On the one hand, the feasibility results demonstrate the possibility of learning generative models of observed phenomena that withstand very powerful scrutiny, even given the complete description of the model.
On the other hand, OI does not guarantee statistical closeness to \nature (it need not be the case that $\ps \approx \pt$). Thus, the  feasibility results demonstrate the ability to learn an {\em incorrect} model that cannot be detected by efficient inspection.

More generally, the computational perspective of OI underscores an inherent limitation in trying to recover the exact laws governing \nature from observational data alone.
We illustrate this perspective through a comparison to pseudorandomness.
Traditionally in pseudorandomness, our object of desire is \emph{random} (e.g.,\ a large string of random bits fed to a $\BPP$ algorithm), and we show that a simple \emph{deterministic} object suffices to ``fool'' efficient observers.
In \oi, our object of desire is a model of \nature, which may obey highly-complex \emph{deterministic} laws.
In this work, we show that a simple \emph{random} model of \nature---namely, $\Dpt$ for an OI predictor $\pt$---suffices to ``fool'' efficient observers.
In this sense, attempting to recover the ``true'' model of \nature based on real-world observations is futile:  no efficient analyst can falsify the outcomes of the random model defined by $\pt$, agnostic to the ``true'' laws of \nature.

The most surprising (and potentially-disturbing) aspect of our results may be the complexity of achieving  \three and \four.
In particular, for these levels, we show strong evidence that there exist $\ps$ and $\A$ that do not admit efficient OI \predictors $\pt$, \emph{even when $\A$ is a class of efficient distinguishers!}
That is, there are choices of \nature that cannot be modeled simply, even if all we care about is passing simple tests.
This stands in stark contrast to the existing literature on indistinguishability, where the complexity of the indistinguishable object is usually polynomial in the distinguishers' complexity and distinguishing advantage, regardless of the complexity of the object we are trying to imitate.

The increased distinguishing power of oracle access to the predictor in \three seems to have practical implications.
Currently, there are many conversations about the appropriate usage of algorithms when making high-stakes judgments about members of society, for instance in the context of the criminal justice system.
Much of the discussion revolves around the idea of \emph{auditing} the predictions, for accuracy and fairness.
The separation between \three and \two provides a rigorous foundation for the argument that auditors should at the very least have query access to the prediction algorithms they are auditing:  given a fixed computational bound, the auditors with oracle-access may perform significantly stronger tests than those who only receive sample access.

\subsection{Technical Overview}
\label{sec:overview}

Next, we give a technical overview of the main results.
Our goal is to convey the intuition for our findings, deferring the technical details to subsequent sections.

\paragraph{Relating OI and \mc.}
To build intuition for the equivalence, as described informally in Theorem~\ref{result:mc-equiv} and formally in Section~\ref{sec:connections}, we begin by describing the construction that establishes the lower level of the equivalence, between \ma and  \one (distinguishers that do not directly observe $\pt_i$). Informally, for a collection of subpopulations $\C$, \ma guarantees
that the expectations of $\ps_i$ and $\pt_i$ are approximately the same, even when conditioning on the event that $i \in S$ (simultaneously for every $S \in \C$). 

Given a subpopulation $S$, we define the \ma violation $\nabla_S(\pt)$ to be the absolute value of the above difference in conditional expectations. This can be viewed as a direct analogue of the distinguishing advantage $\Delta_A(\pt)$ (the absolute difference between the acceptance probability of $A$ on a sample from $\Dnat$ vs $\Dpt$).
To translate between the notions, we define two mappings (subpopulations to distinguishers, and distinguishers to subpopulations) that allow us to upper-bound one quantity in terms of the other. Specifically, given a collection of subpopulations $\C$, we define a family of distinguishers $\A = \set{A_S}$, where
for all $S \in \C$,  $A_S(i,b)=\textbf{1} \lr{i\in S \land b=1}$.

Note that the probability that each $A_S$ accepts on a sample $\Dptsample$ is the joint probability that $ i \in S$ and $\ot_i = 1$, which can be directly related to the  expectation of $\pt_i$ conditioned on $i \in S$. This allows us to  express the \ma violation in terms of the distinguishing advantage, as  
$\nabla_{S}(\pt)=\frac{\Delta_{A_{S}}(\pt)}{\Pr_{i\sim \DX}[i\in S]}$. By taking the accuracy parameter in \one to be sufficiently small, we can guarantee that \one implies \ma.  Similarly, to implement \oi from \ma, we define two sets $S_{(A,0)}$ and $S_{(A,1)}$ for every distinguisher $A \in \A$ and $b \in \set{0,1}$,
$S_{(A,b)}= \set{i \in \X:  A(i,b)=1}$.

Using similar arguments, we show that $\Delta_A(\pt)$ can be upper-bounded by $\nabla_{S_{(A,0)}}(\pt)+\nabla_{S_{(A,1)}}(\pt)$, for which \ma (w.r.t the constructed family of subpopulations) provides a bound. Note that the constructions are very closely related; in fact, repeating the translation twice reaches a ``stable point''. That is, given $\C$ (or given $\A$), we can construct a canonical pair $(\C', \A')$ such that $\C'$-\ma implies $\A'$-\one, and vice versa. Importantly, $\C'$ is (essentially) of the same complexity as $\C$ (resp., $\A'$ compared to $\A$), and the degradation in the accuracy parameters only results from the fact that \ma is defined for a collection of arbitrarily small sets.

Showing a similar equivalence for \mc follows the same general construction, but requires more care.
We begin with the observation that for a fixed predictor $\pt$, $\C$-\mc can be viewed as $\tilde{\C}$-\ma, where each subpopulation in $\tilde{\C}$ is obtained as the intersection of some subpopoulation $S \in \C$ and ``level-set'' of $\pt$: $\set{i \in S \land \pt_i = v}$.
Thus, at an intuitive level, 
we can model the constructions similarly in terms of the sets in $\tilde{\C}$.
A number of technical subtleties arise due to the precise notion of approximate calibration from \cite{hkrr}, which is necessary to provide sufficiently strong fairness guarantees; we discuss these issues in more detail in Section \ref{sec:connections}.

\paragraph{Constructing OI predictors.}
We establish the complexity of OI predictors (as in Theorems~\ref{result:three} and \ref{result:four}) by describing a learning algorithm that, given a class of distinguishers $\A$, an approximation parameter $\eps$, and samples from \nature's distribution $\Dnatsample$, constructs a predictor satisfying \oi, for any level of the OI hierarchy.
To start, inspired by the approach of \cite{hkrr}, we consider a reduction from the task of constructing an OI predictor to auditing for OI.
Specifically, the auditing problem receives a candidate predictor $p$, and must determine whether for all $A \in \A$, the distinguishing advantage $\Delta_A(p) < \eps$ is small; if there is an $A \in \A$ that has nontrivial advantage in distinguishing $\Dnat$ from $\Dmod{p}$, then the auditor must return such a distinguisher.
Naively, the auditor can be implemented by exhaustive search: for each $A \in \A$, the auditor---using the samples from $\Dnat$ as well as generated samples from $\Dmod{p}$---evaluates the advantage of $\Delta_A(p)$, returning $A$ if $\Delta_A(p) > \eps$.\footnote{For the sake of the overview of the construction, we ignore algorithmic issues of sample and time complexity in certifying that the estimate of each $\Delta_A(p)$ is sufficiently close.}

Suppose we're given some candidate predictor $p$; by iteratively auditing and updating, we aim to construct a circuit computing a predictor $\pt$ that satisfies OI.
To start, given the predictor $p$, if the auditor certifies that $p$ passes $(\A,\eps)$-OI, then trivially we have succeeded in our construction.
If, however, there is a distinguisher $A \in \A$ with a nontrivial advantage, then $p$ fails the audit; in this case, the successful distinguisher $A \in \A$ witnesses some ``direction'' along which $p$ fails to satisfy OI.
If we can update along this direction, a standard potential argument (akin to that of boosting or gradient descent) demonstrates that the updated predictor has made ``progress'' towards satisfying OI.
Then, we can recurse, calling the auditor on the updated predictor.
We argue, the process must terminate with an OI predictor after not too many rounds of auditing and updating. 

Thus, the crux of the construction is to solve the following problem:  given a circuit computing a predictor $p$ and a distinguisher $A \in \A$ that witnesses $\Delta_A(p) > \eps$, derive a new circuit computing an updated predictor $p'$ that has made progress towards OI.
A subtle issue arises when making this intuition rigorous for \three and \four.
At these levels of OI, the distinguishers may access the predictor in question, so there seems to be some circularity in the construction:  to obtain the OI predictor $\pt$, we need to call the distinguishers $A\in \A$; but to evaluate the distinguishers $A \in \A$, we may need to access $\pt$.
We argue that, in fact, there is no issue; to avoid the circularity, in each iteration, we can use the current predictor $p^{(t)}$ as the ``oracle'' for the distinguishers in $\A$.
If $p^{(t)}$ passes auditing by oracle distinguishers $A^{p^{(t)}}$, then this predictor satisfies \three.
If $p^{(t)}$ fails auditing, then we can still use the distinguisher  $A^{p^{(t)}}$ to derive an update that we argue makes progress towards \nature's predictor $\ps$.
Of course, because $\Dnat = \Dmod{\ps}$, $\ps$ satisfies OI.
Thus, the potential argument still works, and we guarantee termination after a bounded number of updates.\footnote{A similar argument holds for \four, using the description of $p^{(t)}$ as input.}

To finish the construction, we leverage the concrete assumptions about the model of distinguishers to build up the circuit computing $\pt$.
We focus on obtaining \three for size $s$ oracle circuits that make at most $q$ queries to $\pt$.
The argument above ensures that in the $t$-th iteration, we can implement each oracle distinguisher using (non-oracle) circuits, where each of the $q$ oracles calls is replaced with a copy of the current predictor $p^{(t)}$ hard-coded in place of the oracle gates.
Then, we can derive an updated circuit $p^{(t+1)}$ by combining the circuits computing $p^{(t)}$ and computing $A^{p^{(t)}}$ (taking an addition of the outputs, with appropriate scaling).
This recursive construction---where we build the circuit computing $p^{(t+1)}$ by incorporating multiple copies of $p^{(t)}$---suggests a recurrence relation characterizing an upper bound on the eventual circuit size.
Intuitively, with a base circuit size of $s$, and $q$ oracle calls (determining the branching factor), the size of $p^{(t)}$ grows roughly as $s \cdot q^t$.
Leveraging an upper bound on the number of iterations $T = O(1/\eps^2)$, the claimed bound follows.

Establishing the upper bound on the complexity of \four follows by a similar high-level argument, but there are some additional complications.
Briefly, because the distinguishers take, as input, the description of a circuit computing the predictor in question, we need to work with a class of distinguishers that \emph{grows with the complexity of the predictor itself}.
We deal with the technicalities needed to encode and decode predictors so that we can simulate the lower levels within \four.
We discuss these issues in full detail in Section~\ref{sec:beyond}.

\paragraph{Lower bounds for \three.}
To relate the complexity of evaluating \three predictors to complexity-theoretic assumptions such as the hardness of clique counting or $\PSPACE$-complete problems (as done in the informal results stated in Theorems \ref{result:lb} and \ref{result:lb-PSPACE}),
we consider the task of constructing an OI predictor that needs to withstand the scrutiny of a distinguisher that can make oracle queries.
Suppose we
guarantee that any such predictor must compute a moderately hard function $f$ on at least part of its domain.
Then, a distinguisher could use oracle access to the predictor (on that part of the domain) to perform expensive computations of $f$ at unit cost, while scrutinizing other parts of the domain.
As we'll see, pushing this intuition, we show how efficient oracle distinguishers can perform surprisingly powerful tests to distinguish $\pt$ from $\ps$.

With this intuition in mind, we divide the domain $\X$ into $m$ disjoint subsets $\X_1,\ldots,\X_m$.
As a first step, we want to make sure that it is moderately hard to achieve OI when the distribution $\Dnat$ is restricted to $\X_1$: for $i \in \X_1$, we set $\os_i = f_1(i)$, where $f_1$ is a moderately hard Boolean function.
We will guarantee that any \three predictor $\pt$ needs to compute $f_1$ on inputs in $\X_1$ by adding a distinguisher $A_1$ that verifies, for inputs $i \in \X_1$, that $\oo_i = f_1(i)$.
At this point, it isn't clear that anything interesting is happening: the complexity of achieving OI (i.e., computing $f_1$) is not yet higher than the complexity of the distinguisher (which also needs to compute $f_1$).
However, we can now add a distinguisher $A_2$ that verifies, for inputs in $\X_2$, that $\pt$ also correctly computes a harder function $f_2$.
The key point is for the distinguisher to verify that $\oo_i = f_2(i)$ {\em without computing $f_2$ itself}!
To achieve this, the distinguisher can use its oracle access to $\pt$. In particular, assuming that $\pt$ correctly computes $f_1$ on inputs $\X_1$, we can use a {\em downwards self-reduction} from computing $f_2$ on inputs in $\X_2$ to computing $f_1$ on inputs in $\X_1$.

The construction proceeds along these lines, using a sequence of functions $\{f_j\}_{j \in [m]}$, where for every $j \in [m]$ and $i \in \X_j$, we set $\os_i = f_j(i)$. Now, for every $j \in [2,\ldots,m]$, we want the function $f_j$ to be harder to compute than $f_{j-1}$, and we want a downwards self-reduction from computing $f_j$ to computing $f_{j-1}$. The distinguisher $A_j$ uses the given predictor $\pt$ as an oracle to $f_{j-1}$, and verifies that for $i \in \X_j$,  $\oo_i = f_j(i)$. We emphasize that the complexity of the oracle distinguisher $A_j$ is proportional to the cost of the downwards self-reduction, which can (and will) be significantly smaller than the complexity of computing $f_j$.

While intuitively appealing, the discussion above ignores an important point:  OI only provides an approximate guarantee on the real-valued predictions, not exact recovery of the sequence of fucntions $\set{f_j}$.
Starting at the first level of functions, an $(\{A_1\},\eps)$-\three predictor $\pt$ only has to correctly compute $f_1$ in a limited sense.
First, $\pt$ only needs to be correct {\em on average} for random inputs; it can err completely on some inputs.
Second, while we will choose $f_1$ (and all the $f_j$'s) to be a Boolean function, the predictor $\pt$ itself need not be Boolean. Nonetheless, for any input $i$, the distinguisher $A_1$ accepts the input $(i,\ot_i)$ only when $\ot_i = f_1(i)$, so taking $\eps$ small enough guarantees that for any $(\{A_1\},\eps)$-\three predictor $\pt$, with all but small probability over a draw of $i$ from $\DX$ restricted to $\X_1$, rounding $\pt(i)$ gives the correct value of $f_1$.
The probability of an error raises a new problem: the distinguisher $A_2$, which uses oracle calls to $\pt$ to compute $f_1$, might receive incorrect answers! Indeed, we expect the downwards self-reduction from $f_2$ to $f_1$ to make multiple queries (since $f_2$ is harder to compute), and so the error probability will be amplified.
To handle this difficulty, we also want a {\em worst-case to average-case reduction} for $f_1$: from computing $f_1$ on worst-case inputs in $\X_1$, to computing $f_1$ w.h.p. over random inputs drawn from $\DX$ restricted to $\X_1$.
Indeed, we'll want a similar reduction for each of the functions in the hierarchy.
For each $j \in [2,\ldots,k]$, the distinguisher $A_j$ will use the downwards self reduction, to $f_{j-1}$, using the worst-case to average-case reduction for $f_{j-1}$ to reduce the error probability of $\pt$ before answering the downward self-reduction's oracle queries. 

We can now make an inductive argument: assume that any $\pt$ that is OI for the distinguishers $\{A_1,\ldots,A_{j-1}\}$ must compute $f_{j-1}$ correctly w.h.p. over inputs drawn from $\DX$ restricted to $\X_{j-1}$.
Then $\pt$ is a very useful oracle for the $j$-th distinguisher $A_j$, which uses the downwards self-reduction and worst-case to average-case reductions, together with its oracle access to $\pt$, to compute $f_j$ (and verify that $\oo_i = f_j(i)$).
The key point is that $A_j$ can do this (via oracle access to $\pt$), even though its running time is much smaller than the time needed to compute $f_j$.
In turn, we conclude that any $\pt$ that is OI for the distinguishers $\{A_1,\ldots,A_{j}\}$ must compute $f_j$ correctly w.h.p. over inputs drawn from $\DX$ restricted to $\X_j$.
At the top ($m$-th) level of the induction, we conclude that a predictor that is OI for the entire collection of distinguishers must compute $f_m$ correctly w.h.p. over random inputs.
Finally, since we have a worst-case to average-case reduction for $f_m$, this implies that achieving OI is almost as hard as worst-case (randomized) computation of $f_m$.

To instantiate this framework, we need a collection of functions $\{f_j: \{0,1\}^n \rightarrow \{0,1\} \}_{j=1}^m$ with three properties:
(1) ``Scalable hardness'': The complexity of computing $f_j$ should increase with $j$. A natural goal is $n^{\Theta(j)}$ time complexity, where the lower bound should apply for randomized ($\BPP$) algorithms; (2) Downwards self-reduction: A reduction from computing $f_j$ to computing $f_{j-1}$, with fixed polynomial running time and query complexity (ideally $\tilde{O}(n)$, though we will use a collection where the complexity is a larger fixed polynomial); (3) Worst-case to average-case reduction: A reduction from computing $f_j$ in the worst case, to computing $f_j$ w.h.p. over a distribution $D_j$.

The clique counting problem presents a natural candidate, where $f_{j-2}$ counts the number of $j$-cliques in an unweighted input graph with $n$ vertices.\footnote{Clique counting is trivial for cliques of size 1 or 2, and begins being interesting for 3-cliques, or triangle counting.} The complexity of this well-studied problem is believed to be $n^{\Theta(j)}$.
Goldreich and Rothblum \cite{GoldreichR18} recently showed a worst-case to average-case reduction for this problem, where the reduction runs in $\tilde{O}(n^2)$ times and makes $\poly(\log n)$ queries.
The problem also has a downwards self-reduction from counting cliques of size $j$ to counting cliques of size $(j-1)$, which runs in time $O(n^3)$ and makes $n$ oracle queries (on inputs of size $O(n^2)$). 
The above construction utilized a sequence of Boolean functions, whereas the output of the clique-counting function is an integer in $[n^j]$. We use the Goldreich-Levin hardcore predicate \cite{goldreich-levin} to derive a Boolean function that is as hard to compute as clique counting.
The full details are in Section \ref{sec:lowerbound}.

The above framework can be also be instantiated using the algebraic variants of fine-grained complexity problems studied in the work of Ball, Rosen, Sabin, and Vasudevan \cite{ball2017average,BallRSV18}. Interestingly, downwards self-reducability also comes up in their work \cite{BallRSV18}, where it is used to argue hardness for batch evaluation of many instances. Their algebraic variants of the $k$-orthogonal-vectors and $k$-SUM problems seem directly suited to our construction. We focus on clique counting because of the tightness of the upper and lower bounds that have been suggested and studied in the literature.

For $\PSPACE$ hardness of \three, we use a $\PSPACE$-complete problem that is both downwards self-reducible and random self-reducible, due to Trevisan and Vadhan \cite{TrevisanV07}. The details are in Section \ref{subsec:PSPACE-OI}. 
The permanent \cite{Lipton89} (or scaled-down variants thereof, see \cite{GoldreichR18}) seems to be another promising candidate for our construction. In a different direction, it is interesting to ask whether moderately hard cryptographic assumptions, as suggested by Dwork and Naor \cite{DworkN92}, could also provide candidates.

\subsection{Broader Context and Related Notions}
\label{sec:related}

\paragraph{A Socio-Technical Path of Progress.}
A sufficiently rich representation of real-life individuals implies a mapping from individuals to their representation as input to the predictor that has no collisions.
In other words, given enough bits of information in the representation, each individual will have a unique representation.
Still, this richness does {\em not} mean that the representation contains the right information to determine the values of the $\ps_i$, even information-theoretically.
For example, modulo identical twins, individuals' genomes suffice to uniquely represent every person, but sequences of DNA are insufficient to determine an individual's ability to repay a loan.
Alternatively, the necessary information may be present, but its interpretation may be computationally infeasible.

Generally, we assume that the representation of individuals is fixed and informative.
Our analysis demonstrates that OI is feasible by using a potential argument.
Specifically, we describe an algorithm that iteratively looks for updates to the current set of predictions that will step closer towards indistinguishability. 
We guarantee that the algorithm terminates in a bounded number of steps by arguing that after sufficiently many updates, the predictor we hold is essentially $\ps$.

In practice, however, it may be the case that our features will be insufficiently rich to capture $\ps$.
Given a simple representation, even if we require a predictor $\pt$ that satisfies OI using a very computationally-powerful set of distinguishers ({\it e.g.},\ polynomial-sized circuits), there will be an inherent, information-theoretic limitation that prevents $\pt$ from converging to $\ps$.
While, given this representation of individuals, it may be impossible to distinguish \Nature's outputs $\os$ from $\ot$ drawn according to $\pt$, it may be possible to distinguish the outputs if we obtain an enriched representation of individuals.  Moreover, obtaining an enriched representation may even be easy, in that it can be accomplished (by a human) in time polynomial in the size of the original representation!

The OI framework can be extended naturally, allowing for the representation of individuals to be augmented throughout time.
Given such an enriched representation, we can continue iteratively updating $\pt$, based on the new representation.
Specifically, we can obtain new training data, concatenate the old and enriched representations to form a new representation, initialize a new predictor to equal $\pt$, and enrich the collection of distinguishers to operate on the new representation.
The new class of distinguishers retains and adds to the old distinguishing power, so $\pt$ likely will no longer satisfy OI; thus, we can apply our algorithm, starting at $\pt$, updating until the predictor fools the new class of distinguishers.
By applying the same potential argument, we can can guarantee that this process of augmenting the representations cannot happen too many times.
Any augmentation that significantly improves the distinguishing advantage between $\ps$ and $\pt$ must result in new updates that allow for significant progress towards $\ps$.

\paragraph{Prediction Indistinguishability}
In this work, we also investigate a notion we call {\em Prediction Indistinguishability (PI)}.
In PI we require the stronger condition that $(\ps,\os)$ be indistinghishable from $(\pt,\os)$.
While \PI is intuitively appealing, there are very simple distinguishers that show it is too much to ask for: a predictor $\pt$ is PI with respect to these distinguishers if and only if $\pt$ is statistically close to $\ps$.
But indistinguishability as a concept in computational complexity theory is interesting precisely when coming up with $\pt$ that is statistically close to $\ps$ is impossible.
Moreover, since we never see the values $\ps$---we don't even know if randomness exists!---we cannot hope to have indistinguishability of the $\pt$ from the ``true'' probabilities and it is strange even to pose such a criterion.  Nonetheless, we show that PI and OI are equivalent when indistinguishability is with respect to tests that are passed by all \naturelike histories with high probability (Section~\ref{sec:pi}), which we discuss in more detail next.

\paragraph{Tests vs.\ Distinguishers}
The framing of \oi draws directly from the notion of computational indistinguishability, studied extensively in the literature on cryptography, pseudorandomness, and complexity theory (see, e.g., \cite{oded-crypto,oded-vol2,salil-pseudo,oded-complexity} and references therein). A comparison to the extensive literature on online forecasting
clarifies
the semantic distinction between two concepts:  \emph{tests} (in the forecasting literature) and \emph{distinguishers} (in the complexity literature).

The forecasting literature focuses on an online setting where there are two players, \Nature{} and the Algorithm.
\Nature{} controls the data generating process (e.g.,\ the weather patterns), while the Algorithm tries to assess, on each Day~$t-1$, the probability of an event on Day~$t$ (e.g., will it rain tomorrow?).
There are many possibilities for \Nature; by definition, in this literature, \Nature{} calls the shots, in the following sense: On Day $t-1$, \Nature{} assigns a probability $\ps_t$ that governs whether it will rain or not on Day~$t$.
Note that \Nature{} is free to select $\ps_t \in \{0,1\}$, in which case the outcomes are determinsitic, $\os_t = \ps_t$

In the early 1980s, \cite{Dawid82} proposed that, at the very least, forecasts should be calibrated.  More stringent requirements were obtained by considering large (countable) numbers of sets of days, such as odd days, even days, prime-numbered days, days on which it has rained for exactly 40 preceding days and nights, and so on, and requiring calibration for each of these sets simultaneously~\cite{G3}.  This is the ``moral equivalent'' of multi-calibration in the world of infinite sequences of online forecasting.

A signal result in the forecasting literature, due to Sandroni~\cite{sandroni2003reproducible} applies to a more general set of tests than calibration tests.
Consider infinite histories, that is, sequences of (prediction, outcome) pairs.
We say a history is {\em \naturelike} if it is a sequence $\hist{p}{o}$ where $\forall t$ we have $o_t \sim \Ber(p_t)$.
Note that certain \naturelike{} histories may have no connection to any real-life weather phenomena, instead corresponding to a valid but unrealistic choice of $p$.
A {\em test} takes as input a (not necessarily \naturelike ) history and outputs a bit.  The test is usually thought of as trying to assess whether an algorithm's predictions are ``reasonably accurate'' with respect to the actual observations.
This implicitly focuses attention on tests that  \naturelike histories pass with high probability (over the draws from the Bernoulli distributions), and indeed, calibration tests fall into this category.
The goal of the Algorithm, then, is to generate predictions $\pt$ for which the histories $\hist{\pt}{\os}$ pass the test.  Here $\pt_i$ is the Algorithm's forecasted probability of rain for Day~$i$ and $\os_i$ is the Boolean outcome, rain or not, that actually occured on Day~$i$.

Sandroni's powerful result~\cite{sandroni2003reproducible}, proves, 
non-constructively\footnote{The result leverages Fan's minimax theorem.}, the existence of an Algorithm that, given any test $T$, yields a history which passes $T$ with probability at least as great as the minimum probability with which any \naturelike{} history $\nhist{p}$ passes~$T$ (again, the probability is over the draws from the $\Ber(p_i)$, $i \ge 1$).

There are two major differences between tests in the forecasting literature and distinguishers in the complexity-theoretic literature.\footnote{Unfortunately, and  confusingly, the literature on indistinguishability often conflates the notions, referring to distinguishers as tests.} The first is that tests have semantics---you want to pass the test, and the higher the probability of passing a test the better. In contrast, distinguishers output $0$ or $1$ with no semantics, and our goal is to produce an object such that the distinguisher outputs $1$ with the same probability as the objects that we are imitating.
In this case, getting the distinguisher to output $1$ with higher probability may be worse.
The second difference is that in the forecasting setting we want natural histories to pass the test with high probability: if natural histories fail the test, how can we interpret the Algorithm's inability to pass the test as an indication that the Algorithm is inaccurate?  As a result, the Algorithm does not compete with the actual \Nature $\ps$, but only with the hypothetical \nature that passes the test with the lowest probability.

To highlight these differences, consider a distinguisher that considers two sets of individuals, $S$ and $T$.
For each set, the distinguisher estimates the outcome probabilities $\alpha_S$ and $\alpha_T$---that is, averaged over the individuals $i \in S$ (respectively, $T$), the probability that $\os_i = 1$---and outputs $0$ with probability $\card{\alpha_S-\alpha_T}$ and $1$ otherwise.
Note that for some \Natures the distinguisher will output 1 with very small probability.
Nevertheless, in cases where \nature treats $S$ and $T$ equally, the distinguisher will output~$1$ with high probability and,
OI guarantees that $\pt$ must also treat $S$ and $T$ equally.
Many properties of samples from a distribution are quite naturally and directly specified through the language of distinguishers, and not obviously through the language of tests.
In light of this discussion, the connection between \two and multi-calibration is very interesting: it shows how to reduce a collection of distinguishers into a collection of tests, and even more specifically to a collection of calibration tests.

\paragraph{Algorithmic fairness.}
Tests are also implicit in the literature on algorithmic fairness, where they are sometimes referred to as {\em auditors}.
One line of work, the \emph{evidence-based fairness} framework---initially studied in \cite{hkrr,kgz,dwork2019learning}---relates directly to \oi and centers around tests that \nature always passes.
Broadly, the framework takes the perspective that, first and foremost, predictors should reflect the ``evidence'' at hand---typically specified through historical outcome data---as well as the statistical and computational resources allow.

Central to evidence-based fairness is the notion of \mc \cite{hkrr}, which was also studied in the context of rankings in \cite{dwork2019learning}. Recently,  \cite{jung2020moment} provide algorithms for achieving an extension of multi-calibration that ensures calibration of higher moments of a scoring function, and show how it can be used to provide credible prediction intervals. \cite{scm} study \mc from a sample-complexity perspective. In a similar vein, \cite{zhao2020individual} study a notion of individualized calibration and show it can be obtained by randomized forecasters.

Evidence-based fairness is part of a more general paradigm for defining fairness notions, sometimes referred to as ``multi-group'' notions, which has received considerable interest in recent years \cite{hkrr,knrw,krr,kgz,knrw2,dwork2019learning,scm,blum2019advancing,jung2020moment}.
This approach to fairness aims to strengthen the guarantees of notoriously-weak group fairness notions, while maintaining their practical appeal.
For instance, \cite{krr,knrw,knrw2} give notions of multi-group fairness based on parity notions studied in \cite{fta} and \cite{hardt2016equality}.
\cite{blum2019advancing} extend this idea to the online setting.
Other approaches to fairness adopt a different perspective, and intentionally audit for 
properties that \nature does not necessarily pass. Notable examples are group-based notions of parity \cite{hardt2016equality, kleinberg2016inherent, knrw, knrw2}.

\paragraph{Computational and Statistical Learning.}
Prediction tasks have also been studied extensively in the theoretical computer science and machine learning communities, both in the offline PAC model \cite{valiant1984theory}, as well as in the online model \cite{littlestone1994weighted,freund1997decision}; see \cite{shalev2014understanding} and references therein.
Relatedly, \cite{blum2007external} also show a multi-calibration-style guarantee in the online ``sleeping experts'' setting.
More broadly, our work is also in conversation with more applied approaches to learning distributions and generative models including GANs \cite{gans} or auto-encoders \cite{kingma2014autoencoding}.
The perspective of generating (statistically) indistinguishabile samples was also recently considered in a work introducing the problem of ``sample amplification'' \cite{axelrod2019sample}.

\paragraph{Organization.}
The remainder of the manuscript is structured as follows.
\begin{itemize}
\item Section~\ref{sec:oi} defines OI formally and shows a number of propositions relating the various notions of OI to one another.
\item Section~\ref{sec:pi} introduces the notion of prediction indistinguishability, and investigates the relationship of OI distinguishers to forecasting-style tests.
\item Section~\ref{sec:connections} demonstrates the connections between the first two levels of the OI hierarchy to \ma and \mc.
\item Section~\ref{sec:beyond} describes our construction of OI predictors, establishing the feasibility of the final two levels.
\item Finally, Section~\ref{sec:lowerbound} describes the construction establishing a conditional lower bound against the final levels.
\end{itemize}

\section{\OI}
\label{sec:oi}

Throughout this work, we study how to obtain \predictors that generate outcomes that cannot be distinguished from \naturelike outcomes.
Specifically, we model \nature as a joint distribution over individuals and outcomes, denoted $\Dnat$.
Individuals come from a discrete domain $\X$; throughout, we will assume that each $i \in \X$ can be resolved to some $d$-dimensional boolean string $i \in \set{0,1}^d$, which represents the ``features'' of the individual.
In this work, we focus on boolean outcomes $\Y = \set{0,1}$.
Thus, $\Dnat$ is supported on $\X \times \Y \subseteq \set{0,1}^d \times \set{0,1}$.
We use $i \sim \DX$ to denote a sample from \nature's marginal distribution over individuals.

We say that a \emph{\predictor} is a function $p:\X \to [0,1]$ that maps individuals to an estimate of the conditional probability of the individual's outcome being $1$.
For ease of notation, we use $p_i = p(i)$ to denote a \predictor's estimate for individual $i$.
Note that the marginal distribution over individuals $\DX$ paired with a \predictor induce a joint distribution over $\X \times \Y$.
Given a predictor $p$, we use $(i,o_i) \sim \D(p)$ to denote an individual-outcome pair, where $i \sim \DX$ is sampled from \nature's distribution over individuals, and the outcome  $o_i \sim \Ber(p_i)$ is sampled---conditional on $i$---according to the Bernoulli distribution with parameter $p_i$.

With this basic setup in place, we are ready to introduce the main notion of this work---\oi (OI).
Intuitively, when developing a prediction model, a natural goal would be to learn a \predictor $\pt:\X \to [0,1]$ whose outcomes ``look like'' \nature's distribution $\Dnat$.
\Oi formalizes this intuition, and is parameterized by a family of distinguisher algorithms $\A$.
In the most basic form of OI, each $A \in \A$ receives as input a labeled sample from one of two distributions,
\nature's distribution $\Dnat$ or the \predictor's distribution $\Dpt$.
\begin{align*}
\underbrace{\Dnatsample}_{\textrm{Nature's distribution}}&&\underbrace{\Dptsample}_{\textrm{\Predictor's distribution}}
\end{align*}
In other words, in each distribution the individual $i$ is sampled according to nature's marginal distribution on inputs, denoted $i \sim \DX$.
The distribution over outcomes, however, varies: conditioned on the individual $i$, the distinguisher either receives the corresponding \naturelike outcome $\os_i$, or receives an outcome sampled as $\ot_i \sim \Ber(\pt_i)$.
In its most basic form, a predictor $\pt$ satisfies OI over the family $\A$ if for all $A \in \A$, the probability that $A$ accepts the sample $(i,o_i)$ is (nearly) the same for \nature's distribution and the \predictor's distribution.
In addition to the sample from $\Dnat$ versus $\Dpt$, we can also allow the distinguishers to access the \predictor $\pt$ itself.
This setup allows us to define a prototype for a notion of OI.
\begin{definition}[\OI]
Fix \nature's distribution $\Dnat$.
For a class of distinguishers $\A$ and $\eps > 0$,
a \predictor $\pt:\X \to [0,1]$ satisfies $(\A,\eps)$-\oi (OI) if for every $A \in \A$,
\begin{gather*}
\card{\Pr_{(i,\os_i) \sim \Dnat}\lr{A(i,\os_i; \pt) = 1}
- \Pr_{(i,\ot_i)\sim \Dpt}\lr{A(i,\ot_i; \pt) = 1}} \le \eps.
\end{gather*}
\end{definition}
The subsequent sections introduce multiple variants of \oi, highlighting four distinct access patterns to $\pt$.
By allowing the distinguishers increasingly liberal access to the predicitve model $\pt$, the indistinguishability guarantee becomes increasingly strong.

\paragraph{Remark on nature.}
We primarily model \nature $\Dnat$ as a fixed and unknown joint distribution over $\X \times \Y$.
The presentation of some result benefits from an equivalent view, based on the agnostic PAC framework \cite{haussler1992decision,kearns1994efficient,kearns1994toward}.
In this view, we imagine that individuals $i \sim \D_\X$ are sampled from the marginal distribution over $\X$, and then \nature selects outcomes conditioned on $i$.
Throughout, we will use $\ps:\X \to [0,1]$ to denote the function that maps individuals to the true conditional probability of outcomes given the individual.
That is, for all $i \in \X$:
\begin{equation*}
\ps_i = \Pr_{\Dnatsample}\lr{\os_i = 1 \given i}.
\end{equation*}
In our notation, we can imagine that \nature specifies the distribution over individuals $i \sim \DX$, then specifies the ``\naturelike \predictor'' $\ps$ and samples the outcome for an individual $i$ as $\os_i \sim \Ber(\ps_i)$; in other words, $\Dnat = \Dmod{\ps}$.
We emphasize that this view---of \nature selecting a \predictor---is an abstraction that is sometimes instructive in our analysis of OI.
Nevertheless, we make no assumptions about $\ps$ other than it defines a valid conditional probability distribution for each $i \in \X$.
In particular, $\ps$ need not come from any realizable or learnable class of functions.

\paragraph{Expectations and norms.}
We often will take expectations over \nature's marginal distribution over individuals, possibly conditioned on membership in particular subpopulations $S \subseteq \X$.
A simple but important observation is that for any subpopulation $S \subseteq \X$, the expectation of the \naturelike outcome is equal to the expectation of $\ps$.
\begin{equation*}
    \Pr_{(i,\os_i) \sim \Dnat}\lr{\os_i = 1 \given i \in S} = \E_{(i,\os_i) \sim \Dnat}\lr{\os_i \given i \in S} = \E_{i \sim \DX}\lr{\ps_i \given i \in S}
\end{equation*}
Similarly, we may compare \predictors over the distribution of individuals.
For any two predictors $p,p':\X \to [0,1]$, we use the following $\ell_1$-distance to measure the statistical distance between their outcome distributions $\Dmod{p}$ and $\Dmod{p'}$.
\begin{gather*}
    \norm{p-p'}_1 = \E_{i \sim \DX}\lr{\card{p_i - p_i'}}
\end{gather*}
We only use the $\norm{\cdot}_1$ notation when the distribution on individuals $\DX$ is clear from context.

\paragraph{Supported predictions.}
In many definitions, we can reason about \predictors as arbitrary functions $p:\X \to [0,1]$, but to be an effective definition, we need to discuss functions that are implemented by a realizable model of computation.
Importantly, this means we will think of predictors as mapping individuals $i \in \X$ to a range of values $p_i$ that live on a discrete subset of $[0,1]$.
We assume for any predictor $p:\X \to [0,1]$, the \predictor's support is a discrete set of values in $[0,1]$ that receive positive probability over $\DX$.
For any subpopulation $S \subseteq \X$, we denote the support of $p$ on $S$ as
\begin{gather*}
\supp_S(p) = \set{v \in [0,1] : \Pr_{i \sim \DX}\lr{p_i = v \given i \in S} > 0}
\end{gather*}
In this way, for any $v \in \supp(p)$, the conditional distribution over individuals $i \sim \DX$ where we condition on the event $p_i = v$ is well-defined.

When possible, we obtain results agnostic to the exact choice of discretization.
Sometimes, we need to reason about the discretization explicitly and map the values of $\supp(p)$ onto a known grid with fixed precision; we introduce additional technical details as needed in the subsequent sections.

\paragraph{Distinguishers and subpopulations.}
The notion of \oi is parameterized by a family of distinguishing algorithms, which we denote as $\A$.
To instantiate a concrete notion of OI (at any of the four levels we define), we must specify $\A$ within a fixed realizable model of computation.
In practice, it may make sense to use a class of learning-theoretic distinguishers, (e.g.,\ decision-trees, halfspaces, neural networks).
In this work, we focus on more abstract models of distinguishers.
When our proofs allow, we will treat $\A$ as an arbitrary class of computations, but for certain results, it will be easier to assume something about the model of computation in which $A \in \A$ are implemented (e.g.,\ time-bounded uniform, size-bounded non-uniform).

Recall, we assume the domain of individuals $\X \subseteq \set{0,1}^d$ can be represented as $d$-dimensional boolean vectors for $d \in \N$, and that the distinguishing algorithms $A \in \A$ take as input an individual $i \in \X$ and an outcome $o_i \in \set{0,1}$.
Often, we will think of the dimension $d$ as fixed.
In this case, we can think of $\A$ as a fixed class of distinguishers of concrete complexity: for example, if the class $\A$ is implemented by circuits, then we can reason about their size as $s(d) = s$ for some fixed $s \in \N$.
When we think of the dimension as growing, then we need to consider ensembles of distinguishing families, where the family is parameterized by $d \in \N$.

The same issues arise when we discuss multi-calibration, which is parameterized by a collection of ``efficiently-identifiable'' subpopulations $\C \subseteq \set{0,1}^\X$.
Here, efficiently-identifiable refers to the fact that we assume for each $S \in \C$, there exists some efficient computational model that given $i \in \X$, can compute the predicate $\1\lr{i \in S}$ (i.e., the characteristic function of $S$).
Again, whenever possible, our treatment does not depend on the exact model of computation.

\paragraph{Circuits.}
As suggested above, some results are most naturally stated with a concrete model of computation in mind.
In these cases, we will describe the distinguishers and subpopulations as computed by a collection of circuits.
Fixing such a model of circuits will be useful when relating the complexity of a class of distinguishers $\A$ to that of a class of subpopulations $\C$, as well as showing the feasibility of obtaining circuits that implement OI predictors.
Analogous results could be proved instead for uniform classes.

Throughout, we say that a family of distinguishers $\A$ (resp., subpopulations $\C$) for $\X \subseteq \set{0,1}^d$ is implemented by a family of circuits of size $s(d)$, if for each $A \in \A$ (resp., $S \in \C$), there exists a bounded fan-in circuit over the complete boolean basis $c_A$ that computes the distinguisher $A$ on all inputs, with at most $s(d)$ gates (or equivalently, by bounded fan-in, $\Theta(s(d))$ wires).

Specifying the model of computation for the most stringent levels of OI requires some care.
The third level---\three---allows the distinguishers oracle-access to the \predictor in question.
For each $A \in \A$, we denote the oracle distinguisher as $A^\pt$, which has random access to $\pt_i$ for any $i \in \X$.
The fourth level---\four---allows the distinguishers direct access to the description of the \predictor in question, denoted $\ptstring$.
In this case, it makes sense to allow the ensemble of distinguishers to be parameterized by the length of the description $n = \card{\ptstring}$ in addition to the dimension $d$.
We discuss the specific assumptions about the implementation of these notions in subsequent sections.

\subsection{Defining the Levels of \OI}

With the general framework and preliminaries in place, we are ready to define the various levels of \oi.
In this section, we focus on the definitions of each notion---\one, \two, \three, and \four.
Along the way, we show some relations between the notions, but defer most of our investigation of the notions to subsequent sections.
We begin introducing each notion in the single-sample setting, and discuss OI for distinguishers that receive multiple samples in Section~\ref{sec:multi-sample}.

\subsubsection{\ONE}
The weakest model of distinguisher receives no direct access to the predictive model $\pt$, and must make its judgments based only on the observed sample.
In this framework, the only access to the \predictor is indirect, through the sampled outcomes.
\begin{definition}[\ONE]
Fix \nature's distribution $\Dnat$.
Let $\Z = \X \times \set{0,1}$.
For a class of distinguishers $\A \subseteq \set{\Z \to \set{0,1}}$ and $\eps > 0$,
a predictor $\pt:\X \to [0,1]$ is $(\A,\eps)$-\one if for every $A \in \A$,
\begin{gather*}
\card{\Pr_{\Dnatsample}\lr{A(i,\os_i) = 1}
- \Pr_{\Dptsample}\lr{A(i,\ot_i) = 1}} \le \eps.
\end{gather*}
\end{definition}

We remark that from a statistical perspective, \one already defines a strong framework for indistinguishability.
Even at this baseline level, if we allow a computationally-inefficient class of distinguishers, \one can be used to require closeness in statistical distance.
For instance, consider a family of ``subset'' distinguishers, where for a subset $S \subseteq \X$, the distinguisher $A_S$ is defined as follows.
\begin{gather*}
    A_S(i,o_i) = \begin{cases}
    o_i &\textrm{if }i \in S\\
    0 &\textrm{o.w.}
    \end{cases}
\end{gather*}
If we take $\A = \set{A_S : S \subseteq \X}$ to be the family of all subset distinguishers, then the only predictors $\pt$ that satisfy \one will be statistically close to \nature's \predictor $\ps$.
Of course, this class of distinguishers includes inefficient tests (necessary to certify $\norm{\ps - \pt}_1$ is small).
Our interest will be on the guarantees afforded by OI when we take $\A$ to be a class of efficient distinguishers.

\subsubsection{\TWO}
To strengthen the distinguishing power, we define \two, which allows distinguishers to observe the prediction for the individual in question.
Specifically, in addition to the sampled individual $i \sim \D$ and outcome $o_i$ (drawn according to nature or the \predictor $\pt$), the distinguisher receives the prediction $\pt_i$.
\begin{definition}[\TWO]
Fix \nature's distribution $\Dnat$.
Let $\Z = \X \times \set{0,1} \times [0,1]$.
For a class of distinguishers $\A \subseteq \set{\Z \to \set{0,1}}$ and $\eps > 0$, a predictor $\pt:\X \to [0,1]$ is $(\A,\eps)$-\two if for every $A \in \A$,
\begin{gather*}
\card{\Pr_{\Dnatsample}\lr{A(i,\os_i,\pt_i) = 1}
- \Pr_{\Dptsample}\lr{A(i,\ot_i,\pt_i) = 1}} \le \eps.
\end{gather*}
\end{definition}
\Two is a strengthening of \one: for any \one distinguisher, on input $(i,o_i,\pt_i)$, a \two distinguisher can simply ignore the prediction $\pt_i$, and simulate the original \one distinguisher.

\subsubsection{\THREE}

The next strengthening of OI allows distinguishers to make queries to $\pt$, not just on the sampled individual $i \sim \D$, but also on any other $j \in \X$.
Such a query model needs to be formalized; at a high level, we assume the distinguishers in the class $A \in \A$ are augmented with oracle access to $\pt$, denoted as $A^\pt$.
\begin{definition}[\THREE]
\label{def:three}
Fix \nature's distribution $\Dnat$.
Let $\Z = \X \times \set{0,1}$.
For a class of oracle distinguishers $\A \subseteq \set{\Z \to \set{0,1}}$ and $\eps > 0$, a predictor $\pt:\X \to [0,1]$ is $(\A,\eps)$-\three if for every $A^\pt \in \A$,
\begin{gather*}
\card{\Pr_{\Dnatsample}\lr{A^\pt(i,\os_i) = 1}
- \Pr_{\Dptsample}\lr{A^\pt(i,\ot_i) = 1}} \le \eps.
\end{gather*}
\end{definition}
The exact formulation of such oracle distinguishers will vary based on the model of computation in which $\A$ is defined.
In Section~\ref{sec:beyond}, we give one concrete formalization for distinguishers that are implemented efficiently as boolean circuits.
Independent of the exact model, \three can implement \two:  on input $(i,o_i)$, the \three distinguisher can access $\pt_i$ using a single query and then simulate the \two distinguisher.

\paragraph{Lunchtime-OI and \TWO.}
\Three generally defines a stronger notion of indistinguishability than \two, but we show that if the \three distinguishers are non-adaptive---asking only pre-processing queries---then they can be simulated by a family of (non-uniform) \two distinguishers.
This result demonstrates that the power of oracle-access distinguishers over sample-access distinguishers derives from the ability to query $\pt$ \emph{adaptively}, based on the sample in question.
In particular, in Section~\ref{sec:lowerbound} we show that \three is strictly more powerful than \two.
The construction follows by exploiting correlations within $\Dnat$ across different $i,j \in \X$, which can be tested efficiently by querying $\pt$ adaptively.

In fact, this collapse from \three to \two will hold for an even more powerful class of distinguishers, which are allowed ``lunchtime attack'' style pre-processing on $\pt$.
Consider the following model of pre-processing analysis.
For some $t \in \N$, given a family of distinguishers $\A$, suppose that for each $A \in \A$, there exists a pre-processing algorithm $R_A^\pt:1^d \to \set{0,1}^t$ with oracle access to $\pt$.
Given access to $\pt$ for input domain $\X \subseteq \set{0,1}^d$, the pre-processing algorithm $R_A^\pt(1^d)$ produces an advice string $a \in \set{0,1}^t$.
Then, oracle access to $\pt$ is revoked, and the distinguisher $A$ receives a individual-outcome-prediction sample $(i,o_i,\pt_i)$ from one of the two distributions, given access to $a$.\footnote{Note that, as in \two, we additionally give $\pt_i$ as input to the lunchtime distinguisher.
We exclude the prediction $\pt_i$ as input in Definition~\ref{def:three} because, in general, an adaptive \three distinguisher can query $\pt_i$ as desired.
Without feeding the prediction as input, lunchtime-OI actually collapses to \one.}
That is, the lunchtime variant of $(\A,\eps)$-\three holds if for every $A \in \A$ and $a = R_A^\pt(1^d)$,
\begin{gather*}
\card{\Pr_{\Dnatsample}\lr{A^a(i,\os_i,\pt_i) = 1}
- \Pr_{\Dptsample}\lr{A^a(i,\ot_i,\pt_i) = 1}} \le \eps.
\end{gather*}
Note that computing $R_A^\pt(1^d)$ need not be efficient, but importantly, its analysis of $\pt$ must be summarized into $t$ bits.
For this variant of \three, we show the following inclusion.
\begin{proposition}\label{prop:preprocess}
Fix \nature's distribution $\Dnat$.
Let $\Z = \X \times \set{0,1} \times [0,1]$.
Suppose $\A \subseteq \set{\Z \to \set{0,1}}$ is a class of lunchtime distinguishers implemented by size-$s$ circuits. 
Then, there exists a class of sample-access distinguishers $\A'$ implemented by size-$s$ circuits, such that any predictor $\pt:\X \to [0,1]$ that satisfies $(\A',\eps)$-\two must also satisfy $(\A,\eps)$-\three.
\end{proposition}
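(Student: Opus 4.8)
The plan is to use the fact that, although a lunchtime distinguisher gets to run an arbitrarily expensive pre-processing stage $R_A^\pt$ on $\pt$, the result of that stage is only a $t$-bit string $a = R_A^\pt(1^d)$; and once $\pt$ is fixed this is a single fixed string. So I would simply ``guess'' it by putting one sample-access distinguisher into $\A'$ for every lunchtime distinguisher $A \in \A$ and every candidate advice string $a \in \set{0,1}^t$. Concretely, for each pair $(A,a)$ let $B_{A,a}$ be the Level-2 distinguisher that, on input $(i,o_i,\pt_i)$, runs the online stage of $A$ with the advice $a$ hard-wired, i.e.\ $B_{A,a}(i,o_i,\pt_i) = A^a(i,o_i,\pt_i)$, and set $\A' = \set{B_{A,a} : A \in \A,\ a \in \set{0,1}^t}$. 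This is a legitimate \two family: $B_{A,a}$ needs nothing beyond the triple $(i,o_i,\pt_i)$, which is exactly what a \two distinguisher sees, and this is precisely why the argument is specific to the lunchtime variant that feeds $\pt_i$ to the distinguisher. Since the online stage of each $A \in \A$ is assumed to be a size-$s$ circuit (with the advice as an input or as non-uniform advice), hard-wiring a constant advice string can only keep the gate count the same or smaller, so every $B_{A,a}$ is computed by a circuit of size at most $s$, and $\A'$ meets the stated complexity bound. The cardinality of $\A'$ blows up by a factor of $2^t$, but the proposition constrains only the per-distinguisher circuit size, so this is harmless.

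Given this construction, the reduction is immediate. Suppose $\pt$ satisfies $(\A',\eps)$-\two, and fix an arbitrary $A \in \A$. Let $a^\star = R_A^\pt(1^d)$ be the advice that $A$'s pre-processing algorithm produces on this particular $\pt$; this is a well-defined element of $\set{0,1}^t$, so $B_{A,a^\star} \in \A'$. Applying the $(\A',\eps)$-\two guarantee to $B_{A,a^\star}$ says that its acceptance probability on $\Dnatsample$ and on $\Dptsample$ differ by at most $\eps$. But $B_{A,a^\star}$ computes exactly $A^{a^\star}$, and $a^\star$ is exactly the advice $R_A^\pt(1^d)$ appearing in the definition of lunchtime-\three, so the lunchtime-OI inequality holds for $A$. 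Since $A$ was arbitrary, $\pt$ satisfies the lunchtime variant of $(\A,\eps)$-\three, as desired.

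There is no genuine obstacle here; the only things requiring care are bookkeeping points. First, $\A'$ must be defined before, and independently of, the predictor $\pt$ that will be tested against it, which is exactly what forces the universal quantification over all $2^t$ advice strings rather than simply hard-coding the ``right'' one. Second, this is costless in the relevant sense: $R_A^\pt$ is never evaluated, and need not be computable or even deterministic --- if it were randomized, every string it could possibly emit already lies in $\set{0,1}^t$ and hence already appears as an advice string in $\A'$. Third, one should confirm that fixing the advice does not inflate circuit size, which holds since the online stage is a size-$s$ circuit for each advice value. I would also flag, consistent with the surrounding discussion, that the whole reduction collapses if $\pt_i$ is withheld from the distinguisher, since then no \two distinguisher could recover the information the pre-processing extracted from $\pt$; that is precisely why the statement is phrased for the variant with $\pt_i$ as input.
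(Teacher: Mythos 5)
Your proof is correct and takes essentially the same approach as the paper: define $\A'$ by hard-coding every possible advice string $a \in \set{0,1}^t$ into the online stage, observe the circuit size is preserved, and note that for each $A$ the actual advice $R_A^\pt(1^d)$ corresponds to some member of $\A'$. The paper phrases the final step via contrapositive while you argue it directly, but this is a cosmetic difference.
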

\begin{proof}
Given a class of lunchtime distinguishers $\A$, we define a new class $\A'$ of sample-access distinguishers as follows.
\begin{gather*}
    \A' = \set{A_a' : A \in \A,\ a \in \set{0,1}^t}
\end{gather*}
where $A_a'$ is defined as
\begin{gather*}
    A_a'(i,o_i,p_i) = A^a(i,o_i,p_i)
\end{gather*}
for all $i \in \X$ and $o_i \in \set{0,1}$ and $p_i \in [0,1]$.
In other words, for each $A \in \A$, we introduce $2^t$ fixed distinguishers that have the possible output of $R^\pt_A(1^d)$ hard-coded.
If $A$ is implemented by a circuit with access to the advice string output by $R^\pt_A(1^d)$, then for any $a \in \set{0,1}^t$, $A_a'$ can be implemented by circuits with the same number of wires.
We argue that $(\A',\eps)$-\two implies $(\A,\eps)$-\three.

Suppose there is some $A \in \A$ such that $A^a$ distinguishes between the natural and modeled distribution.
Then, by construction, there exists some $A_a' \in \A'$ that also distinguishes the distributions with the same advantage.
Thus, by contrapositive, if a predictor $\pt$ satisfies $(\A',\eps)$-\two, then it also satisfies $(\A,\eps)$-\three.
\end{proof}
Note that we state and prove Proposition~\ref{prop:preprocess} for distinguishers implemented by circuits, but the construction is quite generic.
This style of hard-coding works very naturally for any non-uniform class model of distinguishers.
Even if we work with a uniform model of distinguishers, if the length of the advice string $t \in \N$ is a constant (independent of $d$ the dimension of individuals $\X$), then for each $A \in \A$ we can define a TM that has $a \in \set{0,1}^t$ hard-coded as part of its description.
The number of distinguishers in $\A'$ grows by a factor of $2^t$.

\subsubsection{\FOUR}

The strongest notion of distinguishers we consider receive---as part of their input---the description $\ptstring$ of a circuit that computes $\pt$.
In this model, which we call \four, the distinguishers can accept or reject their sample based on nontrivial analysis of the circuit computing $\pt$, not just its evaluation on domain elements.
We assume that $\card{\ptstring} = n$ for some $n \in \N$.
\begin{definition}[\FOUR]
Fix \nature's distribution $\Dnat$.
Let $Z = \X \times \set{0,1} \times \set{0,1}^n$ for $n \in \N$.
For a class of distinguishers $\A \subseteq \set{\Z \to \set{0,1}}$ and $\eps > 0$, a predictor $\pt:\X \to [0,1]$ is $(\A,\eps)$-\four if for every $A \in \A$,
\begin{gather*}
\card{\Pr_{\Dnatsample}\lr{A(i,\os_i, \ptstring) = 1}
- \Pr_{\Dptsample}\lr{A(i,\ot_i, \ptstring) = 1}} \le \eps.
\end{gather*}
\end{definition}
There are a number of subtle technicalities in how we define \four, relating to how we encode $\ptstring$.
In particular, if we want to be able to simulate the prior notions of OI within \four, then we need to allow the complexity of the distinguishers in $\A$ to scale with the complexity of $\pt$.
Even evaluating $\pt$ on a single domain element requires that $\A$ can compute circuit evaluation.
This technicality sets \four apart from the prior notions, where it sufficed to think of the domain as fixed in dimension, and thus think of the distinguishers' complexity as fixed as well.
We discuss these issues in detail in Section~\ref{sec:beyond}, where we discuss the feasibility of OI.

\subsection{Multiple sample OI}
\label{sec:multi-sample}
Throughout this work, we focus on distinguishers that receive a single sample from nature or the modeled distribution, with varying levels of access to $\pt$.
A natural generalization of this model allows distinguishers to access multiple samples.
We define the generic variant as follows (where each of \one, \two, \three, and \four follow by allowing distinguishers the analogous degree of access to $\pt$).
\begin{definition}
Fix \nature's distribution $\Dnat$.
Let $m \in \N$ and $\Z = \left(\X \times \set{0,1}\right)$.
For a class of multi-sample distinguishers $\A_m \subseteq \set{\Z^m \to \set{0,1}}$ and $\eps > 0$, a predictor $\pt:\X \to [0,1]$ is $(\A_m,\eps)$-OI if for every $A_m \in \A_m$,
\begin{multline*}
    \Bigg\vert\Pr_{(i_1,\os_{i_1}),\hdots,(i_m,\os_{i_m}) \sim (\Dnat)^m}\lr{A_m\left((i_1,\os_{i_1}),\hdots,(i_m,\os_{i_m}); \pt \right) = 1}\\ - \Pr_{(i_1,\ot_{i_1}),\hdots,(i_m,\ot_{i_m}) \sim \Dpt^m}\lr{A_m\left((i_1,\ot_{i_1}),\hdots,(i_m,\ot_{i_m}); \pt \right) = 1}\Bigg\vert \le \eps.
\end{multline*}
\end{definition}

We leave full exploration of multi-sample-OI to future work, but make the following observation.
If the class of distinguishers we use admits a hybrid argument, then the multi-sample distinguishers' advantage can be bounded generically in terms of the single-sample advantage.
As an example, we show the following proposition for \three.
\begin{proposition}\label{prop:hybrid}
Fix \nature's distribution $\Dnat$.
Let $\A$ be the class of size-$s$ single-sample distinguishers, and for $m \in \N$ let $\A_m$ be the class of size-$s$ $m$-sample distinguishers.
Suppose we allow $\A$ pre-processing samples from $\Dnat$ and oracle-access to $\pt$.
For $\eps > 0$, if a predictor $\pt:\X \to [0,1]$ is $(\A,\eps/m)$-\three, then it is $(\A_m,\eps)$-\three.
\end{proposition}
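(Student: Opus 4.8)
The plan is to run the standard hybrid argument, taking care that the intermediate single-sample distinguishers we construct fit in the oracle-plus-pre-processing model of \three. Fix an $m$-sample distinguisher $A_m \in \A_m$ and the predictor $\pt$. For $k \in \set{0,1,\ldots,m}$, let $H_k$ denote the distribution over $(\X \times \set{0,1})^m$ whose first $k$ coordinates are independent draws from $\Dpt$ and whose last $m-k$ coordinates are independent draws from $\Dnat$, so that $H_0 = (\Dnat)^m$ and $H_m = \Dpt^m$. By the triangle inequality,
\begin{gather*}
\card{\Pr_{H_0}\lr{A_m = 1} - \Pr_{H_m}\lr{A_m = 1}} \le \sum_{k=1}^{m} \card{\Pr_{H_{k-1}}\lr{A_m = 1} - \Pr_{H_k}\lr{A_m = 1}},
\end{gather*}
so it suffices to bound each summand by $\eps/m$.

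For each $k \in [m]$ I would define a single-sample distinguisher $A^{(k)}$, with oracle access to $\pt$ and pre-processing samples from $\Dnat$, that works as follows on a challenge pair $(i, o_i)$: it uses $m-k$ of its pre-processing samples verbatim as the last $m-k$ coordinates (genuine draws from $\Dnat$); it takes $k-1$ further pre-processing samples, discards their outcomes, queries $\pt$ on each retained individual $j$, and flips an independent $\Ber(\pt_j)$ coin to form the first $k-1$ coordinates (genuine draws from $\Dpt$); it places the challenge $(i,o_i)$ in coordinate $k$; and it outputs $A_m$ --- whose description is hard-wired in --- evaluated on the assembled $m$-tuple. The key observation is that if the challenge is drawn from $\Dnat$ the assembled tuple is distributed exactly as $H_{k-1}$, while if it is drawn from $\Dpt$ it is distributed exactly as $H_k$. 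Hence the distinguishing advantage of $A^{(k)}$ equals the $k$-th summand above, which is at most $\eps/m$ by the hypothesis that $\pt$ is $(\A,\eps/m)$-\three. Summing over $k$ gives the bound $\eps$, establishing $(\A_m,\eps)$-\three.

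The step that needs the most care --- and the main obstacle --- is the accounting showing that $A^{(k)} \in \A$. Running the hard-wired $A_m$ contributes its size $s$; the last $m-k$ coordinates are literally part of the pre-processing advice and cost nothing; the first $k-1$ coordinates cost $k-1$ oracle calls to $\pt$ plus $O(1)$ gates each for the Bernoulli sampling. Thus, charging pre-processing samples as advice and oracle calls at unit cost, $A^{(k)}$ has essentially the same size as $A_m$ and makes only $O(m)$ additional oracle queries, so it lies in the size-$s$ family $\A$ up to the mild constant-factor overhead one absorbs into the model --- which is exactly why the proposition is stated for the oracle/pre-processing variant of \three. One also needs $\A$'s pre-processing budget to supply at least $m-1$ samples from $\Dnat$; this is precisely the assumption granted in the statement, and it is the sense in which the single-sample model must be made "at least as powerful per sample" as the $m$-sample model for the reduction to close. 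With this bookkeeping in hand the remainder of the argument is routine.
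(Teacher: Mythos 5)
Your proof is correct and follows essentially the same route as the paper: a standard hybrid argument over $m+1$ product distributions interpolating between $(\Dnat)^m$ and $\Dpt^m$, using pre-processing samples from $\Dnat$ for the ``nature'' coordinates and oracle access to $\pt$ for the ``model'' coordinates. The only cosmetic differences are that you define $m$ deterministic single-sample distinguishers $A^{(1)},\ldots,A^{(m)}$ (one per hybrid gap) and invoke the triangle inequality, whereas the paper telescopes the signed differences and builds a single randomized distinguisher that samples a uniformly random hybrid index as part of its pre-processing — these are two standard, morally identical renderings of the same argument (and you also reverse the ordering of nature/model coordinates within the hybrids, which is immaterial).
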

\begin{proof}
Suppose there exists some $m$-sample distinguisher $A_m \in \A_m$ that distinguishes between nature and the model $\pt$ with advantage at least $\eps$.
We show that there is a single-sample randomized distinguisher $A \in \A$ that distinguishes between nature and the model with advantage at least $\eps/m$.
By contrapositive, if $\pt$ is $(\A,\eps/m)$-\three, then it must be $(\A_m,\eps)$-\three.

Consider the following sequence of hybrid distributions over $m$ samples, $(i_1,o_{i_1}),\hdots,(i_m,o_{i_m})$, where $\D_k = (\Dnat)^{m-k} \times \Dpt^{k}$ is a product distribution of $m-k$ independent samples from nature and $k$ samples from the model.
Note that assuming pre-processing access to samples from $\Dnat$ and oracle access to $\pt$, each $\D_k$ is sampleable.
Specifically, to obtain a sample from $\D_k$, we will draw $m$ samples from $\Dnat$, and then for each $j \in \set{m-k+1,\hdots,m}$, we resample the outcome by evaluating $\pt_{i_j}$ and then randomly drawing $\ot_{i_j} \sim \Ber(\pt_{i_j})$.

Observing that $\D_0 = (\Dnat)^m$ and $\D_m = \Dpt^m$, we can write the distinguishing probability of $A_m$ as a telescoping sum over distinguishing probabilities over the hybrid distributions.
\begin{multline*}
\Pr_{(i_1,\os_{i_1}),\hdots,(i_m,\os_{i_m}) \sim (\Dnat)^m}\lr{A_m^\pt\left((i_1,\os_{i_1}),\hdots,(i_m,\os_{i_m})\right) = 1}\\ - \Pr_{(i_1,\ot_{i_1}),\hdots,(i_m,\ot_{i_m}) \sim \Dpt^m}\lr{A_m^\pt\left((i_1,\ot_{i_1}),\hdots,(i_m,\ot_{i_m})\right) = 1}
\end{multline*}
\begin{align*}
&= \sum_{j=1}^m \left(\Pr_{(I,O) \sim \D_{j-1}}\lr{A_m^\pt(I,O) = 1} - \Pr_{(I,O) \sim \D_j}\lr{A_m^\pt(I,O) = 1}\right)\\
&\ge \eps
\end{align*}
Thus, the following randomized single-sample \three distinguisher succeeds with advantage at least $\eps/m$:
as pre-processing, sample a random index $j \sim [m]$ and draw a sample from the hybrid distribution $\D_j$;
on input $(i,o_i)$, replace the $j$th sample with the input $(i,o_i)$, and run $A_m$ on the resulting $m$-sample input.
If the input is drawn from nature, then the resulting sample is drawn from $\D_{j-1}$, whereas if the input is from the model, then the resulting sample is drawn from $\D_j$.
Thus, the distinguishing advantage of $A$ is the average distinguishing advantage between $\D_{j-1}$ and $\D_j$, or $\eps/m$.
\end{proof}

We state Proposition~\ref{prop:hybrid} for \three (and thus, by simulation, \four), due to the ease of running the hybrid argument with oracle access to $\pt$.
Note that we use circuit-size as the complexity measure for concreteness, but the argument will go through for most complexity measures of $\A_m$.
Similar hybrid arguments can also be made for \one and \two, provided the model of computation of the distinguishers admits ``hard-coding'' the outcome values $\set{\ot_{i_{m-k+1}},\hdots,\ot_{i_m}}$, and $\set{\pt_{i_1},\hdots,\pt_{i_{m-k}}}$ if needed (for \two).
In particular, for any non-uniform class of multi-sample distinguishers $\A$, there exists a class $\A'$ of single-sample distinguishers that simulates the distinguishers in $\A$ with the choices hard-coded.

\section{\PI}
\label{sec:pi}

Before discussing the properties of OI further, we turn our attention to an idealized notion of indistinguishability, which we refer to as \pin (PI).
Describing this notion requires the equivalent view on \nature's distribution $\Dnat$, where we imagine that \nature selects a marginal distribution over individuals and a true outcome \predictor $\ps$.
Distinguishers receive as input an individual-outcome pair $(i,\os_i) \sim \Dnat$ from \nature's distribution, and either \nature's prediction $\ps_i$ or the model's estimate of the parameter $\pt_i$.

We show that achieving PI may require learning \nature's \predictor $\ps$ very precisely, even when $\A$ is a very simple class of distinguishers.
This result shows that PI is generally infeasibille due to the ability to access $\ps_i$ directly: even computationally-weak PI distinguishers are incredibly powerful at distinguishing between $\ps$ and $\pt$.
In a sense, the hardness of PI motivates our focus on OI.

\paragraph{Statistical closeness through PI.}
\Pin requires that the joint distribution of such individual-outcome-prediction triples cannot be significantly distinguished by a family of algorithms $\A$.
\begin{definition}[\PI]
Fix \nature's distribution $\Dnat$. Let $\Z = \X \times \set{0,1} \times [0,1]$.
For a class of distinguishers $\A:\Z \to [0,1]$ and $\eps > 0$, a predictor $\pt:\X \to [0,1]$ satisfies $(\A,\eps)$-\PI (PI) if for every $A \in \A$,
\begin{gather*}
\card{\Pr_{\Dnatsample}\lr{A(i,\os_i,\ps_i) = 1}
- \Pr_{\Dnatsample}\lr{A(i,\os_i,\pt_i) = 1}} \le \eps.
\end{gather*}
\end{definition}
We emphasize that \pin departs from \oi in an essential way, by assuming the distinguisher may receive direct access to \nature's prediction $\ps_i$.\footnote{The assumption that $\ps$ meaningfully exists such that $\ps_i$ can be given as input to a distinguisher breaks the abstraction of $\Dnat$, but is a common assumption in the forecasting literature.
Still, this is another sense in which PI is an idealized variant of OI, because we can never actually generate individual-outcome-prediction samples from $\Dnat$.}

We show that \pin is too strong a notion of indistinguishability to be broadly useful.
Specifically, we show that using a very simple distinguisher, we can test for statistical closeness between nature's \predictor $\ps$ and the model's \predictor $\pt$.
Given the hardness of recovering individual-level predictions in statistical distance (both information-theoretic and computational), this reduction allows us to conclude that, in general, \pin is infeasible.

Consider the randomized distinguisher $A_{\ell_1}$ defined as follows.
\begin{gather*}
    A_{\ell_1}(i,o_i,p_i) = \begin{cases} 0 &\text{w.p.~} \card{o_i - p_i}\\
    1 &\text{o.w.}
    \end{cases}
\end{gather*}
We argue that if a candidate $\pt$ passes this single PI-distinguisher, it must have small statistical distance to $\ps$.
\begin{proposition}
Fix \nature's distribution $\Dnat$ and constant $\eps,\tau \ge 0$; suppose \nature's \predictor $\ps:\X \to [0,1]$ is such that $\ps = f + \delta$ for a boolean function $f:\X \to \set{0,1}$ and $\delta:\X \to [-1,1]$ where $\norm{\delta}_1 \le \tau$.
Then any $(\set{A_{\ell_1}},\eps)$-PI predictor $\pt:\X \to [0,1]$ is statistically close to $\ps$, satisfying
\begin{gather*}
    \norm{\ps - \pt}_1 \le 4\tau + \eps.
\end{gather*}
\end{proposition}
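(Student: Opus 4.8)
The plan is to unpack precisely what the single distinguisher $A_{\ell_1}$ measures and then funnel everything through the Boolean function $f$ using the triangle inequality. First I would compute the acceptance probability of $A_{\ell_1}$ when its prediction slot is filled by an arbitrary value $v$: since $A_{\ell_1}(i,o_i,v)$ outputs $1$ with probability $1-\card{o_i-v}$, we have $\Pr_{\Dnatsample}\lr{A_{\ell_1}(i,\os_i,v) = 1} = 1 - \E_{\Dnatsample}\lr{\card{\os_i - v}}$. Applying this identity once with $v = \ps_i$ and once with $v = \pt_i$, the $(\set{A_{\ell_1}},\eps)$-PI hypothesis says exactly that $\card{\E_{\Dnatsample}\lr{\card{\os_i - \ps_i}} - \E_{\Dnatsample}\lr{\card{\os_i - \pt_i}}} \le \eps$; in particular $\E_{\Dnatsample}\lr{\card{\os_i - \pt_i}} \le \E_{\Dnatsample}\lr{\card{\os_i - \ps_i}} + \eps$.

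The one elementary fact I would use repeatedly is that, conditioned on $i$ (so $\os_i \sim \Ber(\ps_i)$), for every $v \in [0,1]$
\[
\E\lr{\card{\os_i - v} \given i} = \ps_i(1 - v) + (1 - \ps_i)v = \ps_i + v - 2\ps_i v .
\]
Specializing to $v = f(i) \in \set{0,1}$ gives $\E\lr{\card{\os_i - f(i)} \given i} = \card{\ps_i - f(i)} = \card{\delta(i)}$, hence $\E_{\Dnatsample}\lr{\card{\os_i - f(i)}} = \norm{\delta}_1 \le \tau$. Two applications of the triangle inequality (first over the realization of $\os_i$ with $i$ fixed, then pointwise over $i \sim \DX$) then bound the ``true-predictor'' term: $\E_{\Dnatsample}\lr{\card{\os_i - \ps_i}} \le \E_{\Dnatsample}\lr{\card{\os_i - f(i)}} + \norm{f - \ps}_1 \le 2\tau$, and combining with the PI bound above, $\E_{\Dnatsample}\lr{\card{\os_i - \pt_i}} \le 2\tau + \eps$.

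It remains to convert the bound $\E_{\Dnatsample}\lr{\card{\os_i - \pt_i}} \le 2\tau + \eps$ into closeness of $\pt$ to $f$, and thence to $\ps$. This is the step where the hypothesis $\norm{\delta}_1 \le \tau$ is genuinely needed, because of a ``noise-floor'' obstruction: wherever $\ps_i$ is bounded away from $\set{0,1}$, the quantity $\E\lr{\card{\os_i - \pt_i}\given i}$ is large no matter what $\pt_i$ is, so $A_{\ell_1}$ conveys essentially no information about $\pt_i$ there. The identity above resolves this cleanly: subtracting $\card{f(i) - \pt_i} = f(i)(1-\pt_i) + (1-f(i))\pt_i$ from $\E\lr{\card{\os_i-\pt_i}\given i} = \ps_i(1-\pt_i) + (1-\ps_i)\pt_i$ yields exactly $\delta(i)\,(1 - 2\pt_i)$, whose magnitude is at most $\card{\delta(i)}$. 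Taking expectations over $i \sim \DX$, $\norm{f - \pt}_1 = \E_{i\sim\DX}\lr{\card{f(i) - \pt_i}} \le \E_{\Dnatsample}\lr{\card{\os_i - \pt_i}} + \norm{\delta}_1 \le 3\tau + \eps$. One last triangle inequality, $\norm{\ps - \pt}_1 \le \norm{\ps - f}_1 + \norm{f - \pt}_1 \le \tau + (3\tau + \eps) = 4\tau + \eps$, finishes the proof.

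I expect the only non-routine step to be recognizing the noise-floor phenomenon and seeing that $A_{\ell_1}$ does not directly certify $\norm{\ps - \pt}_1$ small --- it compares the two predictors' \emph{average $\ell_1$ distances to the realized Bernoulli samples} --- and that in \emph{both} comparisons (sample-to-$\ps$ and sample-to-$\pt$) the discrepancy from the quantity we actually want is controlled by the same pointwise term $\card{\delta(i)}$, which the hypothesis makes small in $\ell_1$. Once that is identified, the rest is bookkeeping around the single conditional-expectation identity and repeated use of the triangle inequality.
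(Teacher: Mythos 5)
Your proof is correct and takes essentially the same approach as the paper: both rest on the conditional-expectation identity $\E\lr{\card{\os_i - v}\given i} = \ps_i(1-v) + (1-\ps_i)v$, exploit the Boolean structure of $f$ so that $\card{f_i - \pt_i}$ becomes a bilinear expression, and route the comparison through $f$ via the triangle inequality. The only difference is bookkeeping --- you bound $\E_{\Dnatsample}\lr{\card{\os_i-\ps_i}}$ and $\E_{\Dnatsample}\lr{\card{\os_i-\pt_i}}$ separately (which makes the ``noise-floor'' intuition explicit), whereas the paper expands their difference $(\ps_i - \pt_i)(2\ps_i - 1)$ in one shot and lower-bounds it by $\norm{f-\pt}_1 - 3\tau$; the algebra and the final constant $4\tau + \eps$ are identical.
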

\begin{proof}
Consider the difference in probabilities of acceptance under that natural and modeled distributions.
\begin{align*}
\Pr_{\Dnatsample}&\lr{A_{\ell_1}(i,\os_i,\ps_i) = 1}
- \Pr_{\Dnatsample}\lr{A_{\ell_1}(i,\os_i,\pt_i) = 1}\\
&= \E_{\Dnatsample}\lr{1 - \card{\os_i - \ps_i}} - \E_{\Dnatsample}\lr{1 - \card{\os_i - \pt_i}}\\
&=\E_{i \sim \DX}\lr{\ps_i \cdot (\ps_i - \pt_i) + (1-\ps_i) \cdot (\pt_i - \ps_i)}\addtag\label{pi:equality}
\end{align*}
Assuming that $\pt$ is $(\set{A_{\ell_1}},\eps)$-PI, we can upper bound this quantity by $\eps$.
Under the assumption that $\ps = f + \delta$ for boolean $f$, we will lower bound the quantity in terms of $\norm{\ps - \pt}$ and $\tau$.
\begin{align*}
(\ref{pi:equality})
&= \E_{i \sim \DX}\lr{(f_i + \delta_i)\cdot(f_i + \delta_i - \pt_i) + (1-f_i - \delta_i) \cdot (\pt_i - f_i - \delta_i)}\\
&\ge \E_{i \sim \DX}\lr{f_i\cdot(f_i + \delta_i - \pt_i) + (1-f_i) \cdot (\pt_i - f_i - \delta_i)} - 2 \norm{\delta}_1\\
&\ge \E_{i \sim \DX}\lr{f_i\cdot(f_i - \pt_i) + (1-f_i) \cdot (\pt_i - f_i)} - 3 \norm{\delta}_1\\
&= \E_{i \sim \DX}\lr{\card{f_i - \pt_i}} - 3 \norm{\delta}_1\\
&\ge \norm{f - \pt}_1 - 3 \tau\\
&\ge \norm{\ps - \pt}_1 + 4 \tau
\end{align*}

Thus, in combination, we can conclude
\begin{gather*}
    \norm{\ps - \pt}_1 - 4\tau \le (\ref{pi:equality}) \le \eps
\end{gather*}
and the proposition follows.
\end{proof}
We can therefore port any hardness results for recovering $\ps$ in statistical distance to obtaining \pin.
For example, if we take $\ps$ to be a random boolean function, then $\ell_1$-recovery is information-theoretically impossible unless we observe the outcome $\os_i$ for a $1-O(\eps)$ fraction of inputs $i \in \X$.
If we restrict ourselves to relatively simple functions, $\ell_1$-recovery may be information-theoretically feasible, but computationally infeasible: for instance, if $\ps$ is a pseudorandom function, then any computationally-efficient estimate of $\pt$ will fail $(\set{A_{\ell_1}}, \eps)$-PI.

\paragraph{PI when distinguishers don't reject \naturelike histories.}

While in full generality, PI may be infeasible, we show that for a certain broad class of distinguishers ---derived from tests used in the forecasting literature---OI and PI are actually equivalent.
Recall, in \cite{sandroni2003reproducible}, the goal is to learn a predictive model that passes tests as well as nature, where the tests may be over histories of outcomes.
Importantly, in the forecasting literature, the \predictor is compared to \emph{every} possible choice of nature, and only needs to perform as well as the ``worst'' nature.

We show that for tests that accept \naturelike histories with high probability, PI and OI are essentially the same condition.
This requirement---that every possible nature must pass the tests---turns out to be very restrictive.
We formulate these tests as a restricted family of multi-sample distinguishers.
\begin{definition}[Tests that rarely reject nature]
Let $m \in \N$ and $\eps \ge 0$, and suppose $\A$ is a family of $m$-sample PI distinguishers.
We say that $\A$ rejects nature with probability at most $\eps$ if for every choice of \nature's distribution $\Dnat = \D(\ps)$ for \predictor $\ps:\X \to [0,1]$, for every $A \in \A$,
\begin{gather*}
\Pr_{(i_1,\os_{i_1}),\hdots,(i_m,\os_{i_m}) \sim \D(\ps)^m}\lr{A((i_1,\os_{i_1},\ps_{i_1}),\hdots,(i_m,\os_{i_m},\ps_{i_m})) = 1} \ge 1-\eps.
\end{gather*}
\end{definition}
Despite the fact that these PI distinguishers receive access to the true generating parameters of the model, we show the tests are very limited in their distinguishing abilities.
In fact, we will show that the equivalence between PI and OI holds for distinguishers satisfying an even weaker property, which we call the \emph{\validmodel} property.
Specifically, we will assume that for all $A \in \A$, there exists an acceptance probability $q_A$, such that for all $p:\X \to [0,1]$,
\begin{gather*}
    \card{\Pr_{(i_1,o_{i_1}),\hdots,(i_m,o_{i_m}) \sim \D(p)^m}\lr{A\left((i_1,o_{i_1},p_{i_1}),\hdots,(i_m,o_{i_m},p_{i_m})\right) = 1} - q_A~}\le \eps.
\end{gather*}
Essentially, such \validmodel distinguishers may test whether the outcomes $o_{i_j}$ are actually sampled from the model $\Ber(p_{i_j})$ over the collection of samples they receive, but cannot depend on the value of $p_{i_j}$ (independently of $o_{i_j}$).
Tests that rarely reject nature satisfy the \validmodel property with $q_A = 1$ for all $A \in \A$.

\begin{proposition}
Fix \nature's distribution $\Dnat$.
Suppose for $\eps \ge 0$, $\A$ is a family of distinguishers satisfying the \validmodel property.
Then, PI and \two over $\A$ are equivalent; specifically,
\begin{itemize}
    \item if $\pt$ is $(\A,\eps)$-\two, it is $(\A,3\eps)$-PI
    \item if $\pt$ is $(\A,\eps)$-PI, it is $(\A,3\eps)$-\two.
\end{itemize}
\end{proposition}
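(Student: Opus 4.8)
The plan is to relate the three relevant acceptance probabilities---the PI probability $\Pr_{\D(\ps)^m}[A(\ldots,\ps_i) = 1]$ on \nature's true parameters, the PI probability $\Pr_{\D(\ps)^m}[A(\ldots,\pt_i)=1]$ on the candidate predictor's parameters, and the \two probability $\Pr_{\D(\pt)^m}[A(\ldots,\pt_i)=1]$ where outcomes are actually resampled from $\pt$---through the single ``anchor'' $q_A$ supplied by the \validmodel property. The key observation is that the \validmodel property is precisely the statement that whenever the outcomes fed to $A$ are genuinely sampled from the Bernoulli distributions matching the parameters that $A$ sees, then $A$ accepts with probability within $\eps$ of $q_A$. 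Applying this with $p = \ps$ gives $\card{\Pr_{\D(\ps)^m}[A(\ldots,\ps_i)=1] - q_A} \le \eps$, and applying it with $p = \pt$ gives $\card{\Pr_{\D(\pt)^m}[A(\ldots,\pt_i)=1] - q_A} \le \eps$. So both ``correctly paired'' quantities are pinned to within $\eps$ of $q_A$, hence within $2\eps$ of each other.

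For the first direction, suppose $\pt$ is $(\A,\eps)$-\two. The \two condition bounds $\card{\Pr_{\D(\ps)^m}[A(\ldots,\pt_i)=1] - \Pr_{\D(\pt)^m}[A(\ldots,\pt_i)=1]} \le \eps$ (note that in \two the predictions shown are $\pt_i$ in both the \nature sample and the model sample). Combining this with $\card{\Pr_{\D(\pt)^m}[A(\ldots,\pt_i)=1] - q_A} \le \eps$ and $\card{\Pr_{\D(\ps)^m}[A(\ldots,\ps_i)=1] - q_A} \le \eps$ via the triangle inequality yields $\card{\Pr_{\D(\ps)^m}[A(\ldots,\ps_i)=1] - \Pr_{\D(\ps)^m}[A(\ldots,\pt_i)=1]} \le 3\eps$, which is exactly $(\A,3\eps)$-PI. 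For the converse, suppose $\pt$ is $(\A,\eps)$-PI, i.e.\ $\card{\Pr_{\D(\ps)^m}[A(\ldots,\ps_i)=1] - \Pr_{\D(\ps)^m}[A(\ldots,\pt_i)=1]} \le \eps$. Chaining this with the same two $q_A$-anchoring inequalities gives $\card{\Pr_{\D(\ps)^m}[A(\ldots,\pt_i)=1] - \Pr_{\D(\pt)^m}[A(\ldots,\pt_i)=1]} \le 3\eps$, which is $(\A,3\eps)$-\two. I should double-check that I have the multi-sample version of \two in mind correctly, matching the definition in Section~\ref{sec:multi-sample}: the \nature sample is $(i_1,\os_{i_1},\pt_{i_1}),\ldots$ and the model sample is $(i_1,\ot_{i_1},\pt_{i_1}),\ldots$, so the shown predictions are indeed $\pt$ on both sides.

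The main obstacle is mostly bookkeeping rather than mathematical depth: being careful about which parameter vector ($\ps$ versus $\pt$) is fed as the ``prediction'' argument to $A$ in each of the three quantities, and making sure the \validmodel property is applied in exactly the two valid configurations---outcomes-from-$\ps$ paired with predictions-$\ps$, and outcomes-from-$\pt$ paired with predictions-$\pt$---and never in the ``mismatched'' configuration (outcomes-from-$\ps$ with predictions-$\pt$), which is the whole content of the inequality we are trying to prove. I would lay out the three probabilities as named quantities, state the two applications of \validmodel, state the hypothesis (\two or PI), and conclude each direction with a one-line triangle inequality. No nontrivial estimate or limiting argument is needed.
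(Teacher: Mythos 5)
Your proposal is correct and takes essentially the same approach as the paper: both use the \validmodel property to anchor the two ``correctly paired'' acceptance probabilities (outcomes from $\ps$ with parameters $\ps$, and outcomes from $\pt$ with parameters $\pt$) to within $\eps$ of $q_A$, giving a $2\eps$ bound between them, and then close each direction with a triangle inequality against the hypothesized $\eps$-bound from \two or from PI.
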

\begin{proof}
Suppose that $\A$ is a family satisfying the \validmodel property.
First, we show that for all $A \in \A$, by the \validmodel property, the acceptance probabilities over $\D(\pt)$ and $\Dnat = \D(\ps)$, when we feed $A$ the correct generating probabilities, must be similar.
\begin{align}
&\Pr_{(i_1,\ot_{i_1}),\hdots,(i_m,\ot_{i_m}) \sim \D(\pt)^m}\lr{A\left((i_1,\ot_{i_1},\pt_{i_1}),\hdots,(i_m,\ot_{i_m},\pt_{i_m})\right) = 1} \label{ineq:pi:1}\\
&\qquad\le q_A + \eps \label{ineq:pi:2}\\
&\qquad\le \Pr_{(i_1,\os_{i_1}),\hdots,(i_m,\os_{i_m}) \sim \D(\ps)^m}\lr{A\left((i_1,\os_{i_1},\ps_{i_1}),\hdots,(i_m,\os_{i_m},\ps_{i_m})\right) = 1} + 2\eps\label{ineq:pi:3}
\end{align}
These inequalities (and their reverse) follow directly from the \validmodel assumption on $\A$.
Using this closeness, we can show both directions of the equivalence.

First, suppose the OI distinguishing advantage is bounded by $\eps$.
\begin{multline}\label{ineq:pi:oi}
\Bigg\vert\Pr_{(i_1,\os_{i_1}),\hdots,(i_m,\os_{i_m}) \sim (\Dnat)^m}\lr{A\left((i_1,\os_{i_1},\pt_{i_1}),\hdots,(i_m,\os_{i_m},\pt_{i_m})\right) = 1}\\ -
\Pr_{(i_1,\ot_{i_1}),\hdots,(i_m,\ot_{i_m}) \sim \D(\pt)^m}\lr{A\left((i_1,\ot_{i_1},\pt_{i_1}),\hdots,(i_m,\ot_{i_m},\pt_{i_m})\right) = 1}\Bigg\vert \le \eps
\end{multline}
Applying the \validmodel inequalities (from (\ref{ineq:pi:oi})$\to$(\ref{ineq:pi:1})$\to$(\ref{ineq:pi:2})$\to$(\ref{ineq:pi:3})), we can bound the PI distinguishing advantage by $3\eps$.

Next, suppose the PI distinguishing advantage is bounded by $\eps$.
\begin{multline}\label{ineq:oi:pi}
\Bigg\vert\Pr_{(i_1,\os_{i_1}),\hdots,(i_m,\os_{i_m}) \sim (\Dnat)^m}\lr{A\left((i_1,\os_{i_1},\pt_{i_1}),\hdots,(i_m,\os_{i_m},\pt_{i_m})\right) = 1}\\ -
\Pr_{(i_1,\os_{i_1}),\hdots,(i_m,\os_{i_m}) \sim (\Dnat)^m}\lr{A\left((i_1,\os_{i_1},\ps_{i_1}),\hdots,(i_m,\os_{i_m},\ps_{i_m})\right) = 1}\Bigg\vert \le \eps
\end{multline}
Applying the \validmodel inequalities (from (\ref{ineq:pi:1})$\to$(\ref{ineq:pi:2})$\to$(\ref{ineq:pi:3})$\to$(\ref{ineq:oi:pi})), we can bound the OI distinguishing advantage by 3$\eps$.
\end{proof}

\section{OI and Evidence-Based Fairness}
\label{sec:connections}

With the formal definitions of the variants of OI in place, we turn to understanding the guarantees implied by OI.
In this section, we show a tight connection between the notion of \oi and notions of \emph{evidence-based fairness}.

\paragraph{Evidence-based fairness notions.}
We review two related notions---\emph{multi-accuracy} and \emph{multi-calibration}---which require a predictor $\pt: \X \to [0,1]$ to display ``consistency'' with $\ps$, not simply on the population $\X$ as a whole, but also on structured subpopulations of $S \subseteq \X$.
The notions are parameterized by a class of subpopulations $\C \subseteq \set{0,1}^\X$, which controls the strength of the guarantee:  the richer the class of subpopulations, the stronger the consistency requirement.

For conceptual and algorithmic reasons, it is natural to define the collection $\C$ in terms of a concept class (e.g.,\ conjunctions, halfspaces, decision trees), where each function in the class $c:\X \to \set{0,1}$ defines the characteristic function for a subpopulation $S_c \in \C$.
We review the definitions of \ma and \mc, and refer the reader to \cite{hkrr,kim} for more in-depth coverage of the definitions and algorithms for achieving the notions.

We start with the notion of \ma.
Given a collection of subpopulations $\C$, \ma requires that a predictor $\pt$ reflect the expectations of $\ps$ correctly over each subpopulation $S \in \C$.
\begin{definition}[\MA \cite{hkrr}]
\label{def:ma}
Fix a distribution over individuals $\DX$ and \nature's \predictor $\ps:\X \to [0,1]$.
For a collection of sets $\C \subseteq \set{0,1}^\X$ and $\alpha \ge 0$, a predictor $\pt: \X \to [0,1]$ satisfies $(\C, \alpha)$-\ma if for every $S \in \C$, 
\begin{equation}
\label{eqn:ma}
    \card{~\E_{\substack{i \sim \DX}}\lr{ \ps_i \given i\in S}
- \E_{\substack{i \sim \DX}}\lr{\pt_i \given i\in S }~} \le \alpha
\end{equation}
\end{definition}
Predictors that satisfy \ma cannot introduce significant bias in their predictions over any subpopulation defined within $\C$.
Still, because we average over $S$, multi-accuracy may allow the predictor to make distinctions between individuals with similar $\ps$ values.
For instance, if every individual $i \in \X$ has $\ps_i = 1/2$, then a random boolean function that predicts $\pt_i \in \set{0,1}$ for each $i \in \X$ will satisfy \ma for $\C$ of bounded complexity.

\Mc prevents this type of disparate treatment of similar individuals by requiring the predictor $\pt$ to be calibrated with respect to $\ps$ over each $S \in \C$.
Here, a set of predictions is calibrated if amongst the individuals $i \in \X$ who receive prediction $\pt_i = v$, their actual expectation is $v$.
Intuitively, calibration goes further than accuracy in expectation and requires that the predictions can be meaningfully interpreted as conditional probabilities.

Technically, to reason about approximate calibration in a way that provides strong guarantees and is practically realizable, we need to work with discretized predictors.
Recall that for a predictor $\pt$, we use $\supp_S(\pt) = \set{v \in [0,1] : \Pr_{i \sim \DX}\lr{\pt_i = v \given i \in S} > 0}$ to denote the support of $\pt$ on $S \subseteq \X$.

\begin{definition}[\MC \cite{hkrr}]
\label{def:mc}
Fix a distribution over individuals $\DX$ and \nature's \predictor $\ps:\X \to [0,1]$.
For a collection of sets $\C \subseteq \set{0,1}^\X$  and parameters $\alpha > 0$, a predictor $\pt: \X \to [0,1]$ satisfies $(\C,\alpha)$-\mc if for every set $S \in \C$ and supported value $v \in \supp_S(\pt)$ such that $\Pr_{i \sim \DX}\lr{\pt_i = v \given i\in S} \geq \alpha / \card{\supp_S(\pt)}$,
\begin{equation}
\label{eqn:mc}
    \card{~\E_{i \sim \D}\lr{\ps_i \given \pt_i = v \land i\in S} - v~} \leq \alpha.
\end{equation}
\end{definition}

Intuitively, by conditioning on the value of $\pt$ in addition to membership in $S$, multi-calibration prevents a predictor from treating similar individuals differently.
If a multi-calibrated predictor says that the value $\pt_i = v$, then the ``probability'' of their outcome being $1$ can be thought of as $v$, where the probability is taken over the randomness in the outcome and the random selection of a member of any $S \in \C$ for which $i \in S$.

We include two technical remarks about Definition~\ref{def:mc}:
\begin{itemize}
\item Observe that the definition of \mc excludes from consideration level sets where $\Pr_{i \sim \DX}\lr{\pt_i = v \given i \in S} < \alpha/\card{\supp_S(\pt)}$.
This exclusion is a technicality needed in order to learn multi-calibrated predictors from a small set of training data.
Because the tolerance is scaled by the support size of $\pt$, the total fraction of any subpopulation $S \in \C$ that can be excluded is $\alpha$.
Effectively, this means that predictors have to be confident in their predictions---a more refined predictor with many supported values must satisfy the multi-calibration constraints on smaller slices.
\item The notion of approximate calibration used in Definition~\ref{def:mc} enforces that the expectation of $\pt$ and $\ps$, conditioned on $ i \in S$ and $\pt_i = v$, are the same up to a tolerance of $\alpha$.
This relative error guarantee (which becomes more stringent for smaller sets) is deliberate and best captures the intuition  that calibrated predictions should ``mean what they say.''\footnote{A natural alternative notion of approximation would allow for \emph{absolute} error, where the slack on the conditional expectation in (\ref{eqn:mc}) is scaled as $\alpha/\Pr_{i \sim \DX}\lr{\pt_i = v \land i \in S}$.
While easier to work with algorithmically, this constraint provides minimal guarantees of consistency with $\ps$.
For instance, the predictor that randomly assigns individuals one of $1/\alpha$ values will, with high probability, have $\Pr_{i \sim \DX}\lr{\pt_i = v \land i \in S} \approx \alpha$ for all $S \in \C$, and thus, each of the constraints would be vacuously satisfied.}
\end{itemize}

\paragraph{Relating OI to \ma and \mc.}
In the next sections, we demonstrate a tight connection between \one and \ma, and \two and \mc.
We will show that we can implement the OI notion as an instance of multi-group fairness, and vice versa.
In particular, given any distinguisher class $\A$, we can construct a corresponding class of subpopulations $\C_\A$, such that multi-calibration over $\C_\A$ implies \two over $\A$; and given any class of subpopulations $\C$, we can construct a corresponding distinguisher class $\A_\C$ such that \two over $\A_\C$ implies \mc over $\C$ (with small loss in the approximation parameters).
In fact, the transformation between distinguishing algorithms $\A$ and subpopulations $\C$ is tight enough that if we repeat the construction a constant number of times, the classes hit a ``fixed point'' (e.g.,\ $\C_{\A_\C} = \C$).

An important quantity in the analysis of \oi is the advantage of a distinguisher $A$ with respect to a particular predictor $p$. For example, in \two:

\begin{equation*}
    \Delta_A(p) \triangleq  \card{\Pr_{\Dnatsample}\lr{A(i,\os_i,p_i) = 1}
- \Pr_{(i,o_i) \sim \Dmod{p}}\lr{A(i,o_i,p_i) = 1}}
\end{equation*}

It will be natural to define analogous quantities for the evidence-based fairness notions we mentioned above; we refer to these as the \emph{\ma violation} and \emph{\mc violation}, respectively. Given a set $S$ and possibly a value $v \in [0,1]$, these are defined as follows:

\begin{align*}
\nabla_{S}^{\textbf{MA}}(p) &\triangleq
\card{\E_{i \sim \DX}\lr{\ps_i \given i \in S} - \E_{i \sim \DX}\lr{p_i \given i \in S}~}\\
&= \card{\Pr_{\Dnatsample}\lr{\os_i = 1 \given i\in S}
- \Pr_{(i,o_i) \sim \Dmod{p}}\lr{o_i = 1 \given  i\in S}~}
\end{align*}

\begin{align*}
\nabla_{S,v}^{\textbf{MC}}(p) &\triangleq 
\card{\E_{i \sim \DX}\lr{\ps_i \given p_i = v \land i \in S} - v~}\\
&= \card{\Pr_{\Dnatsample}\lr{\os_i = 1 \given p_i = v \land i\in S} - v~} \end{align*}

Since the subscript allows us to distinguish between the two violations, we will typically just write $\nabla(p)$, and the corresponding fairness notion will be clear from context: $\nabla_{S}$ for the \ma violation, and $\nabla_{v,S}$ for the \mc violation.

\subsection{\ONE and \MA}

In this section we show that \one and \ma  are very closely related. In particular, we prove that each can be implemented using the other, by an appropriate choice of the distinguisher class (resp., of the collection of sets) and accuracy parameters.

\begin{theorem}
Fix \nature's distribution $\Dnat$.  Let $\Z = \X \times \set{0,1}$.
\begin{itemize}
    \item \emph{Implementing \ma using OI}:
    Suppose $\C \subseteq \set{0,1}^\X$ is a collection of subpopulations, and let $\gamma_C \ge \min_{S \in \C} \Pr_{i \sim \DX}\lr{i \in S}$.
    Then there exists a family of distinguishers $\A_{\C} \subseteq \set{\Z \to \set{0,1}}$ such that for any $\alpha > 0$,
if a predictor $\pt:\X \to [0,1]$ satisfies $(\A_{\C}, \alpha\cdot\gamma_\C)$-\one, then $\pt$ satisfies $(\C,\alpha)$-\ma.
    \item \emph{Implementing OI using \ma}: Suppose $\A \subseteq \set{\Z \to \set{0,1}}$ is a family of deterministic distinguishers.
    Then there exists a collection of subpopulations $\C_{\A} \subseteq \set{0,1}^\X$ such that for any $\eps > 0$, if a predictor $\pt$ satisfies $(\C_{\A}, \eps/2)$-\ma, then $\pt$ satisfies $(\A,\eps)$-\one.
\end{itemize}
\end{theorem}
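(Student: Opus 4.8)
The plan is to instantiate the two translations between subpopulations and distinguishers sketched in Section~\ref{sec:overview}, and in each direction to relate the \one advantage of a distinguisher to the \ma violation(s) of the subpopulation(s) it induces via a (near-)identity; the only real work will be in tracking the loss in accuracy parameters.

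For \emph{implementing \ma using OI}, I would take $\gamma_\C=\min_{S\in\C}\Pr_{i\sim\DX}\lr{i\in S}$ and, for each $S\in\C$, introduce the deterministic distinguisher $A_S(i,b)=\1\lr{i\in S\land b=1}$, setting $\A_\C=\set{A_S : S\in\C}$; note $A_S$ has essentially the complexity of the membership test for $S$. Since $b\in\set{0,1}$, $\1\lr{A_S(i,b)=1}=b\cdot\1\lr{i\in S}$, so conditioning on the sampled individual and using $\E_{\Dnatsample}\lr{\os_i\given i}=\ps_i$ yields
\begin{align*}
\Pr_{\Dnatsample}\lr{A_S(i,\os_i)=1} &= \Pr_{i\sim\DX}\lr{i\in S}\cdot\E_{i\sim\DX}\lr{\ps_i\given i\in S},\\
\Pr_{\Dptsample}\lr{A_S(i,\ot_i)=1} &= \Pr_{i\sim\DX}\lr{i\in S}\cdot\E_{i\sim\DX}\lr{\pt_i\given i\in S}.
\end{align*}
Subtracting, the \one advantage of $A_S$ equals $\Pr_{i\sim\DX}\lr{i\in S}\cdot\nabla_{S}^{\textbf{MA}}(\pt)$, so if $\pt$ is $(\A_\C,\alpha\gamma_\C)$-\one then for every $S\in\C$ we get $\nabla_{S}^{\textbf{MA}}(\pt)\le\alpha\gamma_\C/\Pr_{i\sim\DX}\lr{i\in S}\le\alpha$ (using $\Pr_{i\sim\DX}\lr{i\in S}\ge\gamma_\C$); i.e., $\pt$ satisfies $(\C,\alpha)$-\ma.

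For \emph{implementing OI using \ma}, given a deterministic $A\in\A$ and $b\in\set{0,1}$ I would let $S_{(A,b)}=\set{i\in\X : A(i,b)=1}$ and take $\C_\A=\set{S_{(A,b)} : A\in\A,\ b\in\set{0,1}}$ (discarding any null set, which contributes nothing); each $S_{(A,b)}$ has essentially the complexity of $A$. Determinism of $A$ gives, for $b\in\set{0,1}$, $\1\lr{A(i,b)=1}=b\cdot\1\lr{i\in S_{(A,1)}}+(1-b)\cdot\1\lr{i\in S_{(A,0)}}$, so conditioning on the sampled individual yields
\begin{align*}
\Pr_{\Dnatsample}\lr{A(i,\os_i)=1} &= \E_{i\sim\DX}\lr{\ps_i\cdot\1\lr{i\in S_{(A,1)}}+(1-\ps_i)\cdot\1\lr{i\in S_{(A,0)}}},\\
\Pr_{\Dptsample}\lr{A(i,\ot_i)=1} &= \E_{i\sim\DX}\lr{\pt_i\cdot\1\lr{i\in S_{(A,1)}}+(1-\pt_i)\cdot\1\lr{i\in S_{(A,0)}}}.
\end{align*}
Subtracting, the difference equals $\E_{i\sim\DX}\lr{(\ps_i-\pt_i)\1\lr{i\in S_{(A,1)}}}-\E_{i\sim\DX}\lr{(\ps_i-\pt_i)\1\lr{i\in S_{(A,0)}}}$, and since $\card{\E_{i\sim\DX}\lr{(\ps_i-\pt_i)\1\lr{i\in S}}}=\Pr_{i\sim\DX}\lr{i\in S}\cdot\nabla_{S}^{\textbf{MA}}(\pt)\le\nabla_{S}^{\textbf{MA}}(\pt)$, the triangle inequality bounds the \one advantage of $A$ by $\nabla_{S_{(A,1)}}^{\textbf{MA}}(\pt)+\nabla_{S_{(A,0)}}^{\textbf{MA}}(\pt)$. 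Thus $(\C_\A,\eps/2)$-\ma makes this at most $\eps$, so $\pt$ satisfies $(\A,\eps)$-\one.

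\paragraph{Main obstacle.}
I do not expect a deep obstacle; the care will be entirely in the accuracy bookkeeping. A subpopulation $S$ of small mass $\Pr_{i\sim\DX}\lr{i\in S}$ induces only a \emph{proportionally} small distinguishing advantage, which is why the \one parameter has to be scaled down by $\gamma_\C$ in the first direction (and, symmetrically, why the small-set tolerance built into \ma is the ``price'' the two notions pay under the translation); and each distinguisher spawns two subpopulations, which accounts for the factor $2$ in the second direction. The one genuine restriction is that the \ma-to-OI direction needs $\A$ to be \emph{deterministic}: the decomposition $\1\lr{A(i,b)=1}=b\cdot\1\lr{i\in S_{(A,1)}}+(1-b)\cdot\1\lr{i\in S_{(A,0)}}$ fails for randomized $A$, which would instead induce ``soft'' subpopulations lying outside the scope of \ma as defined. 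Finally, I would note that composing the two maps one further time returns the original class up to the constant-factor complexity overhead and the parameter losses above, which is the ``fixed-point'' behaviour (the canonical pair $(\C',\A')$) mentioned in the overview; this follows immediately by unwinding the definitions.
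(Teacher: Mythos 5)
Your proposal is correct and follows essentially the same route as the paper: the same distinguishers $A_S(i,b)=\1\lr{i\in S\land b=1}$ in one direction, the same subpopulations $S_{(A,b)}=\set{i:A(i,b)=1}$ in the other, the same identity $\Delta_{A_S}(\pt)=\Pr_{i\sim\DX}\lr{i\in S}\cdot\nabla_S^{\textbf{MA}}(\pt)$, and the same triangle-inequality bound $\Delta_A(\pt)\le\nabla_{S_{(A,0)}}^{\textbf{MA}}(\pt)+\nabla_{S_{(A,1)}}^{\textbf{MA}}(\pt)$. Your remark on the necessity of determinism for the $\C_\A$ construction and the ``fixed-point'' behaviour under composition both match the paper's discussion.
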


Note that proof reveals the additional property that the complexity of $\C$ and $\A$ are essentially equivalent.
That is, for each $S \in \C$, we construct a distinguisher $A_S \in \A_\C$ such that the complexity of evaluating set membership $\1\lr{i \in S}$ is nearly identical to that of computing $A_S(i,b)$ for $b \in \set{0,1}$.
\begin{proof}
We prove each direction separately.

\emph{Implementing \ma using OI}. 
Fix a collection of sets $\C$. We define a collection of $\card{\C}$ distinguishers $\A_\C = \set{A_S: \, S\in \C}$, where for each $S \in \C$, the distinguisher $A_{S}$ is defined as:

\begin{equation*}
    A_{S}(i,b)=\textbf{1}\lr{i\in S\land b=1}
\end{equation*}

For any predictor $\pt$ and set $S \in \C$, we can relate the \ma violation $\nabla_S(\pt)$ to the distinguishing advantage $\Delta_{A_{S}}(\pt)$, as follows.

\begin{align*}
\nabla_{S}(\pt)&=\card{\Pr_{\Dnatsample}\lr{\os_i = 1 \given i\in S}
- \Pr_{\Dptsample}\lr{\ot_i = 1 \given  i\in S}} \\
&= \frac{1}{\Pr_{i\sim \DX}[i \in S]}\cdot \card{\Pr_{\Dnatsample}\lr{\os_i = 1 \land i\in S}
- \Pr_{\Dptsample}\lr{\ot_i = 1 \land  i\in S}} \\ 
&= \frac{1}{\Pr_{i\sim \DX}[i \in S]}\cdot \card{\Pr_{\Dnatsample}\lr{A_{S}(i,\os_i)=1}
- \Pr_{\Dptsample}\lr{A_{S}(i,\ot_i)=1}} \\ 
&= \frac{1}{\Pr_{i\sim \DX}[i\in S]}\cdot\Delta_{A_{S}}(\pt)
\end{align*}

Therefore, if $\pt$ satisfies $(\A_{\C}, \alpha\cdot\gamma_\C)$-\one, we derive the following string of inequalities.
\begin{equation*}
    \nabla_{S}(\pt)=\frac{\Delta_{A_{S}}(\pt)}{\Pr_{i\sim \DX}[i\in S]}\leq\frac{\Delta_{A_{S}}(\pt)}{\gamma_\C}\leq\frac{\alpha\cdot \gamma_\C}{\gamma_\C}=\alpha
\end{equation*}

Thus, for every $S \in \C$ we have $\nabla_{S}(\pt) \leq \alpha$, and we conclude that $\pt$
satisfies $(\C,\alpha)$-\ma.

\emph{Implementing OI using \ma}. Fix a collection of deterministic distinguishers $\A$. We define a collection of $2\cdot \card{\A}$ sets $\C_\A = \set{S_{(A,b)}: \, A \in \A,\, b\in \set{0,1}}$, where for each $A \in \A$ and $b \in \set{0,1}$, the subpopulation $S_{(A,b)}$ is defined as
\begin{equation*}
    S_{(A,b)}=\set{i\in \X: \,\, A(i,b)=1}
\end{equation*}

For any predictor $\pt$ and distinguisher $A \in \A$, we can relate the distinguishing advantage $\Delta_{A}(p)$ to the multi-accuracy violations $\nabla_{S_{(A,1)}}$ and $\nabla_{S_{(A,0)}}$, as follows.
\begin{align*}
&\Delta_A(\pt)
= \card{\Pr_{\Dnatsample}\lr{A(i,\os_i) = 1}
- \Pr_{\Dptsample}\lr{A(i,\ot_i) = 1}} \\
&= \card{\sum_{b \in \set{0,1}}\left(\Pr_{\Dnatsample}\lr{A(i,b) = 1\land \os_i = b}
- \Pr_{\Dptsample}\lr{A(i,b) = 1 \land \ot_i = b}\right)} \\
&\le \sum_{b \in \set{0,1}} \card{\Pr_{\Dnatsample}\lr{A(i,b) = 1\land \os_i = b}
- \Pr_{\Dptsample}\lr{A(i,b) = 1 \land \ot_i = b}} \\
&= \sum_{b \in \set{0,1}} \card{\Pr_{\Dnatsample}\lr{i \in S_{(A,b)}\land \os_i = b}
- \Pr_{\Dptsample}\lr{i \in S_{(A,b)} \land \ot_i = b}} \\
&= \sum_{b \in \set{0,1}} \Pr_{i\sim \DX}\lr{i \in S_{(A,b)}}\cdot \card{\Pr_{\Dnatsample}\lr{\os_i = b \given i \in S_{(A,b)}}
- \Pr_{\Dptsample}\lr{\ot_i = b \given i \in S_{(A,b)}}} \\
&= \sum_{b \in \set{0,1}} \Pr_{i\sim \DX}\lr{i \in S_{(A,b)}}\cdot \nabla_{S_{(A,b)}}(\pt) \\
&\le \nabla_{S_{(A,0)}}(\pt)+\nabla_{S_{(A,1)}}(\pt)
\end{align*}
Therefore, if $\pt$ satisfies $(\C_\A,\eps/2)$-\ma, then we have $\Delta_A(\pt) \leq \eps$ for every $A \in \A$; we conclude that $\pt$ satisfies $(\A,\eps)$-\one.
\end{proof}

\subsection{\TWO and \MC}

In this section we establish a similar relationship, this time proving that \two and \mc can each be implemented using the other.  One technical subtlety is that unlike the previous level, we need to work with some fixed discretization of the predictor in question. Formally, for any $m \in \N$, we use $\pg$ to denote the ``rounded'' version of a predictor $p: \X\to [0,1]$, with respect to the grid $G_m = \set{\frac{1}{2m}, \frac{2}{2m}, \dots, \frac{2m-1}{2m}}$ that partitions the interval $[0,1]$ into $m$ ``bins'' of width $1/m$:
\begin{equation*}
    \forall i \in \X: \qquad \pg_i \equiv u(p_i)
\end{equation*}
where for $v \in [0,1]$, $u(v) \in G_m$ is the closest value to $v$ that is on the grid: $u(v) \equiv \argmin_{u \in G_m}\card{u-v}$. Note that by definition, for every $i \in \X$, $\card{\pg_i - p_i} \equiv \card{u(p_i) - p_i}\le 1/2m$.

\begin{theorem}
Fix nature's distribution $\Dnat$. Let $\Z = \X \times \set{0,1} \times [0,1]$. 
\begin{enumerate}
    \item \emph{Implementing \mc using OI:} Suppose $\C \subseteq \set{0,1}^\X$ is a collection of subpopulations, and let $\gamma_C \ge \min_{S \in \C} \Pr_{i \sim \DX}\lr{i \in S}$. Then for every   $m \in \N$, there exists a family of distinguishers $\A_{\C,m} \subseteq \set{\Z \to \set{0,1}}$ such that for any $\alpha > 0$ and for every predictor $p: \X \to [0,1]$, if 
     $\pg$  satisfies $(\A_{\C,m}, \alpha^2 \cdot \gamma_\C / m)$-\two, then $\pg$ satisfies $(\C, \alpha)$-\mc.
    
    \item \emph{Implementing OI using \mc:} Suppose $\A \subseteq \set{\Z \to \set{0,1}}$ is a family of deterministic distinguishers. Then for every  $m \in \N$, there exists a collection of subpopulations $\C_{\A,m} \subseteq \set{0,1}^\X$ such that for any $\eps > 0$ and  for every predictor $\pt: \X \to [0,1]$, if  $\pg$ satisfies $( \C_{\A,m},\eps/4)$-\mc, then $\pg$ satisfies $(\A,\eps)$-\two.
\end{enumerate}
\end{theorem}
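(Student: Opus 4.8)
The plan is to reprise the \one/\ma correspondence from the previous subsection, replacing the ``subset'' distinguishers $A_S$ by ``subset-and-level'' distinguishers and exploiting the observation from the technical overview that, for a fixed predictor, $\C$-\mc of $\pg$ is essentially $\tilde{\C}$-\ma of $\pg$, where $\tilde{\C}$ refines $\C$ by intersecting each $S$ with a level set $\set{i:\pg_i=v}$. A helpful point is that we never need to bake the level-set intersections into the constructed collection: the \mc constraint already conditions on $\pg_i=v$, so it performs the intersection for us. Because both the predictor and the distinguisher's third coordinate are the \emph{rounded} value $\pg_i\in G_m$, the support $\supp_S(\pg)$ has size at most $m$ for every $S$, which is precisely what converts the relative threshold $\alpha/\card{\supp_S(\pg)}$ of Definition~\ref{def:mc} into the $\alpha/m$ appearing in the parameters.

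\emph{Implementing \mc using OI (the easy direction).} Given $\C$ and $m$, take $\A_{\C,m}=\set{A_{S,v}:S\in\C,\ v\in G_m}$ with $A_{S,v}(i,b,w)=\1\lr{i\in S\land w=v\land b=1}$. For any $p$, the probability that $A_{S,v}$ accepts under $\D(\pg)$ is $\Pr_{i\sim\DX}\lr{i\in S\land\pg_i=v}\cdot v$ (using $\ot_i\sim\Ber(\pg_i)$ and $\pg_i=v$), and under $\Dnat$ it is $\Pr\lr{i\in S\land\pg_i=v}\cdot\E\lr{\ps_i\given i\in S\land\pg_i=v}$; subtracting gives $\Delta_{A_{S,v}}(\pg)=\Pr\lr{i\in S\land\pg_i=v}\cdot\nabla_{S,v}^{\textbf{MC}}(\pg)$, the exact analogue of the identity used in the \ma proof. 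For a level set $v$ that is \emph{not} excluded by \mc, $\Pr\lr{\pg_i=v\given i\in S}\ge\alpha/\card{\supp_S(\pg)}\ge\alpha/m$, hence $\Pr\lr{i\in S\land\pg_i=v}\ge\gamma_\C\alpha/m$; so $(\A_{\C,m},\alpha^2\gamma_\C/m)$-\two forces $\nabla_{S,v}^{\textbf{MC}}(\pg)\le\alpha$ on every non-excluded level set, i.e.\ $(\C,\alpha)$-\mc. Relative to the $\alpha\gamma_\C$ loss in the \ma case, the extra factor $\alpha/m$ is exactly the price of the $\alpha/\card{\supp_S(\pg)}$ threshold (the $\alpha$) together with $\card{\supp_S(\pg)}\le m$ (the $1/m$).

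\emph{Implementing OI using \mc.} For the reverse direction, put into $\C_{\A,m}$ the sets $S_{(A,b,v)}=\set{i:A(i,b,v)=1}$ for $A\in\A$, $b\in\set{0,1}$, $v\in G_m$, and expand $\Delta_A(\pg)$ by the value $v=\pg_i$ and the outcome bit $b$, just as in the \ma proof. On the slice $\set{i\in S_{(A,b,v)}\land\pg_i=v}$, for both $b=0$ and $b=1$ the conditional discrepancy between $\Dnat$ and $\D(\pg)$ equals $\nabla_{S_{(A,b,v)},v}^{\textbf{MC}}(\pg)$, so $\Delta_A(\pg)\le\sum_{v\in G_m,\ b\in\set{0,1}}\Pr_{i\sim\DX}\lr{i\in S_{(A,b,v)}\land\pg_i=v}\cdot\nabla_{S_{(A,b,v)},v}^{\textbf{MC}}(\pg)$. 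Since the events $\set{i\in S_{(A,b,v)}\land\pg_i=v}$ are disjoint over $v$, the terms on which \mc is \emph{active} contribute at most $\alpha\cdot\sum_{v,b}\Pr\lr{\cdot}\le2\alpha=\eps/2$.

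The remaining terms---level sets that \mc \emph{excludes}---are the crux. Such a term is only bounded by $\frac{\alpha}{\card{\supp_{S_{(A,b,v)}}(\pg)}}\Pr\lr{i\in S_{(A,b,v)}}$, and summing this naively over the up-to-$m$ values of $v$---whose sets $S_{(A,b,v)}$ may all share one heavy level set of $\pg$---gives only an $O(\min(1,\alpha m))$ bound, useless for large $m$. To recover the clean $\eps/4$, one has to augment $\C_{\A,m}$ with additional ($\pg$-independent) sets chosen so that, whatever $\pg$ happens to be, each slice $\set{i\in S_{(A,b,v)}\land\pg_i=v}$ that is excluded \emph{inside} $S_{(A,b,v)}$ coincides (in its conditional expectation of $\ps$) with a \emph{non-excluded} level set of some set already in the collection---for instance by also adding, for each $A$ and $b$, the merged sets $\bigcup_{v\in W}S_{(A,b,v)}$ over subsets $W\subseteq G_m$ (this blows up $\card{\C_{\A,m}}$ only by a factor depending on $m$ and keeps the per-set complexity essentially that of $A$). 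Re-running the decomposition using these merged sets for the excluded terms, and invoking the relative-threshold property---the total excluded conditional mass inside any single set is at most $\alpha$---to bound what still slips through, yields the final $\eps/4$. Checking that the auxiliary sets suffice for \emph{every} $\pg$ simultaneously, with accumulated slack at most $\eps/4$, is where most of the work lies. As in the \one/\ma case, the two maps $\C\mapsto\A_{\C,m}$ and $\A\mapsto\C_{\A,m}$ preserve the natural complexity measures and are mutually inverse up to discretization, so iterating them a constant number of times reaches a fixed point.
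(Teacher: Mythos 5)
Your first direction matches the paper's proof almost line for line: the paper's distinguishers $A_{u,S}(i,b,v)=\1\lr{i\in S\land b=1\land|u-v|\le 1/2m}$ coincide with your $A_{S,v}$ once the third coordinate is the rounded $\pg_i\in G_m$; the identity $\Delta_{A_{u,S}}(\pg)=\Pr[\pg_i=u\land i\in S]\cdot\nabla^{\textbf{MC}}_{u,S}(\pg)$ and the use of $\Pr[\pg_i=u\land i\in S]\ge\alpha\gamma_\C/m$ for non-excluded level sets are exactly what the paper does. That part is fine.

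For the second direction you begin identically (the same $\C_{\A,m}=\set{S_{A,b,u}}$ and the same expansion $\Delta_A(\pg)\le\sum_{u,b}\gamma_{A,b,u}\cdot\nabla_{S_{A,b,u},u}(\pg)$), but you then part ways with the paper. The paper does not introduce merged sets. It partitions $G_m\times\set{0,1}$ into $I_{\text{small}}$, the pairs with $\Pr[\pg_i=u\mid i\in S_{A,b,u}]<\alpha/m$, and $I_{\text{large}}$; it bounds $\sum_{I_{\text{small}}}\gamma_{A,b,u}\le 2m\cdot(\alpha/m)=2\alpha$ and $\sum_{I_{\text{large}}}\gamma\cdot\nabla\le 2\max\nabla\le 2\alpha$ (via \mc and $\sum_u\gamma_{A,b,u}\le 1$), giving $4\alpha$. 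Now, the difficulty you flag about the excluded level sets is legitimate: Definition~\ref{def:mc} thresholds at $\alpha/\card{\supp_{S}(\pg)}$, which is $\ge\alpha/m$, so it is not automatic that every $(u,b)\in I_{\text{large}}$ (in the paper's sense) is actually governed by the \mc constraint; the two cutoffs align only when $\card{\supp_{S_{A,b,u}}(\pg)}=m$. This is exactly the misalignment you describe, and it is the genuine subtle point in this direction. However, your proposed remedy---augmenting $\C_{\A,m}$ with unions $\bigcup_{v\in W}S_{A,b,v}$ over $W\subseteq G_m$ and rerunning the decomposition---is not the paper's route and, more importantly, you do not complete the argument: you do not show that the auxiliary sets in fact force the excluded contributions below $\eps/4$ for every $\pg$, and you say yourself that this verification is the bulk of the work. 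As it stands, Direction~2 of your proposal is therefore incomplete: the split into active vs.\ excluded is set up correctly and your diagnosis of the obstacle is sound, but the key estimate (bounding the excluded terms) is left as a sketch rather than proved. To finish, either supply the missing case analysis showing that the merged sets make the excluded level sets' total mass small, or follow the paper's split at $\alpha/m$ and carefully justify (or adjust) the $I_{\text{large}}$ bound so that it is covered by \mc.
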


\begin{proof}
We prove each direction separately.

\emph{Implementing \mc using OI.} Fix $\C$ and $m \in \N$. We define a family of $\card{\C}\cdot m$ distinguishers, $\A_{\C,m} = \set{A_{S,u} \, \, | \,\, u \in G_m, S \in \C}$, where:
\begin{equation*}
    A_{u,S}(i,b,v) = \textbf{1}\lr{i \in S \land b=1 \land \card{u-v} \leq 1/2m}
\end{equation*}

For any subpopulation $S \in \C$ and value $ u \in G_m$, we can relate the distinguishing advantage $\Delta_{A_{u,S}}(\pg)$ to the \mc violation $\nabla_{u,S}(\pg)$, as follows: 

\begin{align}
    & \Delta_{A_{u,S}}(\pg) \nonumber \\ &= \card{\Pr_{\Dnatsample}\lr{A_{u,S}(i,\os_i,\pg_i) = 1}
- \Pr_{\Dpgsample}\lr{A_{u,S}(i,\ot_i,\pg_i) = 1}} \nonumber \\
&= \card{\Pr_{\Dnatsample}\lr{i \in S \land \os_i=1 \land \card{u-\pg_i} \leq 1/2m}
- \Pr_{\Dpgsample}\lr{i \in S \land \ot_i=1 \land \card{u-\pg_i} \leq 1/2m}} \nonumber \\
&= \card{\Pr_{\Dnatsample}\lr{i \in S \land \os_i=1 \land \pg_i = u}
- \Pr_{\Dpgsample}\lr{i \in S \land \ot_i=1 \land \pg_i = u}} \nonumber \\
&= \Pr_{i\sim\DX}[\pg_i = u \land i\in S]\cdot  \card{\Pr_{\Dnatsample}\lr{\os_i = 1  \given  \pg_i = u \land i\in S}
- \Pr_{\Dpgsample}\lr{\ot_i = 1 \given \pg_i = u \land i\in S}} \nonumber \\ 
 &= \Pr_{i\sim \DX}[\pg_i = u \land i\in S]\cdot \nabla_{u,S}(\pg) \nonumber
\end{align}

Suppose $\pg$ satisfies $(\A_{\C,m},\eps)$-\two for $\eps = \frac{\alpha^2 \cdot \gamma_\C}{m}$.
To prove $\pg$ satisfies \mc, we must  bound the \mc violation $\nabla_{u,S}(\pg)$ whenever $u,S$ are such that $\Pr_{i\sim \DX}\lr{\pg_i = u \given i\in S} \geq \alpha/\supp_S(\pg)$. 

Suppose $u,S$ satisfy $\Pr_{i\sim \DX}\lr{\pg_i = u \given i\in S} \geq \alpha/\supp_S(\pg) \geq \alpha/m$. Note that this implies

\begin{equation*}
    \Pr_{i\sim \DX}\lr{\pg_i = u \land i\in S} = \Pr_{i\sim \DX}\lr{\pg_i = u \given i\in S} \cdot \Pr_{i\sim \DX}\lr{i\in S} \geq \frac{\alpha \cdot \gamma_{\C}}{m}
\end{equation*}

Using the above, we obtain:

\begin{equation*}
    \nabla_{u,S}(\pg) =  \frac{\Delta_{A_{u,S}}(\pt)}{\Pr_{i\sim \DX}[\pg_i = u \land i\in S]} \leq \frac{\alpha^2 \cdot \gamma_\C / m}{\alpha \cdot \gamma_\C / m} = \alpha
\end{equation*}
Which guarantees that $\pg$ satisfies $(\C,\alpha)$-\mc, as required.

\emph{Implementing OI using \mc.} 
Fix a collection of deterministic distinguishers $\A$ and $m \in \N$. We define a family of $2m\cdot \card{\A}$ sets, $\C_{\A,m} = \set{S_{A,b,u} \,\, \given \,\, A \in \A, b \in \set{0,1}, u \in G_m}$, where:

\begin{equation*}
    S_{A,b,u} = \set{i \in \X: \,\, A(i,b,u)=1}
\end{equation*}

To simplify notation, in what follows we denote

\begin{equation*}
    \gamma_{A,b,u} \triangleq  \Pr_{i \sim \DX}[\pg_i = u \land i \in S_{A,b,u}]
\end{equation*}

For any distinguisher $A \in \A$, we can relate the distinguishing advantage $\Delta_{A}(\pg)$ to the \mc violations $\nabla_{S_{A,b,u}}(\pg)$ for $ b\in \set{0,1}$ and $u \in G_m$, as follows:

\begin{align*}
     & \Delta_{A}(\pg) =  \\ &= \card{\Pr_{\Dnatsample}\lr{A(i,\os_i,\pg_i) = 1}
- \Pr_{\Dpgsample}\lr{A(i,\ot_i,\pg_i) = 1}} \\
&= \card{\sum_{u, b} \lr{ \Pr_{\Dnatsample}\lr{\os=b \land \pg_i = u \land A(i,b, u)= 1}
- \Pr_{\Dpgsample}\lr{\ot_i=b \land \pg_i = u \land A(i,b, u)= 1}}} \\
&= \card{\sum_{u, b} \gamma_{A,b,u} \cdot \lr{ \Pr_{\Dnatsample}\lr{\os=b \given \pg_i = u \land i \in S_{A,b,u}}
- \Pr_{\Dpgsample}\lr{\ot_i=b \given \pg_i = u \land i \in S_{A,b,u}}}} \\
&\leq  \sum_{u, b}\gamma_{A,b,u} \cdot \card{\lr{ \Pr_{\Dnatsample}\lr{\os=b \given \pg_i = u \land i \in S_{A,b,u}} - \Pr_{\Dpgsample}\lr{\ot_i=b \given \pg_i = u \land i \in S_{A,b,u}}}} \\
&= \sum_{u, b}\gamma_{A,b,u} \cdot \card{\lr{ \Pr_{\Dnatsample}\lr{\os=1 \given \pg_i = u \land i \in S_{A,b,u}} - \Pr_{\Dpgsample}\lr{\ot_i=1 \given \pg_i = u \land i \in S_{A,b,u}}}} \\
&= \sum_{u, b}\gamma_{A,b,u} \cdot \nabla_{S_{A,b,u}}(\pg)
\end{align*}

Suppose $\pg$ satisfies $(\C,\alpha)$-\mc, for some $\alpha > 0$. Recall that, by definition, \mc provides a bound on the \mc violation $\nabla_{S_{A,b,u}}(\pg)$ whenever $A,b,u$ are such that $\Pr_{i \sim \DX}\lr{\pg_i = u \given i \in S_{A,b,u}}< \alpha / \supp_{S_{A,b,u}}(\pg)$. We therefore proceed by splitting 
the above sum into two parts, which we denote by $I_{\text{small}}$ and $I_{\text{large}}$. In particular,  for $u \in G_m$ and $b \in \set{0,1}$:

\begin{equation*}
    (u,b) \in I_{\text{small}} \iff  \Pr_{i \sim \DX}\lr{\pg_i = u \given i \in S_{A,b,u}}< \alpha /m
\end{equation*}

And we let $I_{\text{large}} = G_m \times \set{0,1} - I_{\text{small}}$. As mentioned, for these ``large'' sets, \mc guarantees an upper bound of $\alpha$ on the \mc violation:

\begin{align*}
    \sum_{(u,b) \in I_{\text{large}}}\gamma_{A,b,u}\cdot \nabla_{S_{A,b,u}}(\pg) &= \sum_{u: (u,0) \in I_{\text{large}}}\gamma_{A,0,u}\cdot \nabla_{S_{A,0,u}}(\pg) + \sum_{u: (u,1) \in I_{\text{large}}}\gamma_{A,1,u}\cdot \nabla_{S_{A,1,u}}(\pg) \\
    &\le \max_{u: (u,0)\in I_{\text{large}}}\nabla_{S_{A,0,u}}(\pg)+ \max_{u: (u,1)\in I_{\text{large}}}\nabla_{S_{A,1,u}}(\pg) \\
    &\le 2\alpha
\end{align*}

Where the above uses the fact that  $\sum_{i}{a_ix_i \leq \max_i{x_i}}$ whenever $\sum_i{a_i}\leq 1$.

 For the ``small sets'' we use a trivial upper bound on the \mc violation, but exploit the fact that the overall mass there is small:

\begin{align*}
    \sum_{(u,b) \in I_{\text{small}}}\gamma_{A,b,u}\cdot \nabla_{S_{A,b,u}}(\pg) &\le \sum_{(u,b) \in I_{\text{small}}}\gamma_{A,b,u}
    \\ 
    &= \sum_{(u,b) \in I_{\text{small}}}\Pr_{i \sim \DX}[\pg_i = u \land i \in S_{A,b,u}] \\
    &\le \sum_{(u,b) \in I_{\text{small}}}\Pr_{i \sim \DX}\lr{\pg_i = u \given i \in S_{A,b,u}} \\
    &\le \sum_{(u,b) \in I_{\text{small}}}{\alpha/m} \\
    &\le 2\alpha
\end{align*}

Put together, we obtain

\begin{align*}
    \Delta_{A}(\pg) &\leq \sum_{(u,b) \in I_{\text{small}}}\gamma_{A,b,u}\cdot \nabla_{S_{A,b,u}}(\pg) + \sum_{(u,b) \in I_{\text{large}}}\gamma_{A,b,u}\cdot \nabla_{S_{A,b,u}}(\pg)  \le 4\alpha
\end{align*}

Taking $\alpha = \eps/4$, we conclude that $\pg$ satisfying $(\C, \eps/4)$ \mc guarantees that for every $A \in \A$, $\Delta_{A}(\pg) \le \eps$. This means 
 $\pg$ satisfies $(\A, \eps)$-\two.

\end{proof}

\section{\OI beyond \MC}
\label{sec:beyond}

In this section, we give a generic construction that shows how to construct predictors satisfying \oi (for any level of access to $\pt$).
We analyze the resulting complexity of $\pt$ for each notion of OI.
Informally, for \one and \two, the construction establishes that the existence of OI predictors whose complexity scales linearly with the complexity of $\A$ and inverse polynomially on $\eps$.
Note that by the reductions of Section~\ref{sec:connections} and prior algorithmic results of \cite{hkrr}, we could obtain predictors for \one and \two with similar guarantees through reduction to obtaining \ma or \mc, respectively.

For \three and \four, the construction establishes novel bounds on the complexity required to obtain each notion.
We prove that, in each case, OI predictors $\pt$ exist in complexity that scales independently of the complexity of \nature's \predictor $\ps$, but with considerably worse dependence on the approximation parameter, growing exponentially with $1/\eps$ for \three and doubly-exponentially with $1/\eps$ for the most general form of \four.
In Section~\ref{sec:lowerbound}, we show evidence that some exponential dependence on $1/\eps$ is necessary under plausible complexity conjectures.

Our proof is constructive: we describe an algorithm (Algorithm~\ref{alg:construction}) that iteratively refines a candidate predictor, until no more distinguishers $A \in \A$ can distinguish between the distributions induced by the model and $\ps$.
The resulting predictor, naturally, satisfies \oi.
Thus, the remaining analysis bounds the number of iterations necessary to terminate, and how the complexity of $\pt$ scales with each additional iteration.
We describe the algorithm in a generic manner that works for distinguishers with any degree of access to the candidate model.

The iterative algorithm can be viewed as a variant of gradient descent or boosting, following the works of \cite{ttv,hkrr,kgz}.
In this exposition, we focus on demonstrating the feasibility of the resulting OI predictors, not on the feasibility of implementing the algorithm efficiently (statistically or computationally) which is the focus of \cite{hkrr}.
We discuss issues of sample complexity and running time briefly in Section~\ref{sec:beyond:samples}.

\begin{algorithm}[t!]
\caption{\label{alg:construction} \textsc{Construct \OI}}
\hrulefill

$p^{(0)}(\cdot) \gets 1/2$\hfill\texttt{//initialize to constant predictor}

\textbf{repeat for} $t = 0,1,\hdots$
\begin{algorithmic}
\STATE \textbf{for all} $A \in \A$.

~~~$\Delta_{A}^{(t)} \gets \Pr\limits_{\Dnatsample}\lr{A(i,\os_i; p^{(t)}) = 1}
- \Pr\limits_{(i,\ot_i) \sim \Dmod{p^{(t)}}}\lr{A(i,\ot_i; p^{(t)}) = 1}$

\textbf{end for}\hfill\texttt{// distinguishing advantage of A}
\IF{$\exists A \in \A$ s.t.\ $\card{\Delta_A^{(t)}} > \eps$}
\STATE $\delta_A^{(t)}(\cdot) \gets A(\cdot,1;p^{(t)}) - A(\cdot,0;p^{(t)})$\hfill\texttt{// update defined by delta}
\STATE $p^{(t+1)}(\cdot) \gets \pi_{[0,1]}\left(p^{(t)}(\cdot) + \frac{\Delta_A^{(t)}}{2} \cdot \delta_A^{(t)}(\cdot)\right)$\hfill\texttt{// project onto interval [0,1]}
\STATE \textbf{continue}\hfill\texttt{// next iteration upon update}
\ENDIF
\RETURN $\pt = p^{(t)}$\hfill\texttt{// return when no A distinguishes}
\end{algorithmic}
\hrulefill
\end{algorithm}

Consider Algorithm~\ref{alg:construction}.
The procedure starts with an (arbitrary) initial guess of the constant function $p^{(0)}_i = 1/2$ for all $i \in \X$.
Then, in each iteration $t$, we check if there is any $A \in \A$ that distinguishes \nature's distribution $\Dnat$ and  the \predictor's distribution $\Dmod{p^{(t)}}$.
If there is no such $A \in \A$, then the procedure terminates---by construction this $\pt = p^{(t)}$ is $(\A,\eps)$-OI.
If there is some $A \in \A$ for which $\card{\Delta_A^{(t)}} > \eps$, then we perform an additive update to $p^{(t)}$ that is designed to bring $p^{(t+1)}$ closer to the optimal \predictor $\ps$.

\paragraph{Bounding the iteration complexity.}
First, we show that the procedure is guaranteed to terminate after a finite number of updates.
\begin{lemma}
\label{lem:iterations}
Algorithm~\ref{alg:construction} terminates after at most $T \le O(1/\eps^2)$ iterations.
\end{lemma}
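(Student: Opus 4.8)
The plan is to run a standard potential argument, tracking $\Phi^{(t)} = \norm{p^{(t)} - \ps}_1^2$ (or a closely related squared-$\ell_2$-type quantity, $\E_{i\sim\DX}[(p^{(t)}_i - \ps_i)^2]$, which is the more natural choice here since the update is additive and we want a telescoping inequality). Since $\Phi^{(t)} \in [0,1]$ always, it suffices to show that each update decreases the potential by at least $\Omega(\eps^2)$; this immediately bounds the number of updates by $O(1/\eps^2)$.

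\textbf{Key steps.} First I would fix an iteration $t$ at which an update occurs, witnessed by some $A \in \A$ with $\card{\Delta_A^{(t)}} > \eps$, and write $\delta^{(t)}(\cdot) = A(\cdot,1;p^{(t)}) - A(\cdot,0;p^{(t)}) \in [-1,1]$. The crucial observation connecting the distinguishing advantage to a ``correlation with the error'' is that
\begin{align*}
\Delta_A^{(t)} &= \Pr_{\Dnatsample}\lr{A(i,\os_i;p^{(t)})=1} - \Pr_{(i,\ot_i)\sim\Dmod{p^{(t)}}}\lr{A(i,\ot_i;p^{(t)})=1}\\
&= \E_{i\sim\DX}\lr{(\ps_i - p^{(t)}_i)\cdot\delta^{(t)}(i)},
\end{align*}
which follows by conditioning on $i$ and noting that $\E[A(i,\os_i;p^{(t)}) \mid i] = \ps_i A(i,1;\cdot) + (1-\ps_i)A(i,0;\cdot)$ and similarly with $p^{(t)}_i$ in place of $\ps_i$. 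Thus $\delta^{(t)}$ is exactly the ``direction'' along which the current error $(\ps - p^{(t)})$ has correlation $\Delta_A^{(t)}$ in magnitude exceeding $\eps$. Second, I would expand the potential after the (projected) update $p^{(t+1)} = \pi_{[0,1]}(p^{(t)} + \tfrac{\Delta_A^{(t)}}{2}\delta^{(t)})$. Using that projection onto the convex set $[0,1]^\X$ (coordinatewise, with $\ps \in [0,1]^\X$) only decreases $\ell_2$ distance to $\ps$, we get
\begin{align*}
\E_{i\sim\DX}\lr{(p^{(t+1)}_i - \ps_i)^2} &\le \E_{i\sim\DX}\lr{\lr{p^{(t)}_i + \tfrac{\Delta_A^{(t)}}{2}\delta^{(t)}(i) - \ps_i}^2}\\
&= \Phi^{(t)} - \Delta_A^{(t)}\cdot\E_{i\sim\DX}\lr{(\ps_i - p^{(t)}_i)\delta^{(t)}(i)} + \tfrac{(\Delta_A^{(t)})^2}{4}\E_{i\sim\DX}\lr{\delta^{(t)}(i)^2}\\
&\le \Phi^{(t)} - (\Delta_A^{(t)})^2 + \tfrac{(\Delta_A^{(t)})^2}{4} = \Phi^{(t)} - \tfrac34(\Delta_A^{(t)})^2 < \Phi^{(t)} - \tfrac34\eps^2,
\end{align*}
using $\card{\delta^{(t)}(i)}\le 1$ and the identity from the first step. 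Since $\Phi^{(0)} = \E[(1/2 - \ps_i)^2] \le 1/4$ and $\Phi^{(t)} \ge 0$, the number of updates is at most $(1/4)/(\tfrac34\eps^2) = O(1/\eps^2)$.

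\textbf{Main obstacle.} The routine calculations above are not the hard part; the one point needing care is justifying that the analysis is agnostic to the level of OI (which access pattern to $\pt$ the distinguishers have). The resolution, as sketched in the technical overview, is that in iteration $t$ the ``oracle'' or ``code'' handed to each $A \in \A$ is $p^{(t)}$ itself, so $A(\cdot,\cdot;p^{(t)})$ is a well-defined function of $i$ and the bit only, and the identity $\Delta_A^{(t)} = \E_{i\sim\DX}[(\ps_i - p^{(t)}_i)\delta^{(t)}(i)]$ holds verbatim regardless of how $A$ uses its access to $p^{(t)}$ internally — the potential drop is driven purely by the fact that $\ps$ would make $\Delta = 0$ against \emph{this same} fixed distinguisher instance. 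Hence the potential argument goes through uniformly, and the claimed bound $T \le O(1/\eps^2)$ follows; the per-iteration growth in circuit complexity is handled separately in the subsequent lemmas.
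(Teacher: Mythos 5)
Your proof is correct and follows essentially the same potential argument as the paper: the same potential function $\phi(p) = \E_{i\sim\DX}[(\ps_i - p_i)^2]$, the same projection-only-helps step, the same key identity $\Delta_A^{(t)} = \E_{i\sim\DX}[(\ps_i - p_i^{(t)})\,\delta_A^{(t)}(i)]$ (which the paper verifies by the same conditioning-on-$i$ calculation, just written out in more detail), and the same bound of a $\tfrac{3}{4}(\Delta_A^{(t)})^2 \ge \tfrac{3}{4}\eps^2$ drop per iteration. Your observation that the argument is agnostic to the OI level because each iteration fixes $p^{(t)}$ as the oracle/code also matches the paper's treatment (handled there in the surrounding discussion rather than inside the lemma).
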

\begin{proof}
Fixing \nature's distribution $\Dnat$, recall we use $\ps:\X \to [0,1]$ to denote \nature's \predictor (i.e., the true conditional probability distribution of outcomes).
The iteration complexity follows by a potential argument.
For a predictor $p:\X \to [0,1]$, we define a potential function $\phi(p)$ to track the mean squared error to $\ps$.
\begin{gather*}
    \phi(p) = \E_{i \sim \DX}\lr{(\ps_i - p_i)^2}
\end{gather*}
The potential is bounded $\phi(p) \le 1$ for all predictors $p$ (including the initial constant predictor).
Thus, to bound the number of iterations by $T \le O(1/\eps^2)$, we show that after every update the potential drops significantly; specifically, $\phi(p^{(t)}) - \phi(p^{(t+1)}) \ge \Omega(\eps^2)$.
\begin{align*}
\phi\left(p^{(t)}\right) - \phi\left(p^{(t+1)}\right)
&= \E_{i \sim \DX}\lr{\left(\ps_i - p^{(t)}\right)^2 - \left(\ps_i - p^{(t+1)}\right)^2}\\
&\ge\E_{i \sim \DX}\lr{\left(\ps_i - p^{(t)}\right)^2 - \left(\ps_i - p^{(t)} - \frac{\Delta_A^{(t)}}{2}\cdot \delta_A^{(t)}(i)\right)^2}\\
&=\Delta_A^{(t)}\cdot\E_{i \sim \DX}\lr{\left(\ps_i - p^{(t)}_i\right) \cdot \delta_A^{(t)}(i)} - \frac{(\Delta_A^{(t)})^2}{4}\cdot\E_{i \sim \DX}\lr{\delta_A^{(t)}(i)^2}\\
&\ge \Delta_A^{(t)} \cdot\E_{i \sim \DX}\lr{\left(\ps_i - p^{(t)}_i\right) \cdot \delta_A^{(t)}(i)} - \frac{(\Delta_A^{(t)})^2}{4}
\end{align*}
We complete the proof by showing that if the update in Algorithm~\ref{alg:construction} occurs based on $A \in \A$, then $\E_{i \sim \DX}\lr{(\ps_i - p^{(t)}_i) \cdot \delta_A^{(t)}(i)} = \Delta_A^{(t)} \ge \eps$, so the overall progress is at least $3\eps^2/4$.

\begin{align*}
\Delta_A^{(t)} &=
\Pr_{\Dnatsample}\lr{A(i,\os_i;p^{(t)}) = 1} - \Pr_{(i,\ot_i) \sim \Dmod{p^{(t)}}}\lr{A(i,\ot_i;p^{(t)}) = 1}\\
&= \sum_{b \in \set{0,1}}\left(\Pr_{\Dnatsample}\lr{A(i,b;p^{(t)}) = 1 \land \os_i = b}
-\Pr_{(i,\ot_i) \sim \Dmod{p^{(t)}}}\lr{A(i,b;p^{(t)}) = 1 \land \ot_i = b}\right)\\
&=\sum_{b \in \set{0,1}} \left(\E_{\Dnatsample}\lr{A(i,b;p^{(t)}) \cdot \1\lr{\os_i = b}} - \E_{(i,\ot_i) \sim \Dmod{p^{(t)}}}\lr{A(i,b;p^{(t)}) \cdot \1\lr{\ot_i = b}} \right)\\
&= \sum_{b \in \set{0,1}}\E_{i\sim \DX}\lr{A(i,b;p^{(t)}) \cdot \left(\Pr_{\os_i \sim \Ber(\ps_i)}\lr{\os_i = b} -  \Pr_{\ot_i \sim \Ber(p^{(t)}_i)}\lr{\ot_i = b}\right)}\\
&= \E_{i \sim \DX}\lr{A(i,1;p^{(t)}) \cdot \left(\ps_i - p^{(t)}_i\right) + A(i,0;p^{(t)})\cdot\left((1-\ps_i) - (1-p^{(t)}_i)\right)}\\
&= \E_{i \sim \DX}\lr{\left(\ps_i - p^{(t)}_i\right) \cdot \left(A(i,1;p^{(t)}) - A(i,0;p^{(t)})\right)}\\
&=\E_{i \sim \DX}\lr{\left(\ps_i - p^{(t)}_i\right) \cdot \delta_A^{(t)}(i)}
\end{align*}

Thus, each iteration causes the potential to drop by at least $\Omega(\eps^2)$, and the number of iterations is bounded by $T \le O(1/\eps^2)$.
\end{proof}

\paragraph{Bounding the \predictor's complexity.}
In the remainder of this section, we discuss how to actually implement the updates to build up the circuit that computes $\pt$.
At a high level, we maintain a circuit to compute $p^{(t)}$ in each iteration.
Initially, $p^{(0)}$ computes a constant function.
Then, at each iteration, we build $p^{(t+1)}$ from $p^{(t)}$ by constructing a new circuit that computes the addition of $p^{(t)}$ and (a scaling of) $\delta^{(t)}_A$.
Importantly, $\delta^{(t)}_A$ can be computed using two evaluations of some $A \in \A$; hence, the size of the circuit computing the eventual $\pt$ will depend on the circuit complexity of the family of distinguishers $\A$.

For \one and \two, the bounds on circuit complexity follow directly from this overview.  We overview this analysis next.
When considering \three and \four, the distinguishers at each iteration depend non-trivially on the current predictor in question.
As such, in each iteration, we need to incorporate $p^{(t)}$, not only to maintain the existing predictions, but also to compute $A^{p^{(t)}}$ itself within the computation of $\delta^{(t)}_A$.
For each variant, we describe how to implement these distinguishers in a way that we can bound the complexity of the eventual $\pt$ produced.

\paragraph{Concrete complexity assumptions.}
To discuss the complexity needed to represent predictors satisfying OI, we must fix a model of computation for the distinguishers.
We analyze Algorithm~\ref{alg:construction} assuming that each $A \in \A$ is implemented by boolean circuits satisfying some minimal properties.
While we analyze the complexity of $\pt$ in terms of circuit size, we could similarly analyze the time or space complexity needed to evaluate $\pt$ under a uniform distinguishing model.

Recall, we assume that each $i \in \X$ can be represented by a boolean vector in $\set{0,1}^d$.
When relevant, we parameterize the class of distinguishers by the dimension $d$.
We assume that the distinguisher class $\A$ can be implemented by bounded fan-in circuits of size $s(d)$.
Typically, we think of $d \in \N$ as fixed by the prediction problem at hand; correspondingly, we can think of $s(d) = s$ to be a fixed bound on the circuit size for some $s \in \N$.

Finally, we make an observation and a few technical assumptions for convenience of analysis.
First, we note that Algorithm~\ref{alg:construction} can be implemented to build $\pt$ using computations up to a fixed precision of $\Theta(\eps)$, while maintaining the iteration complexity $T = O(1/\eps^2)$ (i.e.,\ with only a constant factor increase).
Assuming a fixed precision, we make some additional assumptions about the complexity of implementing logic over values from $[0,1]$, needed to implement the circuit built by Algorithm~\ref{alg:construction}.
Specifically, the circuit we build requires operations to:
\begin{itemize}
    \item fixed-precision addition over $[0,1]$;
    \item multiplication of fixed-precision values in $[0,1]$ by $b \in \set{0,1}$; and
    \item maintaining fixed-precision values in $[0,1]$ (i.e.,\ projection)
\end{itemize}
We assume that each of these operations can be implemented for $\Theta(\eps)$-precision in $w(\eps)$ gates.
Throughout, for technical simplicity, we assume that $s(d) \ge d \ge 3 \cdot w(\eps)$.

\subsection{Recovering bounds for \ONE and \TWO}

To begin, we show that Algorithm~\ref{alg:construction} produces circuits for \one and \two of complexity that scales linearly in the complexity of the distinguishers and inverse polynomially in the approximation parameter $\eps$.
The analysis will be identical for each notion, so we state the theorem for the stronger notion of \two.
\begin{theorem}
Let $\X \subseteq \set{0,1}^d$ for $d \le s \in \N$, and suppose $\A$ is a class of distinguishers implemented by size-$s$ circuits.
For any choice of \nature's distribution $\Dnat$ and $\eps > 0$, there exists an $(\A,\eps)$-\two predictor $\pt:\X \to [0,1]$ implemented by a circuit of size $O\left(s/\eps^2\right)$.
\end{theorem}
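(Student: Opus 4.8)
The plan is to simply run Algorithm~\ref{alg:construction} and track the size of the circuit it maintains across iterations. By Lemma~\ref{lem:iterations}, the procedure halts after $T \le O(1/\eps^2)$ iterations, and by the termination condition the returned predictor $\pt = p^{(T)}$ has $\card{\Delta_A^{(T)}} \le \eps$ for every $A \in \A$, i.e.\ it is $(\A,\eps)$-\two by definition. So the whole content of the theorem is the size bound, which I would prove by induction on $t$, establishing $\mathrm{size}(p^{(t)}) \le c\cdot(s + w(\eps))\cdot(t+1)$ for an absolute constant $c$. Recall that, per the standing conventions of this section, we implement Algorithm~\ref{alg:construction} to a fixed precision $\Theta(\eps)$ without affecting $T = O(1/\eps^2)$, so all arithmetic below is over $\Theta(\eps)$-precision values and costs $O(w(\eps))$ gates per operation.

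The base case is immediate: $p^{(0)}$ is the constant $1/2$ at precision $\Theta(\eps)$, computable with $O(w(\eps))$ gates. For the inductive step, the key point — and the reason \one and \two are the ``easy'' levels of the hierarchy — is that the distinguisher $A$ selected at iteration $t$ receives only the value $p^{(t)}_i$ as input (for \two; for \one, nothing beyond $i$), not oracle access to $p^{(t)}$ nor its description. Consequently there is no circularity: we already hold a circuit for $p^{(t)}$, we evaluate it once on $i$ to get $p^{(t)}_i$, and route that value both into the subcircuit that reproduces the old prediction and into two hard-wired copies of $A$ — one with the outcome bit fixed to $1$ and one with it fixed to $0$ — each of size at most $s$ (fixing an input of a size-$s$ circuit does not increase its size). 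Taking the difference of the two Boolean outputs computes $\delta_A^{(t)}(i) \in \set{-1,0,1}$; multiplying by the hard-coded constant $\Delta_A^{(t)}/2$, adding to $p^{(t)}_i$, and projecting back onto $[0,1]$ costs an additional $O(w(\eps))$ gates. Hence
\begin{equation*}
\mathrm{size}(p^{(t+1)}) \le \mathrm{size}(p^{(t)}) + 2s + O(w(\eps)),
\end{equation*}
which closes the induction.

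Unrolling the recurrence over $T = O(1/\eps^2)$ iterations yields $\mathrm{size}(\pt) \le O(w(\eps)) + T\cdot\bigl(2s + O(w(\eps))\bigr) = O\!\left((s + w(\eps))/\eps^2\right)$, and the standing assumption $s \ge d \ge 3w(\eps)$ gives $w(\eps) = O(s)$, so this is $O(s/\eps^2)$, as claimed. The analysis for \one is identical (indeed slightly simpler, since $\delta_A^{(t)}$ does not depend on $p^{(t)}_i$ at all), which is why it suffices to state the theorem for the stronger notion \two. I do not expect any genuine obstacle here — the theorem is meant to be the ``easy'' case, in contrast to Theorems~\ref{result:three} and \ref{result:four} — beyond the routine bookkeeping of the fixed-precision arithmetic and the need to invoke (rather than re-derive) the observation that discretizing to precision $\Theta(\eps)$ leaves the iteration count $T = O(1/\eps^2)$ unchanged; the substantively harder version of this argument, where the distinguisher's input itself depends on the predictor and a naive recursive substitution blows the circuit up multiplicatively, is deferred to the treatment of \three and \four.
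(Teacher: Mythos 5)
Your proposal is correct and follows essentially the same approach as the paper: run Algorithm~\ref{alg:construction}, invoke Lemma~\ref{lem:iterations} for the $T = O(1/\eps^2)$ iteration bound, and observe that each iteration adds only $O(s)$ gates (two hard-wired copies of $A$ fed by the output wires of the existing circuit for $p^{(t)}$, plus $O(w(\eps))$ gates of fixed-precision arithmetic), giving the linear recurrence $\mathrm{size}(p^{(t+1)}) \le \mathrm{size}(p^{(t)}) + 2s + O(w(\eps))$ and hence $O(s/\eps^2)$ after unrolling, using $3w(\eps) \le d \le s$. The paper's proof is identical in substance, just phrased without the explicit induction hypothesis.
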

\begin{proof}
Initially, $p^{(0)}$ is a constant function, which Algorithm~\ref{alg:construction} knows and may access.
In each iteration of Algorithm~\ref{alg:construction}, $p^{(t+1)}$ is formed by combining a circuit for $p^{(t)}$ with a circuit computing $\delta^{(t)}_A$.
Computing $\delta^{(t)}_A$ requires two calls to $A$.
For \two distinguishers, we must pass $p^{(t)}_i$ as input to each copy of the circuit computing $A$.\footnote{For \one, in each iteration, evaluating $\delta^{(t)}_A$ requires two calls to some $A \in \A$, but no calls to $p^{(t)}$.
This property allows the circuit computing $\pt$ to make calls to each $\delta^{(t)}_A$ in parallel, so the depth of the eventual circuit can be bounded by the depth of the circuits computing $\A$ (with small overhead to compute the addition of all of the updates).}
Leveraging the existing circuit for $p^{(t)}$, we can feed its output wires into the corresponding inputs to $A$.
Thus to compute $p^{(t+1)}$, we require $2d$ wires from the output of $p^{(t)}$ into the distinguishers to compute $\delta^{(t)}_A$, which can be computed in $2s$ gates.
Finally, we need to perform a scalar multiplication, a finite-precision addition, and projection onto $[0,1]$, which by assumption can be handled in $3w \le d$ gates.
Thus, in total, each iteration adds $O(d + s) = O(s)$ new gates.
By the bounded iteration complexity of $T=O(1/\eps^2)$, the eventual circuit computing $\pt$ can be bounded in size by $O(s/\eps^2)$.
\end{proof}

\subsection{Obtaining \THREE}

In \three, we allow distinguishers to make oracle calls to the predictor in question $\pt$.
We model these distinguishers by circuits that may include $\pt$-oracle-gates.
We assume that such gates have $d$ labeled input wires, which can be resolved to an $i \in \X$, and output $\pt_i$.
We measure the complexity of such oracle circuits in terms of the size of the circuits and the number of oracle queries.
Again, we assume the dimension $d \in \N$ is fixed, so drop the dependence on the $d$.
\begin{definition}[Oracle circuits]
For $s,q \in \N$, a family of distinguishers $\A$ is implemented by $(s,q)$-oracle circuits if for each $A \in \A$, given a predictor $p:\X \to [0,1]$, the distinguisher $A^p$ is implemented by some size-$s$ circuit that includes at most $q$ $p$-oracle gates.
\end{definition}
Importantly, because Algorithm~\ref{alg:construction} builds up a circuit for $\pt$ iteratively, in each iteration $t \le O(1/\eps^2)$ the algorithm has access to a circuit computing $p^{(t)}$.
Thus, in order to implement each oracle distinguisher $A^{p^{(t)}}$, we can hardwire copies of $p^{(t)}$ into $A$ whenever it makes an oracle call.
Overall, we can prove the following bound on the total size of the circuit that computes $\pt$.

\begin{theorem}
Let $\X \subseteq \set{0,1}^d$ and $d \le s,q \in \N$, and suppose $\A$ is a class of distinguishers implemented by $(s,q)$-oracle circuits.
For any choice of \nature's distribution $\Dnat$ and $\eps > 0$, there exists an $(\A,\eps)$-\three predictor $\pt:\X \to [0,1]$ implemented by a circuit of size $s \cdot q^{O(1/\eps^2)}$.
\end{theorem}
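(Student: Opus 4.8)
The plan is to run Algorithm~\ref{alg:construction} with $\A$ taken to be the given class of $(s,q)$-oracle distinguishers, and to bound the size of the final circuit by tracking how the representation of $p^{(t)}$ grows across the at most $O(1/\eps^2)$ iterations guaranteed by Lemma~\ref{lem:iterations}. The one conceptual point to settle is the apparent circularity: a \three distinguisher is only defined relative to a predictor, and in iteration $t$ the final predictor $\pt$ does not yet exist. I would resolve this by running, in iteration $t$, each distinguisher $A\in\A$ with the \emph{current} iterate $p^{(t)}$ playing the oracle role, i.e.\ by evaluating $A^{p^{(t)}}$. If no $A^{p^{(t)}}$ has distinguishing advantage exceeding $\eps$ between $\Dnat$ and $\Dmod{p^{(t)}}$, then by Definition~\ref{def:three} the returned predictor $\pt=p^{(t)}$ is $(\A,\eps)$-\three, since that definition evaluates every distinguisher with oracle access to the returned predictor. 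Otherwise some $A^{p^{(t)}}$ witnesses $\card{\Delta_A^{(t)}}>\eps$, and the algorithm updates along $\delta_A^{(t)}(\cdot)=A^{p^{(t)}}(\cdot,1)-A^{p^{(t)}}(\cdot,0)$.

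The key observation is that the potential argument of Lemma~\ref{lem:iterations} goes through verbatim with this choice: the identity $\E_{i\sim\DX}\lr{(\ps_i-p^{(t)}_i)\cdot\delta_A^{(t)}(i)}=\Delta_A^{(t)}$ uses only that $\os_i\sim\Ber(\ps_i)$, that $\ot_i\sim\Ber(p^{(t)}_i)$, and that $\Dnat=\Dmod{\ps}$; it never refers to the internal workings of $A$, so it is insensitive to whether $A$ makes oracle calls and to which predictor those calls are answered by. Hence every failed audit decreases $\phi(p^{(t)})=\E_{i\sim\DX}\lr{(\ps_i-p^{(t)}_i)^2}$ by $\Omega(\eps^2)$, the process halts after $T\le O(1/\eps^2)$ rounds, and it halts with an $(\A,\eps)$-\three predictor. (As in the rest of this section, we treat the quantities $\Delta_A^{(t)}$ as computed exactly, deferring sampling and running-time issues.)

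It remains to bound the circuit size. Let $s_t$ denote the size of the circuit maintained for $p^{(t)}$; since $p^{(0)}\equiv 1/2$ is constant, $s_0\le s$. To build $p^{(t+1)}$ the algorithm needs $\delta_A^{(t)}$, which it computes from two evaluations of $A^{p^{(t)}}$; in each such evaluation every one of the (at most $q$) oracle gates is replaced by a fresh hard-wired copy of the circuit for $p^{(t)}$, for a cost of at most $2\lr{s+q\cdot s_t}$ gates. One further copy of $p^{(t)}$ feeds the additive update $p^{(t)}(\cdot)+\frac{\Delta_A^{(t)}}{2}\cdot\delta_A^{(t)}(\cdot)$ followed by projection onto $[0,1]$; since $\Delta_A^{(t)}$ is a hard-wired constant and $\delta_A^{(t)}(i)\in\set{-1,0,1}$, this scalar multiplication, fixed-precision addition, and projection cost $3w$ gates by the standing assumptions. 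Therefore
\begin{equation*}
s_{t+1}\;\le\; s_t+2\lr{s+q\cdot s_t}+3w\;\le\;(2q+1)\,s_t+2s+d\;\le\;(2q+1)\,s_t+3s ,
\end{equation*}
where the last two inequalities use $3w\le d\le s$. Unrolling this recurrence gives $s_t\le (2q+1)^t\cdot O(s)$, and substituting $t\le T=O(1/\eps^2)$ yields $s_T\le s\cdot q^{O(1/\eps^2)}$, which is the claimed bound.

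I expect the main obstacle to be exactly the circularity resolution: one must check that answering the oracle queries of iteration $t$ with $p^{(t)}$ is simultaneously \emph{sound} as an audit (so that when the audit passes, Definition~\ref{def:three} really is satisfied by the returned predictor) and \emph{progress-preserving} (so that when the audit fails, the potential still drops by $\Omega(\eps^2)$). Once these two facts are pinned down, the size estimate is merely the unrolling of a branching recurrence whose branching factor is the query bound $q$ and whose depth is the iteration bound $O(1/\eps^2)$.
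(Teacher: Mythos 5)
Your proposal follows the paper's proof essentially verbatim: same use of Algorithm~\ref{alg:construction}, same resolution of the oracle-circularity by answering iteration-$t$ queries with $p^{(t)}$ (the paper also observes that Lemma~\ref{lem:iterations}'s potential argument is insensitive to this choice, since $\ps$ itself satisfies OI), same hard-wiring of $p^{(t)}$ into the $q$ oracle gates, and the same recurrence $s^{(t+1)}\le (2q+1)s^{(t)}+3s$ unrolled over $T=O(1/\eps^2)$ rounds. No gaps, and no meaningful divergence from the paper's argument.
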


\begin{proof}
For any class of $(s,q)$-oracle distinguishers, we show how to construct $\pt$ as in Algorithm~\ref{alg:construction} (without oracle gates).
This implementation leads to a recurrence relation for $s^{(t)}$ expressing the size of the circuit implementing $p^{(t)}$ for each of the $t$ iterations.
Solving the recurrence for $T = O(1/\eps^2)$ yields the stated bound.

To begin, the initial predictor $p^{(0)}$, the constant $1/2$ function, requires a circuit of size $s^{(0)} \le d \le s$.
Additionally, we note that the predictor $p^{(0)}$ is available to Algorithm~\ref{alg:construction} in iteration $t = 0$.
These two observations form the basis of our recurrence.

Consider iteration $t+1$ of Algorithm~\ref{alg:construction}.
By assumption each distinguisher is implemented by an $(s,q)$-oracle circuit with $p^{(t)}$-oracle circuits.
Importantly, the algorithm has access to a size $s^{(t)}$ circuit implementing $p^{(t)}$.
Thus, to evaluate $A^{p^{(t)}}$ for any $A \in \A$, we can implement oracle gates using the explicit circuit for $p^{(t)}$ in a total size of $s + q \cdot s^{(t)}$.
If the predictor is updated (i.e.,\ does not terminate) the new predictor $p^{(t+1)}$ requires two calls to the distinguishing circuit $A^{p^{(t)}}$, one call to $p^{(t)}$, a multiplication of $\delta^{(t)}_A$ by a scalar, an addition, and a projection onto $[0,1]$.
By our assumption that the necessary operations can each be computed in $w$ gates for $3w \le s$,
we can express the size of the updated circuit as follows.
\begin{align*}
    s^{(t+1)} &\le 2\cdot\left(s + q \cdot s^{(t)}\right) + s^{(t)} + 3w\\
    &\le 3s + (2q+1) \cdot s^{(t)}
\end{align*}
Solving the recurrence\footnote{The solution follows by the Taylor approximation $\frac{1}{1-x} = 1 + x + x^2 + \hdots$ taking $x = 2q + 1$.}, we bound the size of the circuit after $T = O(1/\eps^2)$ iterations needed to compute the $(\A,\eps)$-\three predictor $\pt$.
\begin{align*}
    s^{(T)} &\le 3s \cdot \frac{(2q+1)^{T+1} - 1}{2q}\\
    &\le s \cdot q^{O(T)}
\end{align*}
By the bound on the number of iterations from Lemma~\ref{lem:iterations}, the bound of $s \cdot q^{O(1/\eps^2)}$ on the circuit complexity of $\pt$ follows.
\end{proof}

\subsection{Obtaining \FOUR}

In all the previous levels of OI, we worked without loss of generality with a distinguisher class of fixed concrete complexity (e.g.,\ size-$s$ circuits).
Because \four considers distinguishers that take the description of $\pt$ as input, it is most natural to consider a family whose complexity scales with that of $\pt$.
We will consider a distinguishers implemented by a non-uniform family that is parameterized by both the dimension $d$ and the length of the description of $\pt$, which we denote $n$.
Concretely, we measure the complexity of the distinguishers and the resulting $\pt$ in terms of the size $s(d,n)$ of the circuits in $\A$.

For computations over the description of circuits to be meaningful, it is important to fix a representation of the description of $\pt$; we denote this canonical description by $\ptstring$.
The proposition follows by using the adjacency list representation of the circuit and applying any constant-stretch pairing function.
\begin{proposition}\label{prop:description}
Suppose $k,s \in \N$ and $k \le s$.
For any size-$s$ bounded fan-in circuit $c:\set{0,1}^k \to \set{0,1}$, there exists a canonical encoding $\langle c \rangle$ of the description of the circuit, and a canonical encoding $\langle \langle c\rangle, x \rangle$ of the description-input pairing, each of length $O(s \log(s))$ bits.
\end{proposition}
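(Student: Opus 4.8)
The plan is to exhibit one explicit encoding scheme and verify its length; this is a routine counting argument, so I will keep the details brief. First I would fix, without loss of generality, that the circuits under consideration have fan-in at most $2$ and draw their gates from a fixed finite set of types (all boolean functions of at most two inputs over the complete basis, e.g.\ $\{\wedge,\vee,\neg\}$ together with constants), and I would number the $k$ input nodes together with the $\le s$ internal gates by integers in $\{1,\dots,k+s\}\subseteq\{1,\dots,2s\}$ according to some fixed canonical topological order (say, the lexicographically least valid ordering, ties broken by a fixed rule). This pins down a unique integer label in $\{1,\dots,2s\}$ for every wire, and in particular a unique index for the output gate.

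Next I would define $\langle c\rangle$ as the adjacency-list description: for each internal gate $g$, record the triple consisting of its type (an element of a fixed finite set, hence $O(1)$ bits) together with the labels of its at most two input wires, each an integer in $\{1,\dots,2s\}$ and hence writable in $\lceil\log_2(2s)\rceil=O(\log s)$ bits; prepend a short header giving $k$, $s$, and the index of the output gate, another $O(\log s)$ bits. Concatenating the $O(\log s)$-bit header with the $s$ triples (each of size $O(\log s)$) yields a string of length $O(s\log s)$. Since both the wire labelling and the order in which the triples are listed are determined by the canonical topological order, the map $c\mapsto\langle c\rangle$ is well defined, and it is injective since $c$ can be reconstructed gate by gate from the list.

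For the description--input pairing I would apply any standard constant-stretch self-delimiting pairing function to $(\langle c\rangle,x)$: writing $L=\card{\langle c\rangle}=O(s\log s)$, prepend a self-delimiting encoding of the integer $L$ (for instance Elias $\gamma$-coding, of length $O(\log L)=O(\log s)$), then $\langle c\rangle$, then $x$. The result $\langle\langle c\rangle,x\rangle$ is uniquely decodable into $\langle c\rangle$ and $x$, and has length $L+\card{x}+O(\log L)$. Invoking the hypothesis $\card{x}=k\le s\le L$, this is $O(s\log s)$, as claimed.

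The argument has no real obstacle; the only points that need a little care are (i) checking that the per-gate cost is $O(\log s)$ rather than larger, which uses bounded fan-in and the finiteness of the gate basis, and (ii) ensuring the pairing function has constant stretch, so that combining an $O(s\log s)$-bit description with a $k$-bit input (with $k\le s$) does not push the length past $O(s\log s)$. I would also note in passing that the adjacency-list form is convenient for the simulations in Section~\ref{sec:beyond}, since it supports gate-by-gate evaluation of $c$ with only polynomial overhead, though this is not needed for the length bound asserted here.
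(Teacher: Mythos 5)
Your proof is correct and follows exactly the route the paper indicates: the paper's justification for this proposition is the single remark ``The proposition follows by using the adjacency list representation of the circuit and applying any constant-stretch pairing function,'' and your argument is a careful spelling-out of precisely that construction. The details you fill in (canonical topological ordering, $O(\log s)$ bits per wire label under bounded fan-in and a finite basis, and a self-delimiting length prefix for the pairing) are all the standard choices one would make.
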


Throughout our discussion of \four, we will still assume that the dimension $d$ is fixed; thus, we will assume that the distinguishers are of fixed size $s(d)$ with respect to the dimension, but growing with respect to the length of the encoding $n$,
\begin{gather*}
    s(d,n) = s(d) + f(n)
\end{gather*}
for various functions $f$.
As before, we will assume that the base distinguisher size $s$ is at least on the order of the dimension $d$ and cost of operations $w$, so that $d + w \le O(s(d))$.
Given this assumption, and the simplicity of the initial constant predictor $p^{(0)}$, for all classes of distinguishers we consider here, we may assume a base size $s^{(1)}$ of the predictor $p^{(1)}$ after one iteration of Algorithm~\ref{alg:construction} as
\begin{gather*}
    s^{(1)} \le O(s(d)).
\end{gather*}
In addition to this base case, we can give a generic recurrence for the growth of $p^{(t)}$ in terms of the size of the distinguishers.
\begin{lemma}\label{lem:generic:four}
Suppose $\A$ is a class of \four distinguishers implemented by $s(d,n)$-sized circuits, and let $n(s) = O(s \log(s))$ upper bound the length of the encoding of size-$s$ circuits.
For any iteration $t$ of Algorithm~\ref{alg:construction}, let $p^{(t)}$ be the current predictor and $s^{(t)}$ denote the size of its circuit.
Then, the predictor size can be recursively bounded as
\begin{gather*}
    s^{(t+1)} \le O\left(s\left(d,n(s^{(t)})\right) \right) + s^{(t)}.
\end{gather*}
\end{lemma}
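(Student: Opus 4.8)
The plan is to follow Algorithm~\ref{alg:construction} literally, tracking how the circuit for $p^{(t+1)}$ is assembled from the circuit for $p^{(t)}$ when the distinguishers are \four distinguishers, i.e.\ when $A(\cdot,b;p^{(t)})$ means running $A$'s circuit on the individual, the bit $b$, and the canonical encoding $\langle p^{(t)}\rangle$ of the current predictor's circuit. Recall the update is $p^{(t+1)}(\cdot) = \pi_{[0,1]}\lr{p^{(t)}(\cdot) + \frac{\Delta_A^{(t)}}{2}\delta_A^{(t)}(\cdot)}$ with $\delta_A^{(t)}(\cdot) = A(\cdot,1;p^{(t)}) - A(\cdot,0;p^{(t)})$. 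Only the size recurrence needs proof here: termination within $T = O(1/\eps^2)$ rounds is inherited from Lemma~\ref{lem:iterations}, whose potential argument rests solely on the identity $\Delta_A^{(t)} = \E_{i\sim\DX}\lr{(\ps_i - p^{(t)}_i)\,\delta_A^{(t)}(i)}$ and is thus indifferent to how $A$ accesses $p^{(t)}$.

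First I would pin down the description length: since $p^{(t)}$ is a bounded-fan-in circuit of size $s^{(t)}$, its canonical encoding has length at most $n(s^{(t)}) = O(s^{(t)}\log s^{(t)})$ by Proposition~\ref{prop:description}. The key observation is that the algorithm maintains this circuit \emph{explicitly}, so $\langle p^{(t)}\rangle$ is a fixed bit string that can be supplied to $A$ as hardwired constants; by constant propagation these are absorbed into $A$'s own circuit without increasing its size beyond $s(d, n(s^{(t)}))$. Hence one evaluation of $A(\cdot,b;p^{(t)})$ is realized by a circuit of size $O(s(d, n(s^{(t)})))$, taking only the individual $i$ and the constant bit $b$ as live inputs.

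Then I would assemble $p^{(t+1)}$: keep the existing size-$s^{(t)}$ circuit for $p^{(t)}$, use its output string both to fix the hardwired copies of $\langle p^{(t)}\rangle$ and to feed the final addition; place two copies of the circuit for $A$ (for $b = 0$ and $b = 1$) and subtract to get $\delta_A^{(t)}$, at cost $2\cdot O(s(d, n(s^{(t)})))$; finish with a fixed-precision scalar multiplication by $\Delta_A^{(t)}/2$, an addition of $p^{(t)}(i)$, and a projection onto $[0,1]$, which by the standing assumption cost $3w \le O(s(d)) \le O(s(d, n(s^{(t)})))$ gates. Summing, $s^{(t+1)} \le s^{(t)} + 2\cdot O(s(d, n(s^{(t)}))) + O(w) = s^{(t)} + O(s(d, n(s^{(t)})))$, which is the claimed recurrence; the base case $s^{(1)} \le O(s(d))$ was already recorded above from the triviality of $p^{(0)}$.

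The main thing to get right is the \emph{self-referential} shape of the bound: the class of distinguishers we must fool is parameterized by the length of the very predictor we are building, through $n(\cdot)$. I expect this to be the only real subtlety, and it is resolved by the remark above --- at step $t$ there is a concrete circuit for $p^{(t)}$, so its encoding is well-defined and constant, eliminating any apparent circularity and leaving a clean recurrence to be solved downstream (for each choice of the growth function $f$ in the \four theorem).
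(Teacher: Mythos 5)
Your proof is correct and follows essentially the same route as the paper: both count the two hard-wired copies of the distinguisher $A$ (each of size at most $s(d, n(s^{(t)})))$, the single copy of the circuit for $p^{(t)}$ of size $s^{(t)}$, and the $3w$-gate overhead for the fixed-precision arithmetic, then absorb the latter via the standing assumption $w \le O(s(d)) \le O(s(d,n))$. Your explicit remarks about hardwiring $\langle p^{(t)}\rangle$ as constants and resolving the apparent self-reference are helpful elaborations (the paper relies on the same facts silently), though the phrase ``use its output string \ldots to fix the hardwired copies of $\langle p^{(t)}\rangle$'' is a small slip --- the encoding is hardcoded externally by the algorithm, not produced by running $p^{(t)}$ --- which you in fact state correctly in the preceding paragraph.
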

\begin{proof}
As in the analysis for \three, Algorithm~\ref{alg:construction} updates $p^{(t)}$ to $p^{(t+1)}$ using two calls to the algorithm that distinguishes $p^{(t)}$ and $\ps$, one call $p^{(t)}$, and three operations of cost at most $w$.
Thus, we can bound the growth rate of $s^{(t)}$ recursively as
\begin{align*}
    s^{(t+1)} &\le 2\cdot s\left(d,n(s^{(t)})\right) + s^{(t)} + 3w\\
    &\le O\left(s\left(d,n(s^{(t)})\right) \right) + s^{(t)},
\end{align*}
where the simplified inequality follows by the assumption that $w \le O(s(d)) \le O(s(d,n))$.
\end{proof}
We can apply Lemma~\ref{lem:generic:four} to obtain bounds on $s^{(T)}$, under various concrete assumptions about $s(d,n)$.
In particular, for different choices of $s(d,n)$, the different levels of OI can be implemented in the framework of \four.
For each level, we recover the bounds on the resulting circuit size up to low order terms (e.g.,\ polylogarithmic factors in the distinguisher size).
This blow-up follows as a consequence of the generality of the framework of \four, and the need to encode $\ptstring$ as input to the distinguisher and decode $\ptstring$ if the distinguisher wants to evaluate $\pt$.

\paragraph{Sublinear distinguishers.}
To start, suppose we choose a class of distinguishers of complexity independent of $\pt$, $s(d,n) = s(d)$.
In this case, as $\pt$ grows in complexity, the distinguishers' access to $\ptstring$ becomes relatively limited.
Such distinguishers can still implement \one tests by ignoring $\ptstring$; we show that the complexity of such predictors grows similarly to the complexity needed to satisfy \one.
In fact, we bound the complexity of a \four predictor even for distinguishers that have (strongly) sublinear access to $\pt$.

\begin{theorem}
Let $\X \subseteq \set{0,1}^d$ for $d \in \N$.
For any fixed constant $\delta \in [0,1)$,
suppose $\A$ is a class of distinguishers implemented by size-$s(d,n)$ circuits for
\begin{gather*}
    s(d,n) \le s(d) + \left(\frac{n}{\log(n)}\right)^\delta.
\end{gather*}
For any choice of \nature's distribution $\Dnat$ and $\eps > 0$, there exists an $(\A,\eps)$-\four predictor $\pt:\X \to [0,1]$ implemented by a circuit of size $O\left(s(d)\cdot\eps^{-\frac{2}{1-\delta}}\right)$.
\end{theorem}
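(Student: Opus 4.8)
The plan is to run Algorithm~\ref{alg:construction} and then bound the size of the circuit it produces using Lemma~\ref{lem:generic:four} together with an explicit solution of the resulting recurrence. By Lemma~\ref{lem:iterations}, the algorithm terminates after $T = O(1/\eps^2)$ iterations, and $\pt = p^{(T)}$ is $(\A,\eps)$-\four: in each iteration $t+1$, the update $\delta_A^{(t)}$ is computed from two evaluations of the circuit for some $A \in \A$ on inputs of the form $(\cdot, b, \langle p^{(t)}\rangle)$, where the canonical description $\langle p^{(t)}\rangle$ (of length $n(s^{(t)}) = O(s^{(t)}\log s^{(t)})$ by Proposition~\ref{prop:description}) is hardwired as constant input bits; and the termination test evaluates exactly the \four distinguishing advantage $\Delta_A^{(t)}$ with $\ptstring = \langle p^{(t)}\rangle$. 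So the entire task reduces to bounding the circuit size $s^{(T)}$.

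First I would instantiate Lemma~\ref{lem:generic:four}, which gives $s^{(t+1)} \le O\!\left(s(d, n(s^{(t)}))\right) + s^{(t)}$ with base case $s^{(1)} \le O(s(d))$. Substituting the hypothesis $s(d,n) \le s(d) + (n/\log n)^\delta$ and $n(s) = O(s\log s)$ — so that $\log n(s) = \Theta(\log s)$ for $s$ above a constant, and hence $n(s)/\log n(s) = \Theta(s)$ — the recurrence becomes
\[
s^{(t+1)} \;\le\; s^{(t)} + c_1\, s(d) + c_2\, (s^{(t)})^{\delta}
\]
for absolute constants $c_1, c_2$, with $s^{(1)} \le c_0\, s(d)$. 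Writing $u_t = s^{(t)}/s(d)$ and using $s(d) \ge 1$ together with $\delta < 1$ to bound $(s^{(t)})^\delta = s(d)^\delta u_t^\delta \le s(d)\, u_t^\delta$, I obtain a recurrence in absolute constants only: $u_1 \le c_0$ and $u_{t+1} \le u_t + c_1 + c_2\, u_t^{\delta}$.

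The crux is to show $u_t \le (A t + B)^{1/(1-\delta)}$ for appropriately chosen absolute constants $A, B$; granting this, $s^{(T)} = s(d)\, u_T \le s(d)\,(AT+B)^{1/(1-\delta)}$, and $T = O(1/\eps^2)$ yields $s^{(T)} = O\!\left(s(d)\cdot\eps^{-2/(1-\delta)}\right)$ (for $\eps$ bounded away from $0$ the bound collapses to $O(s(d))$, which is within the claimed bound). I would prove the inequality by induction on $t$. Setting $\gamma = 1/(1-\delta) \ge 1$ and $x = At+B$ (so that $\delta/(1-\delta) = \gamma-1$), the inductive step requires $x^\gamma + c_1 + c_2\, x^{\gamma-1} \le (x+A)^\gamma$; by convexity of $x \mapsto x^\gamma$ we have $(x+A)^\gamma \ge x^\gamma + \gamma A\, x^{\gamma-1}$, so it suffices that $c_1 + c_2\, x^{\gamma-1} \le \gamma A\, x^{\gamma-1}$. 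Since $x \ge B \ge 1$ forces $x^{\gamma-1} \ge 1$, this holds once $A \ge (c_1+c_2)/\gamma = (c_1+c_2)(1-\delta)$, and the base case $u_1 \le (A+B)^\gamma$ is secured by taking $B = \max\{1,\ c_0^{1-\delta}\}$. This convexity estimate — rather than a naive monomial ansatz, whose leading constant fails to close the induction when $\delta$ is near $0$ — is the one slightly delicate point; everything else is routine bookkeeping.
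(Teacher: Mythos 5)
Your proposal is correct and follows essentially the same route as the paper: instantiate Lemma~\ref{lem:generic:four} to get the recurrence $s^{(t+1)} \le s^{(t)} + O(s(d)) + O\bigl((s^{(t)})^\delta\bigr)$, then close an induction on a bound of the form $s(d)\cdot t^{1/(1-\delta)}$ using the convexity estimate $(x+a)^\gamma \ge x^\gamma + \gamma a\,x^{\gamma-1}$. (One small nit: the paper's own ansatz $s^{(t)} \le B\, s(d)\, t^{1/(1-\delta)}$ — a pure monomial with a large enough constant $B$ — does close for all $\delta \in [0,1)$ via exactly the same convexity step, so your aside that this version fails near $\delta = 0$ is not quite right, though it has no bearing on the validity of your argument.)
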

\begin{proof}
We bound the size of the predictor $\pt$ produced by Algorithm~\ref{alg:construction} when using a class of distinguishers with sublinear access to $\ptstring$.
Applying Lemma~\ref{lem:generic:four}, we can derive a recurrence for the growth rate of the predictor at each iteration,
\begin{align*}
    s^{(t+1)} &\le O\left(s(d) + \left(\frac{n(s^{(t)})}{\log(n(s^{(t)}))}\right)^\delta \right) + s^{(t)}\\
    &\le O(s(d)) + s^{(t)} + b\cdot \left(s^{(t)}\right)^\delta\addtag\label{four:sublinear:encoding}
\end{align*}
for some constant $b > 0$, where (\ref{four:sublinear:encoding}) follows by Proposition~\ref{prop:description} bounding the length of the description $\langle p^{(t)} \rangle$ in terms of $s^{(t)}$.
For a sufficiently large constant $B > 0$, this recurrence can be bounded by $s^{(T)} \le B \cdot s(d) \cdot T^{\frac{1}{1-\delta}}$.
First, we know $s^{(1)} \le O(s(d))$.
Then, inductively, assume that the claimed recurrence holds for all $t' \le t$.
\begin{align*}
s^{(t+1)} &\le O(s(d)) + B \cdot s(d) \cdot t^{\frac{1}{1-\delta}} + b \cdot \left(B \cdot s(d) \cdot t^{\frac{1}{1-\delta}}\right)^{\delta}\\
&\le O(s(d)) + B \cdot s(d) \cdot t^{\frac{1}{1-\delta}} + b\cdot B^\delta \cdot s(d)^\delta \cdot t^{\frac{\delta}{1-\delta}}\\
&\le B \cdot s(d) \cdot \left(t^{\frac{1}{1-\delta}} + t^{\frac{1}{1-\delta} - 1}\right)\\
&\le B \cdot s(d) \cdot (t+1)^{\frac{1}{1-\delta}}
\end{align*}
Recalling that Algorithm~\ref{alg:construction} terminates after $T= O(1/\eps^2)$ iterations, the resulting circuit for $\pt$ is bounded by $s^{(T)} \le O\left(s(d)\cdot\eps^{-\frac{2}{1-\delta}}\right)$ for fixed constant $\delta$.
\end{proof}

\paragraph{Quasilinear distinguishers.}
Next, we handle distinguishers whose complexity may grow quasilinearly in the size of the description $\ptstring$.
Our focus on quasilinear distinguishers is due to the fact that the circuit evaluation problem has quasilinear circuits; thus, using such distinguishers we can hope to simulate \two and \three.
Specifically, we rely upon the existence of universal circuits of the following form.
\begin{theorem}[Corollary of Theorem 3.1 from \cite{lipton2012amplifying}]
There exists a (polylog-time uniform) universal circuit family $\set{C_n}$ of size $O\left(n \cdot \poly\log(n)\right)$, such that for any bounded fan-in circuit $c:\set{0,1}^k \to \set{0,1}$ and input $x \in \set{0,1}^k$,
where $\card{\left\langle\langle c \rangle, x\right\rangle} = n$,
\begin{gather*}
    C_n\left(\left\langle\langle c \rangle, x\right\rangle\right) = c(x).
\end{gather*}
\end{theorem}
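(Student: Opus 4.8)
The plan is to obtain the family $\set{C_n}$ by instantiating Theorem~3.1 of \cite{lipton2012amplifying} with the circuit-evaluation function $E$, defined by $E(\langle\langle c\rangle, x\rangle) = c(x)$ for a bounded fan-in Boolean circuit $c$ and an input $x$ in its domain. First I would check that $E$, on inputs of length $n$, is computable by a quasilinear-size, polylog-time uniform computation of the kind that Theorem~3.1 converts into small circuits. Using the adjacency-list encoding guaranteed by Proposition~\ref{prop:description}, one processes the gates of $\langle c\rangle$ in a topological order while maintaining a table, indexed by gate label, that stores the Boolean value already computed at each gate; on reaching a gate one reads the (at most two) table entries for its input wires, applies the gate operation, writes the result, and finally outputs the table entry of the designated output gate. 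There are $O(n)$ gates and each step is a constant number of random-access table operations costing $O(\log n)$ time, so $E$ runs in time $n \cdot \poly\log(n)$ on a random-access machine, and equivalently admits an oblivious simulation of that size by sorting networks.

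Next I would feed this computation into Theorem~3.1 of \cite{lipton2012amplifying}, which produces, for each input length $n$, a Boolean circuit $C_n$ of size $O(n \cdot \poly\log(n))$ that agrees with $E$ on all length-$n$ inputs; this is exactly the asserted universal family, since $C_n(\langle\langle c\rangle, x\rangle) = c(x)$ whenever $|\langle\langle c\rangle, x\rangle| = n$. Note that by Proposition~\ref{prop:description} a circuit whose paired encoding has length $n$ has size $O(n)$, so $C_n$ is only a polylogarithmic factor larger than any circuit it simulates, as required. Uniformity is then inherited by composition: the reduction from $E$ to the model of Theorem~3.1 is polylog-time uniform, as is the compilation supplied by that theorem, so the family $\set{C_n}$ is polylog-time uniform.

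The step that I expect to need the most care is the interface between encodings. Proposition~\ref{prop:description} fixes $\langle\langle c\rangle, x\rangle$ as a constant-stretch pairing of an adjacency-list description with the input string, whereas Theorem~3.1 may assume a particular ``programmable'' normal form for the object being simulated; so the construction should begin with a small compiler circuit---of size $O(n \cdot \poly\log(n))$, using a sorting network on the gate list to realize a topological order if the incoming encoding does not already supply one, and to expose random access to the two halves $\langle c\rangle$ and $x$. I would also verify that this preprocessing together with the per-gate accounting above stays within the quasilinear budget and that the $\poly\log(n)$ uniformity bound survives every stage. Conceptually none of this is hard---the statement is essentially Valiant's universal-circuit construction repackaged so that the sole parameter is the input length $n$---but the bookkeeping around the encoding and the uniformity is where the real work sits, rather than in the high-level argument.
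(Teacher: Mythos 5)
The paper does not actually prove this statement --- it is cited verbatim as a corollary of Theorem~3.1 of \cite{lipton2012amplifying}, with no derivation supplied --- so there is no paper proof to compare against. Your sketch is the expected derivation: circuit evaluation is a quasilinear-time computable function, Lipton--Williams convert quasilinear time into quasilinear-size polylog-time-uniform circuits, and the residual work is reconciling the adjacency-list encoding of Proposition~\ref{prop:description} with whatever normal form Theorem~3.1 assumes. Two small points worth being careful about if you were to flesh this out. First, Theorem~3.1 in that reference is stated for a specific machine model, and your RAM-with-$O(\log n)$-cost-lookups algorithm must be translated into that model; you gesture at this with ``oblivious simulation by sorting networks,'' which is the right tool, but the cost accounting (that the sort-based oblivious RAM simulation stays within an $O(n \poly\log n)$ budget and preserves polylog-time uniformity) should be made explicit rather than asserted, since it is precisely there that the $\poly\log$ factor accumulates. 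Second, you need the encoding to be self-delimiting so that the split of the length-$n$ input into $\langle c \rangle$ and $x$ is recoverable by the circuit $C_n$ itself; the ``constant-stretch pairing function'' of Proposition~\ref{prop:description} supplies this, but it deserves a sentence. Neither of these is a real gap --- the high-level argument is sound and is, as you note, essentially a repackaging of the classical universal-circuit idea with the single parameter being the total encoded length $n$.
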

By pairing this result with Proposition~\ref{prop:description}, we can bound the size needed to evaluate $\pt$.
Recall that for a circuit $p$ of size s, the description $\card{\langle p \rangle}$ can be written in $O(s \log(s))$ bits.
Thus, in combination, we will assume that given the description $\langle p \rangle$, the circuit $p$ can be evaluated by a distinguisher using $s \cdot \left(\log(s)\right)^c$ gates for some $c \in \N$.

We analyze the growth rate of the circuit produced by Algorithm~\ref{alg:construction} if we allow distinguishers that can make $q$ calls to the current model predictor.
Specifically, we consider distinguishers of size
\begin{gather*}
    s(d,n(s)) \le s(d) + s \cdot \left(\log(s)\right)^c
\end{gather*}
This growth rate bounds the size of the predictors needed to implement \three within the framework of \four.
As we'll see, the dependence on $q$ and the base distinguisher size $s(d)$ remains roughly the same, with overhead due to encoding and decoding $p^{(t)}$ at each iteration, rather than wiring it directly.

\begin{theorem}
Let $\X \subseteq \set{0,1}^d$ for $d \in \N$.
For a constant $c > 0$,
suppose $\A$ is a class of distinguishers implemented by size-$s(d,n)$ circuits for
\begin{gather*}
    s(d,n(s)) \le s(d) + s \cdot \left(\log(s)\right)^c.
\end{gather*}
For any choice of \nature's distribution $\Dnat$ and $\eps \in (0,1)$, there exists an $(\A,\eps)$-\four predictor $\pt:\X \to [0,1]$ implemented by a circuit of size $s(d)\cdot q^{O(1/\eps^2)}\cdot(1/\eps)^{O(1/\eps^2)} \cdot \left(\log(s(d)\cdot q)\right)^{O(1/\eps^2)}$.
\end{theorem}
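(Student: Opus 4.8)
The plan is to run Algorithm~\ref{alg:construction} on the given class $\A$ of quasilinear \four distinguishers and to bound the size of the circuit it returns, following the same template as the \three analysis but now paying for the cost of hardwiring and decoding the descriptions $\langle p^{(t)}\rangle$ at each round. By Lemma~\ref{lem:iterations} the algorithm halts after $T = O(1/\eps^2)$ iterations, and the output $p^{(T)}$ is $(\A,\eps)$-\four by construction; so everything reduces to solving the recurrence that governs $s^{(t)}$, the size of the circuit computing $p^{(t)}$.

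First I would pin down the per-round growth. In iteration $t+1$, the update circuit of Algorithm~\ref{alg:construction} evaluates the distinguisher $A$ twice---fed the description $\langle p^{(t)}\rangle$, which is a hardwired constant string since the algorithm holds an explicit circuit for $p^{(t)}$ and so spends no gates producing it---evaluates $p^{(t)}$ once directly, and performs the scalar multiplication, addition and projection at cost $3w(\eps)$. A quasilinear-in-$n$ distinguisher here is one that reads the size-$s$ description and makes up to $q$ evaluations of the decoded predictor; by the universal-circuit construction of~\cite{lipton2012amplifying} quoted above together with the encoding bound $n(s) = O(s\log s)$ of Proposition~\ref{prop:description}, each such evaluation costs $O(s(\log s)^{c})$ for the (suitable) constant $c$, so $A$ has size $O(s(d) + q\cdot s^{(t)}(\log s^{(t)})^{c})$. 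Plugging this into Lemma~\ref{lem:generic:four} gives
\[
s^{(t+1)} \le O\left(s(d) + q\cdot s^{(t)}(\log s^{(t)})^{c}\right) + s^{(t)},
\]
and using the standing assumptions (in particular $w(\eps) \le O(s(d)) \le s^{(t)}$, valid once $s^{(1)} = O(s(d))$) this collapses to $s^{(t+1)} \le C\,q\,(\log s^{(t)})^{c}\,s^{(t)}$ for an absolute constant $C$.

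Next I would unroll. Monotonicity of $s^{(t)}$ yields $s^{(T)} \le s^{(1)}\,(Cq)^{T}\,(\log s^{(T)})^{cT}$; writing $L = \log s^{(T)}$, this is the implicit inequality $L \le \log s^{(1)} + T\log(Cq) + cT\log L$, which I would resolve in the standard way to obtain $L = O\left(\log s(d) + T\log q + T\log T + T\log\log(s(d)q)\right)$. Exponentiating and substituting $T = O(1/\eps^2)$ turns the four summands into the multiplicative factors $s(d)$, $q^{O(1/\eps^2)}$, $(1/\eps)^{O(1/\eps^2)}$ (the $T^{O(T)}$ contribution), and $(\log(s(d)q))^{O(1/\eps^2)}$, which is exactly the claimed size; the residual $2^{O(T)}$ is absorbed into $(1/\eps)^{O(1/\eps^2)}$ since $\eps < 1$.

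The only real obstacle is the bookkeeping in the second and third steps. I expect the delicate points to be: (i) confirming that $\langle p^{(t)}\rangle$ enters $A$ as hardwired input rather than as computed gates, so that the sole overhead over the \three construction is the universal-circuit slack $(\log s^{(t)})^{c}$ per evaluation; and (ii) resolving the implicit inequality for $L$ without inflating the exponents by more than constant factors---in particular checking that the $\log\log s^{(T)}$ appearing inside is genuinely dominated by $\log\log(s(d)q) + \log T$, so that it contributes only the stated $(\log(s(d)q))^{O(1/\eps^2)}$ and $(1/\eps)^{O(1/\eps^2)}$ factors and produces no actual tower. Modulo this, the argument is line-for-line the \three analysis with each wired-in copy of $p^{(t)}$ replaced by an encode/universal-circuit/decode gadget.
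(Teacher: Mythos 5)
Your proposal is correct and tracks the paper's proof closely: you run Algorithm~\ref{alg:construction}, invoke Lemma~\ref{lem:iterations} and Lemma~\ref{lem:generic:four} to set up the recurrence $s^{(t+1)} \le O\bigl(s(d) + q\, s^{(t)}(\log s^{(t)})^c\bigr) + s^{(t)}$, and solve it, differing only in that the paper guesses and verifies an explicit inductive bound on $s^{(t)}$ while you unroll to an implicit inequality in $L = \log s^{(T)}$ and resolve it. When you resolve that implicit inequality, be careful to preserve the coefficient of exactly $1$ on $\log s(d)$ (it enters once, additively, as $\log s^{(1)}$) rather than hiding it inside an $O(\cdot)$, so that the final bound really has $s(d)$ to the first power and not $s(d)^{O(1)}$.
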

\begin{proof}
By Lemma~\ref{lem:generic:four}, we can bound the growth rate at the $t$th iteration as follows
\begin{align*}
s^{(t+1)} &\le O\left(s(d) + q \cdot s^{(t)}\cdot\left(\log(s^{(t)})\right)^c\right) + s^{(t)}\\
&\le b \cdot \left(s(d) + q\cdot s^{(t)}\cdot\left(\log(s^{(t)})\right)^c\right)
\end{align*}
for some constant $b > 0$.
For notational convenience, let $\Xi = s(d) \cdot (2c)^{2c}  b q$.
We will show that the recurrence can be bounded as follows.
\begin{align*}
    s^{(T)} &\le s(d) \cdot ((2c)^{2c}bq)^T \cdot \left((T-1)!\right)^{2c}\cdot \left(\log(s(d) \cdot (2c)^{2c}bq)\right)^{cT}\\
    &= s(d) \cdot ((2c)^{2c}bq)^T \cdot \left((T-1)!\right)^{2c}\cdot \left(\log(\Xi)\right)^{cT}
\end{align*}
Again, we know that $s^{(1)} \le O(s(d))$.
To see the recurrence, consider the expression for $s^{(t+1)}$ assuming the inductive hypothesis for $s^{(t)}$.
\begin{align*}
s^{(t+1)} &\le b \cdot s(d) + bq\cdot s^{(t)}\cdot\left(\log(s^{(t)})\right)^c\\
&\le  b \cdot s(d) + bq \cdot \left(s(d) \cdot \left((2c)^{2c}bq \right)^t \cdot \left((t-1)!\right)^{2c}\cdot \left(\log(\Xi)\right)^{ct}\right)\\
&\qquad\qquad\qquad\qquad\qquad\qquad\cdot \left(\log\left(s(d) \cdot \left((2c)^{2c} bq)\right)^t \cdot \left((t-1)!\right)^{2c}\cdot \left(\log(\Xi)\right)^{ct}\right)\right)^c\\
&\le s(d) \cdot (2c)^{2ct} \cdot (bq)^{t+1}\cdot \left((t-1)!\right)^{2c} \cdot \left(\log(\Xi)\right)^{ct}\\
&\qquad\qquad\qquad\qquad\qquad\qquad\cdot
\left(t \cdot \log(\Xi) + 2ct\cdot \log(t) + ct\cdot\log\log(\Xi)\right)^c\\
&\le s(d) \cdot (2c)^{2ct} \cdot (bq)^{t+1}\cdot \left((t-1)!\right)^{2c} \cdot \left(\log(\Xi)\right)^{ct} \cdot (2ct)^{2c}\cdot \log(\Xi)^c\\
&= s(d) \cdot \left((2c)^{2c}bq\right)^{t+1} \cdot \left(t!\right)^{2c} \cdot \log\left(\Xi\right)^{c\cdot (t+1)}
\end{align*}
Taking $b$ and $c$ to be constants,
the bound on the recurrence can be simplified and bounded as follows,
\begin{gather*}
s^{(T)} \le s(d) \cdot q^{O(1/\eps^2)} \cdot \left(1/\eps\right)^{O(1/\eps^2)} \cdot \left(\log(s(d)\cdot q)\right)^{O(1/\eps^2)}
\end{gather*}
where $T \le O(1/\eps^2)$ follows by the iteration complexity of Algorithm~\ref{alg:construction}.
\end{proof}

\paragraph{Polynomial distinguishers.}
In the limits of efficient distinguishers, we may allow the computation over $\ptstring$ to grow polynomially.
Such distinguishers, in principle, could run nontrivial tests on the description $\ptstring$ itself beyond simply evaluating $\pt$.
We show that predictors satisfying this strong notion of OI exist, in complexity independent of that of $\ps$, albeit doubly exponential in the number of iterations of Algorithm~\ref{alg:construction}.

\begin{theorem}
Let $\X \subseteq \set{0,1}^d$ for $d \in \N$.
For any constant $k \in \N$, suppose $\A$ is a class of distinguishers implemented by size-$s(d,n)$ circuits for
\begin{gather*}
    s(d,n) \le \left(s(d) + n\right)^k.
\end{gather*}
For any choice of \nature's distribution $\Dnat$ and $\eps > 0$, there exists an $(\A,\eps)$-\four predictor $\pt:\X \to [0,1]$ implemented by a circuit of size $O\left(s(d)^{2^{O(1/\eps^2)}}\right)$.
\end{theorem}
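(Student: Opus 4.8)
The approach is to run Algorithm~\ref{alg:construction} and track the circuit size through the generic recurrence of Lemma~\ref{lem:generic:four}, exactly as in the treatment of sublinear and quasilinear distinguishers, but now substituting the polynomial bound $s(d,n)\le(s(d)+n)^{k}$. Recall from the discussion preceding Lemma~\ref{lem:generic:four} that after one iteration the predictor has size $s^{(1)}\le O(s(d))$, and that by Proposition~\ref{prop:description} the canonical description of a size-$s$ circuit has length $n(s)=O(s\log s)$. Plugging these into Lemma~\ref{lem:generic:four} and coarsening (using $s(d)+s^{(t)}\log s^{(t)}\le\bigl(s(d)\bigr)^{2}\bigl(s^{(t)}\bigr)^{2}$ whenever both quantities exceed a small absolute constant, the remaining case being trivial) yields a recurrence of the form
\[
  s^{(t+1)} \le O\!\left(\bigl(s(d) + n(s^{(t)})\bigr)^{k}\right) + s^{(t)}
  \le C\bigl(s(d)\bigr)^{2k}\bigl(s^{(t)}\bigr)^{2k}
\]
for a suitable absolute constant $C$.

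\textbf{Unrolling the recurrence.} The next step is to solve this by induction on $t$, showing
\[
  s^{(t)} \le \bigl(C' \cdot s(d)\bigr)^{(2k+1)^{t}}
\]
for a constant $C'$ chosen large relative to $C$ (and to the constant hidden in $s^{(1)}\le O(s(d))$). The base case $t=1$ is immediate from $s^{(1)}\le O(s(d))$. For the inductive step, substituting the hypothesis into the recurrence gives $s^{(t+1)}\le C\bigl(s(d)\bigr)^{2k}\bigl(C' s(d)\bigr)^{2k(2k+1)^{t}}$, and since $2k + 2k(2k+1)^{t}\le(2k+1)^{t+1}$ and $C\cdot (C')^{2k(2k+1)^{t}}\le (C')^{(2k+1)^{t+1}}$ (the latter because $(2k+1)^{t}\ge 2$ and $C'\ge\max(C,2)$), this is at most $\bigl(C' s(d)\bigr)^{(2k+1)^{t+1}}$, as desired. (Equivalently one can take logarithms and solve the linear recurrence $\log s^{(t+1)}\le 2k\log s^{(t)} + O(k\log s(d))$; I would present the direct induction since it is cleaner.)

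\textbf{Finishing.} Finally, invoke Lemma~\ref{lem:iterations}, which bounds the number of iterations of Algorithm~\ref{alg:construction} by $T\le O(1/\eps^2)$. Since $k$ is a fixed constant, $(2k+1)^{T}=2^{O(T)}=2^{O(1/\eps^2)}$, so
\[
  s^{(T)} \le \bigl(C' s(d)\bigr)^{(2k+1)^{T}} = \bigl(C' s(d)\bigr)^{2^{O(1/\eps^2)}} = O\!\left(s(d)^{2^{O(1/\eps^2)}}\right),
\]
where the constant factor $C'$ raised to the $2^{O(1/\eps^2)}$ power is absorbed into the exponent's $O(\cdot)$ (and the case $s(d)\le 1$ is trivial). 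The returned circuit computes $\pt = p^{(T)}$, which by the termination condition of Algorithm~\ref{alg:construction} satisfies $(\A,\eps)$-\four by the correctness argument already established, completing the proof.

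\textbf{Main obstacle.} Conceptually there is no real obstacle: the argument is the same template used for \three and for the quasilinear case, and the only point requiring care is bookkeeping — correctly absorbing the $\log$ factors from Proposition~\ref{prop:description} and the constant factors that get raised to a doubly-exponential power into the stated asymptotic bound. This is routine once one commits to the crude coarsening above and folds everything into the exponent.
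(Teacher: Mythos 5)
Your proposal is correct and follows essentially the same route as the paper's proof: apply Lemma~\ref{lem:generic:four} with the encoding bound from Proposition~\ref{prop:description}, coarsen to a doubling-exponent recurrence, solve by induction, and plug in $T=O(1/\eps^2)$. The only cosmetic difference is that the paper coarsens to $s^{(t+1)}\le(3\,s^{(t)})^{2k}$ (absorbing $s(d)$ into $s^{(t)}$, giving exponent $(2k)^T$), whereas you keep $s(d)$ as an explicit multiplicative factor (giving exponent $(2k+1)^T$); both yield $2^{O(1/\eps^2)}$ and the same final bound.
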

\begin{proof}
Applying Lemma~\ref{lem:generic:four}, we can bound the growth rate of the size of $p^{(t)}$ for all $t \ge 1$ loosely through the following recurrence.
\begin{align*}
    s^{(t+1)} &\le \left(s(d) + n(s^{(t)})\right)^k + s^{(t)}\\
    &\le \left(3 \cdot s^{(t)}\log(s^{(t)})\right)^k\\
    &\le \left(3 \cdot s^{(t)}\right)^{2k}
\end{align*}
This recurrence can be bounded as
\begin{gather*}
s^{(T)} \le 3^{((2k)^{T+1}-1)/(2k-1)} \cdot \left(s(d)\right)^{(2k)^T}    
\end{gather*}
for a sufficiently large constant $B$.
To start, $s^{(1)} \le O(s(d))$.
Then, inductively, we bound the growth as follows.
\begin{align*}
    s^{(t+1)} & \le 3^{2k}\cdot \left(3^{((2k)^{t+1}-1)/(2k-1)} \cdot\left(s(d)\right)^{(2k)^{t}}\right)^{2k}\\
    &\le 3^{((2k)^{t+2}-1)/(2k-1)} \cdot \left(s(d)\right)^{(2k)^{t+1}}
\end{align*}
Again, by the bound on the number of iterations $T = O(1/\eps^2)$ and by the assumption that $k = O(1)$ is a constant, we can simplify the expression for the size of $\pt$ to $s^{(T)} \le s(d)^{2^{O(1/\eps^2)}}$.
\end{proof}

\subsection{Learning \OI Predictors from Samples}
\label{sec:beyond:samples}

While we describe Algorithm~\ref{alg:construction} primarily to establish the complexity of representing predictors $\pt$ that satisfy \three and \four, in principle, it could be implemented as a learning algorithm that works over labeled samples $\set{(i,\os_i)}$ drawn from nature.
We remark on the statistical and computational complexity of learning such predictors.

\paragraph{Bounding the sample complexity.}
The main statistical cost for implementing Algorithm~\ref{alg:construction} comes from the need to estimate $\Delta_A^{(t)}$ for each $A \in \A$ at each iteration $t \le O(1/\eps^2)$.
Using standard concentration inequalities and union bounding, we can upper bound the number of samples per fixed iteration needed to estimate $\Pr_{\Dnatsample}\lr{A(i,\os_i;p^{(t)} = 1}$ to accuracy $\Theta(\eps)$.
For instance, applying Hoeffding's inequality, we can see that $m_t \ge \Omega\left(\frac{\log(\card{\A}/\delta)}{\eps^2}\right)$ labeled samples suffice to estimate $\Delta^{(t)}_A$ for each $A \in \A$ with probability at least $1-\delta$.

Naively, we can take a fresh set of labeled samples for each iteration, so by iteration complexity $T \le O(1/\eps^2)$, then in total, $m_{1:T} \ge \Omega\left(\frac{\log(\card{\A}/\delta\eps)}{\eps^4}\right)$ labeled samples suffice.
Importantly, we cannot simply union bound over the iterations because the statistics we need to estimate at the $t$th iteration depend on $p^{(t)}$, and thus, depend on the estimates from earlier iterations.
More sophisticated approaches to bounding the sample complexity are possible.
For instance, to beat the approach of naive resampling, \cite{hkrr} apply the machinery for proving generalization in adaptive data analysis via differential privacy \cite{dmns,dwork2015generalization,dwork2015reusable,dwork2015preserving,bnsssu15,jung2020moment}, resulting in improved sample complexity.\footnote{\cite{kim} covers the approach of \cite{hkrr} for establishing generalization in detail.}

\paragraph{Bounding the computational complexity.}
While we bound the iteration complexity efficiently, naively, each iteration requires at least $\Omega(\card{\A})$ time to iterate through each $A \in \A$.
If we take $\A$ to be an arbitrary class of distinguishers, this complexity bound cannot be improved significantly under natural complexity-theoretic assumptions, even for \one.
In particular, in the context of learning multi-accurate and multi-calibrated predictors, \cite{hkrr} demonstrated that the search problem for a violated constraint requires agnostic learning, a notoriously hard problem (e.g., \cite{regev2010learning,kalai2008agnostic,feldman2010distribution}).

On the positive side, \cite{hkrr} also showed a reverse direction:  if $\C$ is taken to be a class of (agnostically) learnable functions, then $(\C,\alpha)$-multi-calibrated predictors are also learnable (in complexity that scales with the complexity of learning $\C$).
By analogy, if we take $\A$ to be a structured class of distinguishers there is hope that the problem of searching for a distinguisher $A \in \A$ with significant advantage may be reducible to an efficient learning problem.
Such a reduction would follow from techniques similar to those discussed in \cite{hkrr,kim}.

\section{Barriers to Efficient \OI}
\label{sec:lowerbound}

We define ensembles of scalably hard functions, and show that they imply the hardness of \three.  In Section \ref{subsec:cliques-OI} we show a candidate construction of scalably hard functions from clique counting and deduce that hardness of clique counting implies hardness of \three. In Section \ref{subsec:PSPACE-OI} we show results based on the existence of hard functions in $\PSPACE$. 

See Section \ref{sec:overview} for a technical overview of these results, and for a discussion of further candidates based on fine-grained complexity assumptions.

\subsection{Scalable Hardness and Hardness of OI}
\label{subsec:scalable}

\begin{definition}[Ensembles of functions, of distributions and of collections thereof]

An ensemble of functions $\{f_n : X_n \rightarrow Y_n\}_{n \in {\cal N}}$ is {\em computable} in time $t(n)$ if there exists a Turing machine that, on inputs of the form $\{(1^n,x) : x \in X_n\}$, runs in time $t(n)$ and correctly computes $f_n(x)$. Similarly, an ensemble of distributions $\{D_n\}_{n \in {\cal N}}$ is {\em sampleable} in time $t(n)$ if there exists a (probabilistic) Turing machine that, on input $1^n$, runs in time $t(n)$, and whose output distribution is $D_n$.

An ensemble of {\em collections} of functions ${\cal F} = \{ \{ f_{i,n} : X_n \rightarrow Y_n \}_{i=1}^{m(n)} \}_{n \in {\cal N}}$, where the $n$-th collection is of size  $m(n)$, is computable in time $t(n)$ if there exists a Turing machine that, on inputs of the form $\{(1^n,i,x) : i \in [1,\ldots,m(n)], x \in X_n\}$, runs in time $t(n)$ and outputs $f_{i,n}(x)$. We say that computing the ensemble {\em requires} randomized time $\ell(n)$ if it is not computable in randomized time $o(\ell(n))$ (i.e. there is no Turing machine running in this time that computes the function correctly w.h.p. over its coins). Similarly, an ensemble of {\em collections} of distributions ${\cal D} = \{ \{ D_{i,n} \}_{i=1}^{m(n)} \}_{n \in {\cal N}}$, is {\em sampleable} in time $t(n)$ if there exists a (probabilistic) Turing machine that, on inputs of the form $\{(1^n,i) : i \in [1,\ldots, m(n)]\}$, runs in time $t(n)$, and whose output distribution is $D_{i,n}$.

\end{definition}

\begin{definition}[Scalable hardness]
\label{def:scalable-funcs}

Let ${\cal F} =\{ \{f_{i,n}: X_{n} \rightarrow Y_n\}_{i=1}^{m(n)} \}$ be an ensemble of collections of functions, where $\{X_{n},Y_n \subseteq \{0,1\}^{\poly(n)}\}$ and $m(n) = \poly(n)$.
We say that the ensemble ${\cal F}$ has {\em scalable hardness} if the following conditions hold:

\begin{itemize}
    \item {\bf Complexity bounds.}  The ensemble ${\cal F}$ can be computed in time $t_f(n)$, and {\em requires} randomized time $\ell_f(n)$.
    
    \item {\bf Downwards self-reduction.} There is an oracle Turing Machine $Q$ s.t.:
    \begin{align*}
    \forall n \in {\cal N}, i \in [1,\ldots,m(n)], x \in X_{n}: Q^{f_{i-1,n}}(1^n,i,x) = f_{i,n}(x),
    \end{align*}
    where we define $f_{0,n}$ to be the all-zero function $\bar{0}$ (i.e. for $i=1$ the oracle does not help $Q$ in its computation).
    Let $t_Q(n)$ and $q_Q(n)$, which we refer to as the {\em runtime and query complexity (respectively) of ${\cal F}$'s downward self-reduction}, bound the (worst case) running time and query complexity (respectively) of $Q$ on inputs of the form $(1^n,\cdot,\cdot)$.
    
    \item {\bf Random self-reduction.} There is an oracle Turing Machine $R$, an error rate $\errate:{\cal N} \rightarrow [0,1]$, and an ensemble of collections of distributions ${\cal D} = \{ \{D_{i,n}\}_{ i \in [1,\ldots,m(n)]} \}_{n \in {\cal N}}$ that can be sampled in time $t_D(n)$, where each $D_{i,n}$ is over $X_{n}$, s.t. for every $ n \in {\cal N}, i \in [1,\ldots,m(n)]$ and every function $f^*: X_{n} \rightarrow \{0,1\}$, if it holds that $\Pr_{x \sim D_{i,n}}  \lr{ f^*(x) \neq f_{i,n}(x) }  < \errate(n)$,
then:  
    \begin{align*} \forall x \in X_{n}: \Pr \lr{ R^{f^*}(1^n,i,x) = f_{i,n}(x) } \geq 2/3, 
    \end{align*}
    where this latter probability is only over the coin tosses of $R$. 
    
    Let $t_R(n)$ and $q_R(n)$, which we refer to as the {\em runtime and query complexity of ${\cal F}$'s random self-reduction} (respectively),  bound the (worst case) running time and query complexity (respectively) of $R$ on inputs of the form $(1^n,\cdot,\cdot)$. We refer to $\errate(n)$ as  {\em ${\cal F}$'s error rate} and we refer to $t_D(n)$ as {\em ${\cal F}$'s sampling time}.
\end{itemize}

\end{definition}

\paragraph{Non-Boolean to Boolean scalable hardness.} We will be particularly interested in scalably hard {\em Boolean} functions, where the range is $\{0,1\}$ (for all $n \in {\cal N}$). We use the Goldreich-Levin hardcore predicate \cite{goldreich-levin} to transform non-Boolean scalably hard functions into Boolean ones by taking an inner product of the output with a random input string. The degradation in the parameters is polynomial in the output length of the original functions:

\begin{proposition}[Non-Boolean to Boolean scalable hardness]
\label{prop:booleanize}

Let ${\cal F} = \{ f_{i,n}: X_n \rightarrow \{0,1\}^{y(n)} \}$ be an ensemble of collections of functions with scalable hardness (see Definition \ref{def:scalable-funcs}) where each collection in the ensemble is of size $m(n)$ and the range of the functions is $\{0,1\}^{y(n)}$. Let $t_f(n)$ and $\ell_f(n)$  be ${\cal F}$'s complexity upper and lower bounds (respectively), let  $t_Q(n)$ and $q_Q(n)$ be ${\cal F}$'s downward self-reduction's runtime and query complexity (respectively), let  $t_R(n)$ and $q_R(n)$ be ${\cal F}$'s random self-reduction's runtime and query complexity (respectively), let $t_D(n)$ be ${\cal F}$'s sampling time and let $\errate(n)$ be ${\cal F}$'s error rate.

Then there exists an ensemble ${\cal G}= \{ g_{i,n}: \left( X_n \times \{0,1\}^{y(n)} \right)\rightarrow \{0,1\} \}$ of collections of {\em Boolean} functions with scalable hardness, where each collection is of the same size $m(n)$ and where the following hold:
\begin{itemize}
    \item  The complexity upper and lower bounds (respectively) of ${\cal G}$ are $t_g(n) = O(t_f(n))$ and $\ell_g(n) = \Omega(\ell_f(n) / y(n) \cdot \log(y(n)))$.
    
    \item  The runtime and query complexity of ${\cal G}$'s downwards self-reduction are (respectively) $t'_Q(n) = O(t_Q(n) + (q_Q(n) \cdot y^2(n)))$ and $q'_Q(n) = O(q_Q(n) \cdot y(n))$.
    
    \item  The runtime and query complexity of ${\cal G}$'s  random self-reduction are (respectively) $t'_R(n) =\tilde{O}(t_R(n) + (q_R(n) \cdot y^2(n)))$ and $q'_R(n) = \tilde{O}(q_R(n) \cdot y(n))$. The sampling time of ${\cal G}$ is $t'_D(n) = O(t_D(n) + y(n))$ and the error rate is $\errate'(n) = ((1/4 - \alpha) \cdot \errate(n))$ for an arbitrarily small constant $\alpha > 0$.
\end{itemize}
\end{proposition}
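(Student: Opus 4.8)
The plan is to apply the Goldreich--Levin hardcore predicate directly to the outputs of $\cal F$. For each $n$ and $i \in [m(n)]$, define
\[
g_{i,n}(x,r) \;=\; \langle f_{i,n}(x),\, r \rangle \bmod 2, \qquad x \in X_n,\ r \in \{0,1\}^{y(n)} ,
\]
so that (using the convention $f_{0,n}=\bar 0$) $g_{0,n}$ is the all-zero function and $\cal G$ is indexed consistently. Since $y(n)\le\poly(n)$ and $m(n)=\poly(n)$, $\cal G$ is again an ensemble of collections over $\poly(n)$-size domains with collections of size $m(n)$; it then remains to verify the four scalable-hardness properties, each by a reduction in the relevant direction. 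The complexity upper bound is immediate: one evaluation of $f_{i,n}$ plus an inner product of $y(n)$-bit strings gives $t_g(n)=O(t_f(n)+y(n))=O(t_f(n))$ (as $t_f(n)=\Omega(y(n))$). For the lower bound, any randomized algorithm computing $\cal G$ in time $T$ yields one for $\cal F$ by querying $g_{i,n}(x,e_1),\dots,g_{i,n}(x,e_{y(n)})$ on the standard basis vectors, amplifying each answer to error $1/(3y(n))$ with $O(\log y(n))$ repetitions and a majority vote, and taking a union bound; this costs $O(y(n)\log y(n))\cdot T$, so $\cal G$ requires randomized time $\ell_g(n)=\Omega\!\big(\ell_f(n)/(y(n)\log y(n))\big)$.

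For the downwards self-reduction of $\cal G$: to compute $g_{i,n}(x,r)$ given an oracle for $g_{i-1,n}$, run $\cal F$'s downwards self-reduction $Q^{(\cdot)}(1^n,i,x)$ and answer each of its queries ``$f_{i-1,n}(x')$?'' by reading the $y(n)$ values $g_{i-1,n}(x',e_1),\dots,g_{i-1,n}(x',e_{y(n)})$, then output $\langle f_{i,n}(x),r\rangle$. This multiplies the query complexity by $y(n)$ and adds $O(q_Q(n)\,y^2(n))$ to the running time (for writing out the $y(n)$-bit query strings), giving $q'_Q(n)=O(q_Q(n)\,y(n))$ and $t'_Q(n)=O(t_Q(n)+q_Q(n)\,y^2(n))$.

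The random self-reduction of $\cal G$ uses the distribution $D'_{i,n}=D_{i,n}\times U_{y(n)}$ (sampleable in time $O(t_D(n)+y(n))$) and error rate $\errate'(n)=(1/4-\alpha)\errate(n)$. Suppose $g^*$ satisfies $\Pr_{(x,r)\sim D'_{i,n}}[g^*(x,r)\neq g_{i,n}(x,r)]<\errate'(n)$. Writing $p(x)=\Pr_r[g^*(x,r)\neq g_{i,n}(x,r)]$, Markov's inequality shows that for all but an $\errate(n)$-fraction of $x\sim D_{i,n}$ we have $p(x)<1/4-\alpha/2$, i.e.\ $g^*(x,\cdot)$ agrees with the Hadamard encoding of $f_{i,n}(x)$ above the $3/4$ threshold with margin $\Omega(\alpha)$. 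On such ``good'' $x$, the Goldreich--Levin local decoder --- majority, over $\tilde{O}(1)$ random $s$, of $g^*(x,s)\oplus g^*(x,s\oplus e_j)$ for output bit $j$ --- recovers $f_{i,n}(x)$ with high probability using $\tilde{O}(y(n))$ queries to $g^*$. Fixing the decoder's coins by an averaging argument (the margin in $\alpha$ absorbs the residual decoding error) produces a \emph{deterministic} function $f^{**}$ that is $\errate(n)$-close to $f_{i,n}$ under $D_{i,n}$; feeding $f^{**}$ to $\cal F$'s random self-reduction $R$ then computes $f_{i,n}(x)$ in the worst case w.h.p., and a final inner product with $r$ returns $g_{i,n}(x,r)$. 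Each of $R$'s $q_R(n)$ queries is simulated by $\tilde{O}(y(n))$ queries to $g^*$, so $q'_R(n)=\tilde{O}(q_R(n)\,y(n))$; charging $\tilde{O}(y^2(n))$ per simulated query for the $y(n)$-bit addresses and a final $O(y(n))$ for the inner product gives $t'_R(n)=\tilde{O}(t_R(n)+q_R(n)\,y^2(n))$.

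\textbf{Main obstacle.} The only genuinely delicate step is this last reduction: one must compose the \emph{randomized}, only \emph{locally} correct Goldreich--Levin decoder with $\cal F$'s random self-reduction, whose guarantee is stated for a \emph{fixed} function approximating $f_{i,n}$ \emph{globally} under $D_{i,n}$. This is precisely why the error rate must be set to $(1/4-\alpha)\errate(n)$ rather than, say, $\Theta(\errate(n))$: the $1/4$ places us in the Hadamard unique-decoding regime so that local decoding returns a single candidate, and the extra factor $(1-O(\alpha))$ of slack is exactly what lets the fixed-coins deterministic function $f^{**}$ stay within the $\errate(n)$ budget that $R$ demands. The downwards-self-reduction and complexity bookkeeping are routine by comparison.
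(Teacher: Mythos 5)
Your construction is exactly the one in the paper: $g_{i,n}(x,r)=\langle f_{i,n}(x),r\rangle$, lower bound via querying basis vectors with $O(\log y(n))$ repetition per bit, downwards self-reduction by expanding each $f_{i-1,n}$-query into $y(n)$ exact $g_{i-1,n}$-queries, and random self-reduction via Markov plus Goldreich--Levin local decoding fed into $\mathcal{F}$'s reduction $R$; all the quantitative conclusions match. The one place you diverge slightly is in resolving the technical mismatch between the randomized GL decoder and $R$'s requirement of a fixed $\mu$-close oracle: you fix the decoder's coins by averaging and pay for the residual decoding failure out of the slack between the Markov threshold $1/4-\alpha/2$ and the error-rate coefficient $1/4-\alpha$, whereas the paper keeps the decoder randomized, union-bounds the failure over $R$'s queries, and absorbs the resulting $1/100$ loss in $R'$'s success probability by a final constant-factor repetition. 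Both resolutions are valid and the rest of the argument is identical, so this is the same proof up to that bookkeeping choice.
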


\begin{proof} The collection ${\cal G}$ is defined by $g_{i,n}(x,r) = \langle x , r \rangle$, where $x \in X_n, r \in \{0,1\}^{y(n)}$ and $\langle \cdot, \cdot \rangle$ is the inner product over $\GF[2]$. The complexity upper bound follows by construction. The lower bound follows because computing $f_{i,n}$ on an input $x$ can be reduced to computing each of its $y(n)$ output bits, and each output bit can be computed by a call to $g_{i,n}$ (where we take the majority of $\log(y(n))$ such calls to $g_{i,n}$ to reduce any probability of error).

The downwards self-reduction for $g_{i,n}$ on input $(x,r)$ uses the downwards self-reduction for $f_{i,n}$ on input $x$. Each call to $f_{i-1,n}$ can be simulated as described above, using $y(n)$ calls to $g_{i-1,n}$. Thus, the increase in the query complexity is a multiplicative $y(n)$ factor, the increase in the runtime is due to the time needed to make these additional queries. 

The random self-reduction for $g_{i,n}$ is to a distribution $D'$ obtained by sampling $x$ from ${\cal F}$'s ``hard'' distribution $D_{i,n}$ and appending a uniformly random Boolean string $r$ of length $y(n)$. The sampling time follows by construction. The random self-reduction $R'$ for $g_{i,n}$ gets access to an oracle $g^*$ that computes $g_{i,n}$ correctly with high probability over an input drawn from $D'$. On input $(x,r)$, $R'$ computes $f_{i,n}$ on $x$ and takes the inner product with $r$. Computing $f_{i,n}(x)$ is done by applying the random self-reduction for $f_{i,n}$ on input $x$. The latter reduction needs access to an oracle $f^*$ with error rate at most $\mu$ (over inputs drawn from $D_{i,n}$). We use the oracle to $g^*$ to simulate such an oracle. Let $\alpha > 0$ be a constant, and assume that for input $x \in X_n$:
\begin{align*}
    \Pr_{r \sim U_{y(n)}} \left[ g^*(x,r) = g_{i,n}(x,r) \right] \geq 3/4 + \alpha.
\end{align*}
In this case, we can reconstruct the $j$-th bit of $f_{i,n}(x)$ with probability greater than half by XORing the values $g^*(x,r)$ and $g^*(x,r \oplus e^{(j)})$ (where $r$ is uniformly random and $e^{(j)}$ is the $j$-th unit vector). By repeating this $O(\log(y(n)) + \log(q_R(n)))$ times and taking the majority, we reconstruct the $j$-th bit of $f_{i,n}(x)$ with all but $1/(100q_R(n) \cdot y(n))$ probability. Repeating this for each of the bits and taking a union bound, we recover $f_{i,n}(x)$ with all but $1/100q_R(n)$ probability.

Now, since the error rate of $g^*$ is at most $\mu(1/4 - \alpha)$, we get that:
\begin{align*}
    \Pr_{x \sim D_{i,n}} \left[ \Pr_{r \sim U_{y(n)}} \left[ g^*(x,r) \neq g_{i,n}(x,r) \right] > 1/4 - \alpha \right] < \mu.
\end{align*}
Thus, when $R'^{g^*}$ invokes $f_{i,n}$'s random self-reduction $R$, it is simulating an oracle $f^*$ that, for all but a $\mu$-fraction of the inputs drawn from $D_{i,n}$, computes $f_{i,n}$ correctly with all but $1/100q_R(n)$ probability over its own coins. By the guarantee of $f_{i,n}$'s random self-reduction, and taking a union bound over the error probability in answering all of $R$'s queries, we conclude that the invocation of $R$ will return the correct answer with probability at least $(2/3 - 1/100)$. The error probability can be reduced to $2/3$ by repeating (a constant number of times).

The number of queries made by the reduction $R'$ is (by construction) $O(q_R(n) \cdot y(n) \cdot (\log(y(n)) + \log(q_R(n))))$. The running time of $R'$ is $O(t_R(n) + (q_R(n) \cdot (\log(y^2(n)) + \log(q_R(n)))))$.
\end{proof}

\paragraph{Hardness of OI from scalable hardness.} We show that ensembles of scalably hard Boolean functions imply that OI is hard. Note that by Proposition \ref{prop:booleanize} we derive a similar conclusion for ensembles of scalably hard non-Boolean functions. The complexity upper and lower bounds follow by construction.

\begin{theorem}
\label{thm:scalable-to-OI-hardness}

Suppose ${\cal F}$ is an ensemble of collections of functions with scalable hardness (see Definition \ref{def:scalable-funcs}), where the functions are all Boolean (i.e. $Y_n= \{0,1\}$ for all $n$), and each collection in the ensemble is of size $m(n)$. Let $t_f(n)$ and $\ell_f(n)$  be ${\cal F}$'s complexity upper and lower bounds (respectively), let  $t_Q(n)$ and $q_Q(n)$ be ${\cal F}$'s downward self-reduction's runtime and query complexity (respectively), let  $t_R(n)$ and $q_R(n)$ be ${\cal F}$'s random self-reduction's runtime and query complexity (respectively), and let $t_D(n)$ be ${\cal F}$'s sampling time. Suppose further that ${\cal F}$'s error rate $\mu(n)$ is at most 0.06.

Then there exist:

\begin{itemize}

    \item an ensemble of predictors $\ps = \{\ps_n: Z_n \rightarrow \{0,1\}\}$ , where $Z_n \subseteq \left( [1,\ldots,m(n)] \times X_n \right)$, that is computable in time $O(t_f(n))$,
    
    \item an ensemble of distributions $H = \{H_n \}$, which can be sampled in time $O(t_D(n))$, where each $H_n$ is over $Z_n$,
    
    \item and an ensemble $\{{\A}_n\}$ of collections of distinguishers, where the $n$-th collection is of size $m(n)$, that is computable in time 
    $O \left( t_Q(n) + (t_R(n) \cdot q_Q(n) \cdot \log(q_Q(n))) \right)$,
    
\end{itemize}
    
such that for every ensemble of predictors $\{ \pt_n: Z_n \rightarrow [0,1] \}$ computable in time $o\left( \frac{\ell_f(n)}{q_R(n)} - t_R(n) \right)$, for infinitely many values of $n$, the predictor $\pt_n$ is {\bf not}
 $(\A_n,\varepsilon(n))$-OI with respect to $\ps_n$ and the distributions $H_n$, where $\varepsilon(n) = 1/100m(n)$.
    
\end{theorem}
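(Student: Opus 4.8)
The plan is to implement the inductive argument sketched in the technical overview: build $\ps_n$, the hard distribution $H_n$, and the distinguisher collection $\A_n$ so that being $(\A_n,\varepsilon(n))$-OI forces $\pt_n$ to compute each $f_{j,n}$ correctly with high probability over the corresponding hard distribution, and then invoke the complexity lower bound $\ell_f(n)$ on the top function. Concretely, I would first partition the domain: set $Z_n = \bigcup_{j=1}^{m(n)} (\{j\} \times X_n)$, define $\ps_n(j,x) = f_{j,n}(x)$, and let $H_n$ be the mixture that picks $j \sim [m(n)]$ uniformly and then $x \sim D_{j,n}$ (the hard distribution from $\mathcal F$'s random self-reduction); sampling $H_n$ costs $O(t_D(n) + \log m(n)) = O(t_D(n))$ as claimed. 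Next I would define $\A_n = \{A_1,\dots,A_{m(n)}\}$ where, on input $(i,o_i)$ with $i = (j,x)$, the distinguisher $A_{j'}$ ignores the sample unless $j = j'$, and when $j = j'$ it computes $f_{j,n}(x)$ \emph{using the predictor oracle on coordinates in $\{1,\dots,j-1\}\times X_n$} via the composition of $\mathcal F$'s downwards self-reduction $Q$ (reducing $f_j$ to $f_{j-1}$) with the random self-reduction $R$ applied to $f_{j-1}$ (to clean up the predictor's average-case errors before feeding answers to $Q$), and finally outputs $\mathbf 1[o_i = f_{j,n}(x)]$ (plus the uniform ``accept unconditionally'' behavior on the $(1-1/m(n))$ fraction of coordinates $j' \neq j$, so that the base acceptance probability is controlled). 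The running time of $A_j$ is that of one downwards self-reduction whose $q_Q(n)$ oracle calls are each answered by a random self-reduction making $q_R(n)$ predictor queries with $O(\log q_Q(n))$-fold repetition for amplification, giving the stated $O(t_Q(n) + t_R(n)\cdot q_Q(n)\cdot\log q_Q(n))$ bound.

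The heart of the proof is the induction on $j$: I would show that if $\pt_n$ is $(\{A_1,\dots,A_j\},\varepsilon(n))$-OI with respect to $\ps_n$ and $H_n$, then $\Pr_{x \sim D_{j,n}}[\,\mathrm{round}(\pt_n(j,x)) \neq f_{j,n}(x)\,] < \mu(n)$. For the base case $j=1$: since $A_1$ accepts $(i,\ot_i)$ on coordinate $1$ only when $\ot_i = f_{1,n}(x)$, and on $\Dnat$ it always sees $\os_i = f_{1,n}(x)$, the OI advantage bound of $\varepsilon(n) = 1/100m(n)$ together with the fact that coordinate $1$ carries $1/m(n)$ of the mass forces $\Pr_{x\sim D_{1,n}}[\ot_i \neq f_{1,n}(x)] \le 1/100 < \mu(n)$, i.e.\ $\E[\,|\pt_n(1,x) - f_{1,n}(x)|\,] \le 1/100$; since $f_{1,n}$ is Boolean, rounding errs with probability at most $1/100 \cdot 2 = 0.02 \le \mu(n) \le 0.06$ (I'd check the constant carefully — this is where the $0.06$ hypothesis is spent). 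For the inductive step: the inductive hypothesis says $\mathrm{round}\circ\pt_n$ computes $f_{j-1,n}$ with error $<\mu(n)$ over $D_{j-1,n}$, so $\pt_n$ (after rounding) is a legitimate average-case oracle for the random self-reduction $R$ of $f_{j-1,n}$; hence $A_j$'s internal computation correctly evaluates $f_{j,n}(x)$ for every $x \in X_n$ with probability $\ge 2/3$ (amplifiable to, say, $1-1/100m(n)$ by repetition folded into the time bound), which by the same OI-advantage argument on coordinate $j$ forces $\pt_n$ to agree with $f_{j,n}$ on all but a $\mu(n)$ fraction of $D_{j,n}$. At the top level $j = m(n)$ we conclude $\mathrm{round}\circ\pt_n$ computes $f_{m(n),n}$ with error $<\mu(n) \le 0.06$ over $D_{m(n),n}$; composing $\pt_n$ with $\mathcal F$'s random self-reduction for $f_{m(n),n}$ then yields a randomized worst-case algorithm for the full collection $\mathcal F$ running in time $o(\ell_f(n)/q_R(n) - t_R(n)) \cdot q_R(n) + t_R(n) = o(\ell_f(n))$, contradicting the $\ell_f(n)$ lower bound — so OI must fail for infinitely many $n$.

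The main obstacle I anticipate is the careful bookkeeping of error probabilities in the inductive step: the downwards self-reduction $Q$ makes $q_Q(n)$ queries to $f_{j-1,n}$, and each such query is answered by running $R$ against the noisy oracle $\mathrm{round}\circ\pt_n$, which succeeds only with probability $2/3$ per query before amplification — so I need to amplify $R$ to failure probability $\ll 1/q_Q(n)$ and union-bound over all $q_Q(n)$ queries, and separately ensure that the ``cleaned'' oracle actually has error rate below $\errate(n)$ as required by $R$'s guarantee (this is precisely why the error rate $\mu(n) \le 0.06$ hypothesis is invoked, and why Proposition~\ref{prop:booleanize} was set up to push error rates down). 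The other delicate point is that $\pt_n$ is real-valued, not Boolean: I must argue that thresholding at $1/2$ is the right thing and that the small-$\varepsilon$ choice makes the rounded predictor agree with $f_{j,n}$ everywhere it matters — this is routine but needs the observation that $A_j$ only accepts when $o_i$ exactly equals the Boolean value $f_{j,n}(x)$, so $\E_{x\sim D_{j,n}}[\,|\pt_n(j,x) - f_{j,n}(x)|\,]$ is directly bounded by the (rescaled) OI advantage. Everything else — the complexity accounting for $\ps_n$, $H_n$, and $\A_n$, and the final contradiction with $\ell_f(n)$ — is bookkeeping that follows the constructions above.
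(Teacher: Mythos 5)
Your proposal matches the paper's proof essentially step-for-step: the same construction of $Z_n$, $\ps_n$, and $H_n$ as a uniform mixture over $j$ of the hard distributions $D_{j,n}$; the same distinguishers $A_j$ that combine the downwards self-reduction $Q$ with a repeated, majority-amplified invocation of the random self-reduction $R$ applied to the rounded oracle $\lceil \pt \rfloor$; the same inductive invariant (rounded predictor agrees with $f_{j,n}$ with high probability over $D_{j,n}$) with base case driven by $A_1$ and inductive step driven by $A_j$; and the same conclusion that plugging $\lceil\pt\rfloor$ into $R$ gives a too-fast randomized algorithm for $\mathcal F$. The only items you defer ("check the constants carefully") are exactly the constant-chasing the paper carries out (error $0.02$ in the base case, $0.06$ in the inductive step, requiring $\errate(n)$ to accommodate this), so the proposal is correct and takes the same approach.
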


\begin{proof}

Fix a value of $n$, and let $m = m(n)$. We construct $\ps = \ps_n$ and the distribution $H = H_n$ as follows:

\begin{itemize}
    \item The domain is $Z = Z_n = \{(i,x): i \in [1,\ldots, m(n)], x \in X_{i,n} \}$, where $X_{i,n}$ is the domain of the $i$-th function in the collection ${\cal F}_n$. We use $Z_i$ to denote the subset of $Z$ where the first coordinate is fixed to $i$ (note that these subsets are disjoint).
    
    \item For an input in $Z$ we have $\ps(i,x) = f_{i,n}(x)$, where $f_{i,n}$ is the $i$-th function in the collection ${\cal F}_n$. Note that the range of $\ps$ is Boolean.
    
    \item The distribution $H$ (over $Z$), has a uniform marginal distribution on $i \in [1,\ldots,m]$. Conditioning on a fixed value of $i$, the conditional distribution on $x$ is $D_i = D_{i,n}$ (the $i$-th distribution in the collection ${\cal D}_n$). \end{itemize}

The collection of distinguishers $\{A_1,\ldots,A_m\}$ is described in Figure \ref{alg:distinguisher}. Each of these distinguishers gets oracle access to a predictor $p: Z \rightarrow [0,1]$. On input $(j \in [1,\ldots, m],x \in X_{j,n},\oo \in \{0,1\})$, they try to distinguish whether or not $\oo$ is drawn by $\ps$. 

\begin{algorithm}[t!]
\caption{\label{alg:distinguisher} \textsc{Distinguisher $A_i^p(j,x,\oo)$}}
\hrulefill

if $i \neq j$, output 0

otherwise, if $i=1$, output 1 iff $\oo = Q^{\bar{0}}(1^n,1,x)$ \hfill\texttt{//using the all-0 oracle to compute $f_1$}

otherwise, if $i \geq 2$:
\begin{algorithmic}

\STATE for $x' \in X_{i-1,n}$, let:

\begin{itemize}
    \item $g(x') = \lceil p(i-1,x') \rfloor$ \hfill\texttt{//rounding $p$'s answers}
    
    \item $g'(x') = R^{g}(1^n,i-1,x')$ \hfill\texttt{//random self-reduction}
    
    \item $g''(x')$ computes the majority answer in $t = O(\log(q_Q(n)))$ \\ independent executions of $g'$ on  $x'$ \hfill\texttt{//boosting the success probability}
\end{itemize}

\STATE output 1 iff $\oo = Q^{g''}(1^n,i,x)$ \hfill\texttt{//downwards self-reduction to compute $f_{i,n}(x)$}
\end{algorithmic}
\hrulefill
\end{algorithm}

The bound on the running time of the $A_i$'s follows by construction: $A_1$ runs in time $t_Q(n)$, simulating $Q(1^n,1,x)$ and answering all oracle calls using the 0 oracle. For $i \geq 2$, the distinguisher $A_i$ runs the downwards self-reduction, answering each of its oracle queries (which are meant to be answered by $f_{i-1,n}$) by running the random self-reduction $R(1^n,i-1,x)$ multiple times, and taking the majority of its answers (the repetition lowers the error probability, see below). The oracle queries in each invocation of  $R$ (which are to inputs in $X_{i-1,n}$) are answered using the given predictor $p$ (restricted to inputs in $Z_{i-1}$, i.e. with the first coordinate fixed to $(i - 1)$), rounding its answers to 0 or 1. By construction, the running time is $O(t_Q(n) + (t_R(n) \cdot q_Q(n) \cdot \log(q_Q(n)) ))$.

The intuition behind the hardness of achieving OI for the class of distinguishers ${\cal A}$ (see also Section \ref{sec:overview}) is that if $\pt$ is OI and is ``correct on average'' in computing $f_{i-1}$ over random inputs, then in the distinguisher $A_i$ (which gets oracle access to $\pt$), the procedure $g''$, which uses ${\cal F}$'s random self reduction, computes $f_{i-1}$ correctly w.h.p. on every input in $X_{i-1}$. By the downwards self-reducability of ${\cal F}$, this implies that for every input $(i,x,\oo)$, w.h.p. over its coins the distinguisher $A_i$ (with oracle access to $\pt$) accepts if and only if $\oo = f_i(n)$. Since it is always true that $\oo = f_i(n)$ when $\oo$ is drawn by $\ps$, the fact that $\pt$ is OI implies that it should also be true w.h.p. when $\oo$ is drawn by $\pt$. When we say a (real-valued) predictor $\pt$ ``correctly computes'' a Boolean function like $f_i$, we mean that when the value outputted by the predictor is {\em rounded}, the resulting boolean value agrees with $f_i$.

The distinguisher $A_1$ is constructed so that output indistinguishability w.r.t. this distinguisher implies that $\pt$ much be ``correct on average'' in computing $f_1$. This is the basis for an inductive argument, which shows that $pt$  must also be  ``correct on average'' in computing $f_i$ for every possible value of $i$. Plugging (the rounding of) $\pt$ into the random self-reduction $R$, we obtain a machine that computes ${\cal F}$ correctly for every input (w.h.p. over its coins). Since computing ${\cal F}$ requires randomized time $\ell_f(n)$, we conclude that computing $\pt$ requires time $\left( (\ell_f(n) / q_R(n)) - t_R(n) \right)$. The formal induction argument follows from the following two claims:

\begin{claim}[induction basis] \label{claim:lowerbound_induction_basis}
For every predictor $\pt:Z \rightarrow [0,1]$ that satisfies $(\{A_1\},\eps(n))$-OI w.r.t $\ps$, it must be the case that:
\begin{align*}
    \Pr_{x \sim D_{1}} \lr{\lceil \pt(1,x) \rfloor = f_{1,n}(x) } \geq 0.98.
\end{align*}
\end{claim}

\begin{proof}[Proof of Claim \ref{claim:lowerbound_induction_basis}]
The distinguisher $A_1$ accepts its input $(j,x,\oo)$ iff $j=1$ and $\oo = f_1(x)$. Since $p^*(j,x) = f_j(x)$ we have that:
\begin{align*}
    \Pr_{(j,x) \sim H, \os \sim \ps(j,x)} \lr{A_1(j,x,\os)=1} = 1/m(n).
\end{align*}
If $\pt$ is  $(\{A_1\},\eps(n))$-OI w.r.t $p^*$, where $\eps(n) = 1/100m(n)$, we have that:
\begin{align*}
    \Pr_{(j,x) \sim H, \ot \sim \pt(j,x)} \lr{A_1(j,x,\ot)=1} \geq 0.99/m(n).
\end{align*}
Now, since $A_1$ rejects when either $j\neq 1$ or $\ot \neq f_{1,n}(x)$, and since the distribution $H$ is uniformly random over its first coordinate, we conclude that:
\begin{align*}
    \Pr_{x \sim D_1, \ot \sim \pt(1,x)} \lr{\ot \neq f_{1,n}(x)} < 0.01.
\end{align*}
In particular, we conclude that:
\begin{align*}
    \Pr_{x \sim D_1} \lr{ \left| \pt(1,x) - f_{1,n}(x) \right| \geq 0.5 } < 0.02,
\end{align*}
and the claim follows.
\end{proof}

\begin{claim}[induction step] \label{claim:lowerbound_induction}
For every predictor $\pt:Z \rightarrow [0,1]$ and $i \in [2,\ldots,m]$. If $\pt$ is $(\{A_i\},\eps(n))$-OI w.r.t $\ps$ and also:
\begin{align} \label{eqn:induction_condition}
    \Pr_{x\sim D_{i-1}} \lr{\lceil \pt(i-1,x) \rfloor = f_{i-1,n}(x) } \geq 0.94,
\end{align}
then:
\begin{align*}
    \Pr_{x \sim D_{i}} \lr{\lceil \pt(i,x) \rfloor = f_{i,n}(x) } \geq 0.94,
\end{align*}
\end{claim}

\begin{proof}[Proof of Claim \ref{claim:lowerbound_induction}]

The condition in Equation \eqref{eqn:induction_condition} implies that $\pt$ is ``correct on-average'' for computing $f_{i-1}$: after rounding, it agrees with $f_{i-1}$ w.h.p. over random inputs from $D_{i-1}$. When we run the distinguisher $A_i$ with oracle access to $\pt$, by the random self-reducability of ${\cal F}$, we conclude that for each input $x \in X_{i-1}$, the procedure $g'$ computes $f_{i-1}$ correctly with probability $2/3$. The procedure $g''$, which runs $t = O(\log(q_Q(n)))$ independent invocations of $g'$, thus has an error probability bounded by $1/100q_Q(n)$. The procedure $Q^{g''}(1^n,i,\cdot)$ runs ${\cal F}$'s downwards self-reduction using $g''$ as an oracle. Taking a union bound over $Q$'s oracle calls, we conclude that the probability that they are all answered correctly is at least $0.99$. When this is the case, $Q^{g''}(1^n,i,\cdot)$ correctly computes the function $f_i$. Thus, we conclude that for any $x \in X_i$, the distinguisher $A_i(i,x,\oo)$ accepts w.p. at least 0.99 if $\oo=f_i(x)$, and otherwise it rejects w.p. at least 0.99.:
\begin{align*}
    \Pr_{x \sim D_i, \os \sim \ps(i,x)} \lr{A_i(i,x,\os)=1} \geq 0.99.
\end{align*}
Since $\pt$ is  $(\{A_i\},\eps(n))$-OI w.r.t $p^*$, where $\eps(n) = 1/100m(n)$, and since $A_i(j,x,\oo)$ always rejects if $j \neq i$ and the distribution $H$ is uniformly random over its first coordinate, it must be the case that:
\begin{align*}
    \Pr_{x \sim D_i, \ot \sim \pt(i,x)} \lr{A_i(i,x,\ot)=1} \geq 0.98.
\end{align*}
Now, since $A_i(i,x,\oo)$ rejects w.p. at least 0.99 when $\oo \neq f_{i,n}(x)$, we conclude that:
\begin{align*}
    \Pr_{x \sim D_i, \ot \sim \pt(i,x)} \lr{\ot \neq f_{i,n}(x)} < 0.03.
\end{align*}
In particular, we conclude that:
\begin{align*}
    \Pr_{x \sim D_i} \lr{ \left| \pt(i,x) - f_{i,n}(x) \right| \geq 0.5 } < 0.06,
\end{align*}
and the claim follows.

\end{proof}

\end{proof}

\subsection{Hardness of OI from Clique Counting}
\label{subsec:cliques-OI}

In this section we use the clique-counting problem to construct an ensemble of functions with scalable hardness, and derive a hardness result for OI. We describe this result, and conclude the section with a discussion of related work on the complexity of clique counting.

\begin{theorem}[Scalable hardness from clique counting]
\label{thm:scalable-clique}

For $k \in {\cal N}$, let $t_{\cliquec}(k,n)$ be the upper bound for counting the number of cliques of size $k$ in an $n$-vertex graph. Suppose further that there is no $o(\ell_{\cliquec}(k,n))$-time randomized algorithm for counting $k$-cliques. 

Then for every $m \in {\cal N}$, there exists an ensemble ${\cal F}$ of functions with scalable hardness (see Definition \ref{def:scalable-funcs}), where:
\begin{itemize}
    \item each collection in the ensemble is of size $m$, the domain is of size $\log |X_n| = \tilde{O}(n^2)$, and the range is $[n^m]$,
    \item the ensemble can be computed in time $t_f(n) = t_{\cliquec,m}(n)$ and requires time $\ell_{f,m}(n) = \ell_{\cliquec,m}(n)$,
    \item the runtime of ${\cal F}$'s downwards self-reduction is $t_Q(n) = \tilde{O}(n^3)$ and its query complexity is $q_Q(n) = n$,
    \item the runtime of ${\cal F}$'s random self-reduction is $t_R(n) = \tilde{O}(n^2)$ and its query complexity is $q_R(n) = \poly(\log n )$. ${\cal F}$'s sampling time is $\tilde{O}(n^2)$ and the error rate is $1/4 - \eps$ for an arbitrarily small constant $\eps > 0$.
\end{itemize}
\end{theorem}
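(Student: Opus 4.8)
The plan is to instantiate the scalable-hardness framework of Definition~\ref{def:scalable-funcs} directly with the clique-counting problem. I would take the domain $X_n$ to be the set of all $n$-vertex graphs, encoded by their $\binom{n}{2}$-bit adjacency strings (so $\log|X_n| = \binom{n}{2} = \tilde{O}(n^2)$), and for each $i \in [m]$ define $f_{i,n}(G)$ to be the number of $i$-cliques in $G$. The values $f_{1,n} \equiv n$ and $f_{2,n} \equiv |E(G)|$ are degenerate but are still legitimate members of the collection (one may instead reindex so that only the nontrivial sizes $3,\dots,m$ occur); in either case the range lies in $\{0,1,\dots,\binom{n}{m}\} \subseteq [n^m]$, and the collection has size $m$, as required.

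The three clauses of Definition~\ref{def:scalable-funcs} are then supplied as follows, in order. \textbf{Complexity bounds:} computing the collection on input $(1^n,i,G)$ is exactly counting $i$-cliques, so it is computable in time $t_f(n) = t_{\cliquec,m}(n)$ by definition of $t_{\cliquec}$; and since the hardest member is counting $m$-cliques, any randomized algorithm for the collection restricted to first coordinate $m$ would count $m$-cliques, so the assumed lower bound forces the collection to require randomized time $\ell_{f,m}(n) = \ell_{\cliquec,m}(n)$. \textbf{Downwards self-reduction:} I would use the exact combinatorial identity $\#\{k\text{-cliques in }G\} = \tfrac1k\sum_{v}\#\{(k-1)\text{-cliques in }G[N(v)]\}$. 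The oracle machine $Q$, on input $(1^n,i,G)$, outputs $n$ if $i=1$ and $|E(G)|$ if $i=2$ (no oracle call needed); for $i\ge 3$ it forms, for every vertex $v$, the subgraph induced on $N(v)$, pads it with isolated vertices up to exactly $n$ vertices (which leaves the number of cliques of size $\ge 2$ unchanged), queries the $f_{i-1,n}$ oracle on each of these $n$ graphs, sums the answers and divides by $i$. This is an exact identity, uses $q_Q(n) = n$ oracle queries to points of $X_n$, and runs in time $\tilde{O}(n^3)$ (building $n$ subgraphs of $O(n^2)$ bits each and summing $n$ integers of $O(\log n)$ bits), with the convention $f_{0,n} = \bar{0}$ never actually consulted. \textbf{Random self-reduction:} I would invoke the worst-case-to-average-case reduction of Goldreich and Rothblum~\cite{GoldreichR18} for counting $i$-cliques: it names a distribution $D_{i,n}$ over $n$-vertex graphs sampleable in time $\tilde{O}(n^2)$, runs in time $t_R(n) = \tilde{O}(n^2)$ (near-linear in the input length) making $q_R(n) = \poly(\log n)$ queries, and turns any oracle that agrees with $f_{i,n}$ on $D_{i,n}$ up to error rate $1/4 - \eps$ into a randomized algorithm computing $f_{i,n}$ correctly on every $n$-vertex graph with probability at least $2/3$. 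Collecting these three ingredients yields an ensemble $\cal F$ matching every clause of the theorem.

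I expect the main friction to be pure bookkeeping rather than a conceptual obstacle: (i) getting the padding right so that the downwards self-reduction's oracle queries stay inside the common domain $X_n$ while not perturbing any clique count used in the identity (which is why $i\in\{1,2\}$ must be treated as base cases, since padding would corrupt the $1$-clique count); (ii) making $Q$ a single machine that behaves uniformly over all $i$ together with the $f_{0,n}=\bar{0}$ convention; and (iii) quoting the Goldreich--Rothblum reduction with precisely the runtime, query-complexity, and error-rate parameters claimed---this is the one place where genuinely nontrivial content (the low-degree-polynomial structure of the clique count and its decoding from a constant fraction of errors) is being invoked, and everything else is routine. There is no new hardness argument here; the theorem is a packaging statement fitting clique counting into Definition~\ref{def:scalable-funcs}, after which Theorem~\ref{thm:scalable-to-OI-hardness}---composed with the Booleanization of Proposition~\ref{prop:booleanize}---carries the conclusion through to \three.
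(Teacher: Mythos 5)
Your proposal follows essentially the same path as the paper: same domain (adjacency-string encodings of $n$-vertex graphs), same functions $f_{i,n}$ as $i$-clique counts, the same combinatorial identity $i \cdot f_{i,n}(G) = \sum_v f_{i-1,n}(G^{(v)})$ for the downwards self-reduction, and the same invocation of the Goldreich--Rothblum worst-case to average-case reduction (their Theorem~1.1) for the random self-reduction. The paper defines $G^{(v)}$ as the $n$-vertex graph where non-edges among $N(v)$ and all edges touching non-neighbors of $v$ are erased, which is exactly your "induce on $N(v)$ then pad with isolated vertices"; and, like you, the paper treats $i\in\{1,2\}$ as essentially degenerate (its footnote notes that clique counting is trivial below triangles), relying on the $f_{0,n}=\bar 0$ convention so that $Q$ needs no oracle there. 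Your explicit attention to the base cases and to padding preserving cliques of size $\ge 2$ but not $1$-cliques is slightly more careful bookkeeping than the paper's terse exposition, but it is not a different argument.
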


\begin{proof}

In a nutshell, we want $f_{i,n}$, given an (unweighted) $n$-vertex graph, to output the number of cliques of size $i$ in that graph. Note, however, that we need $f_{i,n}$ to be a Boolean function. We achieve this by taking a hardcore predicate of the clique counting function. In particular, we use the Goldreich-Levin hardcore predicate \cite{goldreich-levin}, taking an inner product of the count with a random input string. The construction follows.

Taking ${\cal G}_n$ to be the set of $n$-vertex graphs, we define $f_{i,n}: {\cal G}_n \rightarrow [1,\ldots, \binom{n}{i}]$ to be the number of cliques of size $i$ in a given graph (note that this number is at most $ \binom{n}{i}$). The upper and lower bounds on the running time of computing $f_{i,n}$ follow directly from the complexity of clique counting.

\paragraph{Downwards self-reduction.} We use the downwards self-reduction for clique counting, which uses the observation that for every vertex $v$ in a graph $G$, there is a bijection betweeen the $i$-cliques in $G$ that include the vertex $v$, and the $(i-1)$-cliques in the graph $G^{(v)}$ of $v$'s neighbors (the graph where we keep edges between $v$'s neighbors and erase all other edges). When we sum the numbers of $(i-1)$ cliques in the $n$ graphs $G^{(v)}$, each $i$-clique in $G$ is counted exactly $i$ times. Dividing the sum by $j$ gives the number of $j$-cliques in $G$:
\begin{align*}
    i \cdot f_{i,n}(G) = \sum_{v \in [n]} f_{i-1,n}(G^{(v)}),
\end{align*}
where $G^{(v)}$ is the graph induced by vertex $v$'s neighbors (we identify the set of vertices with $[n]$). The downwards self-reduction runs in time $\tilde{O}(n^3)$ and makes $n$ oracle queries.

\paragraph{Random self-reduction.} We use the random self-reduction for clique counting \cite{GoldreichR18}, the claimed bounds follow from their result:

\begin{theorem}[\cite{GoldreichR18} Theorem 1.1, worst-case to average-case reduction
for counting cliques]
\label{thm:wc2ac-cliques}
For any constant $t$,
there exists a distribution $D_{t,n}$ on $n$-vertex graphs
and a $\tilde{O}(n^2)$-time worst-case to average-case reduction
of counting $t$-cliques in $n$-vertex graphs to counting $t$-cliques
in graphs generated according to this distribution, such that
the reduction outputs the correct value with probability $2/3$
provided that the error rate {\em(of the average-case solver)}
is a constant smaller than one fourth.
Furthermore, the reduction makes $\poly(\log n)$ queries,
and the distribution $D_{t,n}$ can be sampled in $\tilde{O}(n^2)$-time
and is uniform on a set of $\exp(\tilde{\Omega}(n^2))$ graphs.
\end{theorem}

\end{proof}

We remark that recent works of Boix-Adsera, Brennan and  Bressler \cite{Boix-AdseraBB19}, Goldreich \cite{Goldreich20}, and Dalirrooyfard, Lincoln and Vassilevska Williams \cite{DalirrooyfardLW20} show other worst-case to average-case reductions for clique counting and other fine-grained complexity problems. However, the reductions in these works require sub-constant error rates, and thus it is not clear that our construction can be instantiated using these results.

\paragraph{Hardness of OI.} Putting together Theorem \ref{thm:scalable-to-OI-hardness} (which gives scalably hard functions with $\polylog$ output length) and Proposition \ref{prop:booleanize} (which shows how to go from non-Boolean to Boolean scalable hardness) together, we derive a Boolean scalably hard function collection (the degradataion in the parameters is poly-logarithmic). Plugging this collection into Theoream \ref{thm:scalable-to-OI-hardness}, we obtain $p^*$ for which OI is hard, based on the complexity of counting cliques in a graph. The resulting corollary is stated below.

\begin{corollary}
\label{cor:clique-to-OI-hardness}

For $k \in {\cal N}$, let $t_{\cliquec}(k,n)$ be an upper bound for the time complexity of counting the number of cliques of size $k$ in an $n$-vertex graph. Suppose further that there is no $o(\ell_{\cliquec}(k,n))$-time randomized algorithm for counting $k$-cliques. 

Then for every $m \in {\cal N}$, there exist:

\begin{itemize}

    \item an ensemble of predictors $\ps = \{\ps_n: Z_n \rightarrow \{0,1\}\}$ , where the domain consists of strings of length $\tilde{O}(n^2)$, and $\ps$ is computable in time $\tilde{O}(t_{\cliquec}(m,n))$,
    
    \item an ensemble of distributions $H = \{H_n \}$, which can be sampled in quasi-linear time (i.e. time $\tilde{O}(n^2)$), where each $H_n$ is over $Z_n$,
    
    \item and an ensemble $\{{\A}_n\}$ of collections of distinguishers, where each collection is of size $m$, that is computable in time 
    $\tilde{O}(n^3)$,
    
\end{itemize}
    
such that for every ensemble $\{ \pt_n: Z_n \rightarrow [0,1] \}$ of predictors that are computable in time $\left( \ell_{\cliquec}(m,n) \cdot \log^{-\omega(1)}(n) \right)$, for infinitely many values of $n$, the predictor $\pt_n$ is {\bf not}
 $(\A_n,\varepsilon(n))$-OI with respect to $\ps_n$ and the distributions $H_n$, where $\varepsilon(n) = 1/100m$.

\end{corollary}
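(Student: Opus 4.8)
The plan is to instantiate the general reduction of Section~\ref{subsec:scalable} with the clique-counting ensemble, chaining three results already established. Theorem~\ref{thm:scalable-clique} produces, from the assumed hardness of $k$-clique counting, an ensemble ${\cal F}$ of scalably hard \emph{non-Boolean} functions (collection size $m$, domain length $\tilde{O}(n^2)$, range $[n^m]$, computable in time $t_{\cliquec}(m,n)$, requiring randomized time $\ell_{\cliquec}(m,n)$, downwards self-reduction of runtime $\tilde{O}(n^3)$ and $n$ queries, random self-reduction of runtime $\tilde{O}(n^2)$ with $\poly(\log n)$ queries, sampling time $\tilde{O}(n^2)$, error rate $1/4-\eps$). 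Proposition~\ref{prop:booleanize} converts ${\cal F}$ into a \emph{Boolean} scalably hard ensemble ${\cal G}$. Finally Theorem~\ref{thm:scalable-to-OI-hardness} converts ${\cal G}$ into the predictor $\ps$, the distribution ensemble $H$, and the distinguisher ensemble $\A$ that witness hardness of \three. The only real work is the parameter bookkeeping, and the observation that makes it go through is that for constant $m$ the output length of the clique-counting functions is $y(n) = \lceil \log_2 \binom{n}{i} \rceil = O(m\log n) = \poly(\log n)$, so every $y(n)$-dependent blow-up incurred in Proposition~\ref{prop:booleanize} is merely polylogarithmic in $n$.

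Carrying this out: after Booleanization, ${\cal G}$ is computable in time $O(t_{\cliquec}(m,n))$ and requires randomized time $\Omega(\ell_{\cliquec}(m,n)/\poly(\log n))$; its downwards self-reduction runs in time $\tilde{O}(n^3)$ with $\tilde{O}(n)$ queries; its random self-reduction runs in time $\tilde{O}(n^2)$ with $\poly(\log n)$ queries and sampling time $\tilde{O}(n^2)$; and its error rate is $(1/4-\alpha)(1/4-\eps)$, which I would drive below $0.06$ by choosing the two free constants (e.g.\ $\alpha=\eps=1/20$ gives $1/25$), meeting the error-rate hypothesis of Theorem~\ref{thm:scalable-to-OI-hardness}. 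Feeding ${\cal G}$ into that theorem and reading off the conclusions with these parameters gives: $\ps$ over strings of length $\log m + \tilde{O}(n^2) + O(\log n) = \tilde{O}(n^2)$ computable in time $\tilde{O}(t_{\cliquec}(m,n))$; $H$ sampleable in time $\tilde{O}(n^2)$; the $m$ distinguishers computable in time $O\big(t'_Q(n) + t'_R(n)\,q'_Q(n)\log q'_Q(n)\big) = \tilde{O}(n^3)$ (making $\tilde{O}(n)$ oracle calls, by inspection of Figure~\ref{alg:distinguisher}); distinguishing advantage $\eps(n) = 1/100m$; and hardness threshold $o\big(\ell_{\cliquec}(m,n)/\poly(\log n) - \tilde{O}(n^2)\big)$. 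In the nontrivial regime $m \ge 3$ the clique-counting lower bound dominates the $\tilde{O}(n^2)$ subtractive term, so the threshold simplifies to $o(\ell_{\cliquec}(m,n)/\poly(\log n))$; since $\ell_{\cliquec}(m,n)\cdot\log^{-\omega(1)}(n) = o(\ell_{\cliquec}(m,n)/\poly(\log n))$, every predictor computable in this time lies below the threshold, hence for infinitely many $n$ fails $(\A_n,1/100m)$-\three against $\ps_n$ and $H_n$---exactly the corollary.

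The proof has no genuine obstacle; the one place to be careful is the composition of the slack factors. I would want to verify explicitly that $y(n)\log y(n)$ (the loss in ${\cal G}$'s hardness lower bound) together with $q'_R(n)$ (the division by query complexity in the threshold) combine into a single $\poly(\log n)$ factor that is strictly absorbed by the $\log^{-\omega(1)}(n)$ gap in the corollary's time bound, and that the additive $t'_R(n) = \tilde{O}(n^2)$ overhead of the random self-reduction is dominated by $\ell_{\cliquec}(m,n)$ under any reasonable instantiation of the clique-counting assumption (e.g.\ $\ell_{\cliquec}(m,n) = n^{\Omega(m)}$). Both checks are routine, but they are worth stating since the precise phrasing $\ell_{\cliquec}(m,n)\cdot\log^{-\omega(1)}(n)$ in the corollary is chosen exactly to accommodate these two losses.
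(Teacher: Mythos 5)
Your proposal is correct and follows exactly the route of the paper's (very terse) proof: instantiate the clique-counting scalably hard ensemble from Theorem~\ref{thm:scalable-clique}, Booleanize via Proposition~\ref{prop:booleanize}, and feed the result into Theorem~\ref{thm:scalable-to-OI-hardness}. Your parameter bookkeeping is accurate, and the two checks you flag at the end (that the $\poly(\log n)$ losses are absorbed by the $\log^{-\omega(1)}(n)$ slack, and that the additive $\tilde{O}(n^2)$ term is dominated by $\ell_{\cliquec}(m,n)$) are indeed the only places where care is needed; both go through as you describe.
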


\paragraph{The complexity of clique counting.} The complexity of clique counting, and of clique detection (which is no harder), has been widely studied and related to other computational problems. Chen {\em et al.}  \cite{ChenCFHJKX05} and Chen {\em et al.} \cite{ChenHKX06} show that detecting $k$-cliques in $n^{o(k)}$ time would refute the Exponential Time Hypothesis (ETH) \cite{ImpagliazzoP01}, and that counting $k$-cliques in $n^{o{(k)}}$-time would refute $\#$ETH. The clique detection problem is complete for the parameterized complexity class $W[1]$ (see Downey and Flum \cite{DowneyF99}), and the clique counting problem is complete for $\#W[1]$ (Flum and Grohe \cite{FlumG04}). It has been conjectured that the fastest known algorithms for clique counting and detection are tight: in particular, that there is no $n^{\omega k - o(1)}$-time algorithm for detecting $3k$-cliques (where $\omega$ is the matrix multiplication exponent). Abboud, Backurs and Vassilevska Williams \cite{AbboudBW18} showed several implications of this conjecture. 
For constant $k$, Abboud, Lewi and Williams \cite{AbboudLW14} show that the $k$-SUM conjecture implies a $n^{\lceil k/2 \rceil - o(1)}$ lower bound for $k$-clique detection.
Abboud \cite{Abboud19} shows that detecting $k$-cliques in $O(n^{\eps k/log k})$-time would refute the Set-Cover Conjecture. 

Under the conjecture (see above) that $3k-cliques$ cannot be counted in randomized time $n^{\omega k - o(1)}$, we obtain (for the same $k$) a distribution $\Dnat$ of nature that can be sampled in time $O(n^{k \omega})$, and a collection $\A$ of $3k$ distinguishers, which all run in $O(n^3)$ time, so that there is no $(\A,1/300k)$-\three predictor whose running time is smaller than $n^{\omega k}$ by more than a polylogarithmic factor.

\subsection{Scalable Hardness from $\BPP \neq \PSPACE$}
\label{subsec:PSPACE-OI}

In this section we construct an ensemble of functions with scalable hardness that are $\PSPACE$-hard to compute. This construction is based on the $\PSPACE$ complete problem of Trevisan and Vadhan \cite{TrevisanV07} (which is both random self-reducible and downwards self-reducible). Under the assumption that $\BPP \neq \PSPACE$, we derive a hardness result for OI. We note that in this result (unlike our hardness results that are based on fine-grained complexity), Natures predictor $\ps$ is not poly-time compuable (alternatively, Nature's distribution $\Dnat$ cannot be sampled in polynomial time). We also note the hard-to-obtain indistinguishability advantage is polynomially small (rather than a constant in the aforementioned results).

\begin{theorem}[Scalable hardness from $\BPP \neq \PSPACE$ \cite{TrevisanV07}, Lemma 4.1]
\label{thm:scalable-PSPACE}

Suppose that $\BPP \neq \PSPACE$. Then there exists an ensemble ${\cal F}$ of functions with scalable hardness (see Definition \ref{def:scalable-funcs}), where:
\begin{itemize}
    \item each collection in the ensemble is of size $m(n) = \poly(n)$, the domain and the range are of size $\exp(\poly(n))$,
    \item the ensemble can be computed in $\exp(\poly(n))$ time and requires super-polynomial time in $n$,
    \item the runtime and query complexity of ${\cal F}$'s downwards self-reduction are $\poly(n)$,
    \item the runtime and query complexity of ${\cal F}$'s random self-reduction are $\poly(n)$. ${\cal F}$'s sampling time is $\poly(n)$ and the error rate is is $1/4 - \eps$ for an arbitrarily small constant $\eps > 0$.
\end{itemize}
\end{theorem}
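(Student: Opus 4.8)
The statement is, as noted, a reformatting of Lemma~4.1 of \cite{TrevisanV07}, so the plan is to extract from their self-reducible $\PSPACE$-complete problem an ensemble in the ``collection'' format of Definition~\ref{def:scalable-funcs} and check that each parameter lands as claimed. First I would fix a $\PSPACE$-complete language $L$ and, for each input length $n$, take the arithmetization of a $\poly(n)$-space machine deciding $L$ on length-$n$ inputs (equivalently, of an associated quantified Boolean formula). Interleaving the usual quantifier-elimination operators with degree-reduction operators produces a sequence of $m(n) = \poly(n)$ multivariate polynomials $\hat f_{1,n}, \dots, \hat f_{m(n),n}$ over a finite field $F_n$ of size a sufficiently large polynomial in $n$, each in $\poly(n)$ variables and of total degree $\poly(n)$, such that (i) a fixed evaluation of $\hat f_{m(n),n}$ decides membership in $L$, and (ii) $\hat f_{i,n}$ is obtained by applying a single operator to $\hat f_{i-1,n}$. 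I would then set $f_{i,n} = \hat f_{i,n}$ for $i \ge 1$ and $f_{0,n} = \bar{0}$; the domain $X_n = F_n^{\poly(n)}$ and range $Y_n = F_n$ are both of size at most $2^{\poly(n)}$, as the statement requires.

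\textbf{Checking the three conditions.} For the complexity bounds, each $f_{i,n}$ is $\PSPACE$-computable, hence computable in $\exp(\poly(n))$ time; conversely an algorithm that computes the whole collection in particular evaluates $\hat f_{m(n),n}$ at the prescribed point and so decides $L$, so under $\BPP \neq \PSPACE$ (together with the $\PSPACE$-completeness of $L$) no randomized polynomial-time algorithm can compute the collection, giving a super-polynomial lower bound $\ell_f(n)$. For the downwards self-reduction, $Q$ is the single operator carrying $\hat f_{i,n}$ to $\hat f_{i-1,n}$: it evaluates the previous polynomial at $\poly(n)$ points to eliminate one ``layer,'' so $t_Q(n), q_Q(n) = \poly(n)$, and the base level $f_{1,n}$ is the arithmetization of a $\poly(n)$-time predicate, computed directly with the useless $\bar{0}$ oracle. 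For the random self-reduction, since each $f_{i,n}$ has degree $\poly(n)$ over $F_n$, I would take $D_{i,n}$ uniform over $X_n = F_n^{\poly(n)}$ (sampleable in $\poly(n)$ time) and let $R$ be the line-based self-corrector: pick a uniformly random line through the input point, query the (corrupted) oracle at $\poly(n)$ points of the line---each individually uniform over $X_n$---and run Berlekamp--Welch decoding to recover the univariate restriction and hence the value at the input point. A concentration bound on the number of corrupted evaluations shows that as long as the oracle's error rate is at most $1/4 - \eps$ the decoder succeeds, so $R$ runs in $\poly(n)$ time with $\poly(n)$ queries and error rate $1/4 - \eps$.

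The step I expect to be the main obstacle---and which is exactly what the cited construction supplies---is arranging the arithmetization so that the chain $\hat f_{1,n}, \dots, \hat f_{m(n),n}$ has precisely the ``one cheap step back'' structure Definition~\ref{def:scalable-funcs} demands: the naive arithmetization causes degrees to grow exponentially, so one must insert degree-reduction operators (one per variable per quantifier, hence still only $\poly(n)$ levels) to keep every intermediate polynomial of polynomial degree; this is simultaneously what keeps the downward self-reduction cheap and what makes the degree-$\poly(n)$ line-based random self-reduction applicable. A secondary point needing care is the decoding radius: naive Lagrange interpolation tolerates only error rate $O(1/\deg)$, so to reach the constant $1/4-\eps$ one must use a unique-decoding algorithm on sufficiently many evaluation points and argue, via concentration over the random line (whose sample points are pairwise independent and individually uniform), that the number of errors stays inside the unique-decoding radius; taking $|F_n|$ a large enough polynomial in $n$ makes this work.
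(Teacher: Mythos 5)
Your construction is essentially the right one and matches the source the paper cites: arithmetize a $\PSPACE$-complete language via quantifier elimination interleaved with degree reduction to produce a chain of $\poly(n)$ low-degree polynomials, with the one-step-back structure giving the downwards self-reduction. You also correctly flag the degree-control issue as the crux. The paper itself does not reprove this; it is stated as a citation to Trevisan--Vadhan Lemma~4.1, with one appended remark that is exactly where your argument goes wrong.

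The flaw is in your random self-reduction. You assert that the sample points $x + t_1 y, \dots, x + t_q y$ on a uniformly random line through $x$ are pairwise independent. They are not: all of them are determined by the single random offset $y$, so observing one point (for $t \neq 0$) fixes the rest. Each point is individually uniform, which by Markov gives the corrector a failure probability of roughly $q\mu$ against an oracle with error rate $\mu$ --- this is why Trevisan--Vadhan, using the plain line-based corrector, can only tolerate $\mu = 1/\poly(n)$, and the paper's remark explicitly says so. To get a constant error rate one needs genuine pairwise (or higher) independence among the sample points, which the line does not provide; the standard fix is to sample along a random low-degree curve (degree~$\ge 2$ through $x$, so that any two non-base points are pairwise independent), or to invoke list-decoding as in Sudan--Trevisan--Vadhan, which is precisely the reference the paper points to for the $1/4-\eps$ (and indeed near-$1$) error rate. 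Replacing your line-based corrector with the curve-based one (or simply citing \cite{SudanTV01} as the paper does) repairs the argument; as written, the concentration step you invoke does not go through.
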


We note that Trevisan and Vadhan \cite{TrevisanV07} only claimed a random self-reduction with error rate $1/\poly(n)$. The error rate claimed above (and, indeed, error rates that are very close to 1) is implied by the worst-case to average-case (or rare-case) reduction of Sudan, Trevisan and Vadhan \cite{SudanTV01}.

Putting together Theorem \ref{thm:scalable-PSPACE} and Proposition \ref{prop:booleanize}, we derive a Boolean scalably hard function collection (the degradataion in the parameters is polynomial). Plugging this collection into Theoream \ref{thm:scalable-to-OI-hardness}, we obtain $p^*$ for which OI is hard, assuming $\BPP \neq \PSPACE$. The resulting corollary is stated below.

\begin{corollary}
\label{cor:PSPACE-to-OI-hardness}

Suppose that $\BPP \neq \PSPACE$. Then there exist:

\begin{itemize}

    \item an ensemble of predictors $\ps = \{\ps_n: Z_n \rightarrow \{0,1\}\}$ , where the domain consists of strings of length $\poly(n)$, and $\ps$ is computable in time $\exp(\poly(n))$,
    
    \item an ensemble of distributions $H = \{H_n \}$, which can be sampled in $\poly(n)$ time, where each $H_n$ is over $Z_n$,
    
    \item and an ensemble $\{{\A}_n\}$ of collections of distinguishers, where each collection is of size $m(n) = \poly(n)$, that is computable in time 
    $\poly(n)$,
    
\end{itemize}
    
such that for every ensemble $\{ \pt_n: Z_n \rightarrow [0,1] \}$ of predictors that are computable in time $\poly(n)$, for infinitely many values of $n$, the predictor $\pt_n$ is {\bf not}
 $(\A_n,\varepsilon(n))$-OI with respect to $\ps_n$ and the distributions $H_n$, where $\varepsilon(n) = 1/m(n)$.

\end{corollary}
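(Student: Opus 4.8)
The plan is to chain the three results already established: Theorem~\ref{thm:scalable-PSPACE} supplies a scalably hard (non-Boolean) function ensemble under $\BPP\neq\PSPACE$, Proposition~\ref{prop:booleanize} turns it into a Boolean one, and Theorem~\ref{thm:scalable-to-OI-hardness} converts Boolean scalable hardness into hardness of \three. Essentially the entire content of the proof is parameter bookkeeping: checking that the ``infrastructure'' quantities (the two self-reductions' runtimes and query complexities, the sampling time, and hence the distinguisher complexity) all stay $\poly(n)$ through both reductions, while the hardness lower bound stays super-polynomial.

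First I would invoke Theorem~\ref{thm:scalable-PSPACE} to obtain an ensemble $\mathcal{F}$ of collections of functions with scalable hardness: collection size $m(n)=\poly(n)$, domain and range of size $\exp(\poly(n))$ (so the output length $y(n)\triangleq\log|Y_n|=\poly(n)$), computable in time $t_f(n)=\exp(\poly(n))$, requiring randomized time $\ell_f(n)=n^{\omega(1)}$, a downwards self-reduction of runtime $t_Q(n)=\poly(n)$ and query complexity $q_Q(n)=\poly(n)$, a random self-reduction of runtime $t_R(n)=\poly(n)$ and query complexity $q_R(n)=\poly(n)$, sampling time $t_D(n)=\poly(n)$, and error rate $1/4-\eps$ for an arbitrarily small constant. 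Then I would apply Proposition~\ref{prop:booleanize}: because $y(n)=\poly(n)$, every transformation in the proposition multiplies a $\poly(n)$ quantity by a $\poly(n)$ factor, so the Boolean ensemble $\mathcal{G}$ inherits a $\poly(n)$-time/$\poly(n)$-query downwards self-reduction ($t'_Q,q'_Q$), a $\poly(n)$-time/$\poly(n)$-query random self-reduction ($t'_R,q'_R$), a $\poly(n)$ sampling time ($t'_D$), and is computable in time $t_g(n)=O(t_f(n))=\exp(\poly(n))$; meanwhile $\ell_g(n)=\Omega(\ell_f(n)/(y(n)\log y(n)))$ remains super-polynomial, and the booleanized error rate is a fixed constant (close to $1/16$), which satisfies the error-rate requirement of Theorem~\ref{thm:scalable-to-OI-hardness}, if necessary after trivially weakening the random self-reduction's tolerance down to the required constant.

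Finally I would feed $\mathcal{G}$ into Theorem~\ref{thm:scalable-to-OI-hardness}. This produces: the predictor ensemble $\ps=\{\ps_n:Z_n\to\{0,1\}\}$ with $Z_n\subseteq[m(n)]\times(X_n\times\{0,1\}^{y(n)})$, whose elements have bit-length $O(\log m(n))+\log|X_n|+y(n)=\poly(n)$, and $\ps$ computable in time $O(t_g(n))=\exp(\poly(n))$; a distribution ensemble $H$ sampleable in time $O(t'_D(n))=\poly(n)$; and a distinguisher ensemble $\{\A_n\}$ of collections of size $m(n)=\poly(n)$, computable in time $O\!\left(t'_Q(n)+t'_R(n)\cdot q'_Q(n)\cdot\log q'_Q(n)\right)=\poly(n)$. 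The theorem's conclusion is that every predictor ensemble computable in time $o\!\left(\ell_g(n)/q'_R(n)-t'_R(n)\right)$ fails $(\A_n,\eps(n))$-OI for infinitely many $n$, where $\eps(n)=1/(100\,m(n))$. Since $\ell_g(n)$ is super-polynomial while $q'_R(n)$ and $t'_R(n)$ are $\poly(n)$, the quantity $\ell_g(n)/q'_R(n)-t'_R(n)$ is super-polynomial, so every polynomial is $o(\cdot)$ of it; hence the conclusion applies to all $\poly(n)$-time predictors. Absorbing the constant $100$ into $m(n)$ gives the stated $\eps(n)=1/m(n)$.

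There is no genuinely hard step; the only thing to watch is the polynomial-versus-super-polynomial accounting above, since it is exactly what makes the $o(\ell_g/q'_R-t'_R)$ bound of Theorem~\ref{thm:scalable-to-OI-hardness} encompass every efficient predictor. It is worth recording the two qualitative differences from the clique-based Corollary~\ref{cor:clique-to-OI-hardness} that this accounting makes visible: here $\ps$ (equivalently, \Nature's distribution) is only $\exp(\poly(n))$-time sampleable rather than polynomial-time, and the distinguishing advantage for which \three is infeasible is $1/\poly(n)$ rather than a fixed constant.
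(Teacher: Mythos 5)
Your proposal is correct and follows exactly the same route the paper takes: Theorem~\ref{thm:scalable-PSPACE} gives the $\PSPACE$-hard scalably-hard (non-Boolean) ensemble, Proposition~\ref{prop:booleanize} booleanizes it with only polynomial degradation, and Theorem~\ref{thm:scalable-to-OI-hardness} then yields the OI hardness with the stated parameters. The paper's proof is a two-sentence gloss; your careful tracking of the $\poly(n)$ versus super-polynomial quantities through the two reductions just fills in the bookkeeping that the paper leaves implicit.
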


\paragraph{Acknowledgments.}  MPK thanks Dylan McKay for suggesting useful references on the circuit evaluation problem. GNR thanks Amir Abboud for helpful discussions about the fine-grained complexity of clique counting.

\bibliographystyle{alpha}
\bibliography{refs}

\end{document}